\newtheorem{theorem}{Theorem}
\newtheorem{lemma}{Lemma}
\newtheorem{corollary}{Corollary}
\newtheorem{definition}{Definition}
\newtheorem{remark}{Remark}
\newcommand{\pare}[1]{\ensuremath{\left( #1  \right)}}
\long\def\comment#1{}
\newcommand{\frakR}{\mathfrak{R}}
\newcommand{\norm}[1]{\left\| #1 \right\|}
\newcommand{\sgn}{\mathsf{sgn}}
\newcommand{\inprod}[2]{\ensuremath{\langle #1 , \, #2 \rangle}}
\newcommand{\E}{\ensuremath{{\mathbb{E}}}}
\newcommand{\EE}[2][]{\ensuremath{{\mathbb{E}_{#1} \left[ #2 \right]}}}
\newcommand{\Prob}{\ensuremath{{\mathbb{P}}}}
\DeclareMathOperator{\sign}{sign}
\newcommand{\real}{\ensuremath{\mathbb{R}}}
\newcommand{\R}{\mathbb{R}}
\newcommand{\cA}{\mathcal{A}}
\newcommand{\cC}{\mathcal{C}}  
\newcommand{\cB}{\mathcal{B}}  
\newcommand{\cD}{\mathcal{D}}  
\newcommand{\cH}{\mathcal{H}}  
\newcommand{\cO}{\mathcal{O}}
\newcommand{\cR}{\mathcal{R}}  
\newcommand{\cS}{\mathcal{S}}
\newcommand{\cT}{\mathcal{T}}
\newcommand{\cX}{\mathcal{X}}  
\newcommand{\cY}{\mathcal{Y}}  
\DeclareMathOperator*{\argmax}{arg\,max}
\DeclareMathOperator*{\argmin}{arg\,min}
\newcommand{\eqdef}{\vcentcolon=}
\newcommand{\bdelta}{\boldsymbol{\delta}}
\newcommand{\bpsi}{\boldsymbol{\psi}}
\newcommand{\bzeta}{\boldsymbol{\zeta}}
\newcommand{\ba}{\mathbf{a}}
\newcommand{\bb}{\mathbf{b}}
\newcommand{\be}{\mathbf{e}}
\newcommand{\bx}{\mathbf{x}}
\newcommand{\bv}{\mathbf{v}}
\newcommand{\bw}{\mathbf{w}}
\newcommand{\by}{\mathbf{y}}
\newcommand{\bsigma}{\boldsymbol{\sigma}}
\newcommand{\mA}{{\bf A}}
\newcommand{\mM}{{\bf M}}
\newcommand{\mS}{{\bf S}}
\newcommand{\mU}{{\bf U}}
\newcommand{\mW}{{\bf W}}
\newcommand{\mX}{{\bf X}}
\newcommand{\mY}{{\bf Y}}
\newcommand{\mZ}{{\bf Z}}
\newcommand{\mSigma}{{\bf \Sigma}}
\newcommand{\ie}{{\em i.e.~}}
 \newcommand{\iid}{{\em i.i.d.~}}
\newcommand{\Var}{\mathrm{Var}}
\date{}
\definecolor{antiquewhite}{rgb}{0.98, 0.92, 0.84} 
\definecolor{blizzardblue}{rgb}{0.67, 0.9, 0.93}
\newcommand*{\rom}[1]{\expandafter\@slowromancap\romannumeral #1@}
\title{On the Hardness of Robustness Transfer: A Perspective from Rademacher Complexity over Symmetric Difference Hypothesis Space}
\author{Yuyang Deng\thanks{Equal contribution. Corresponding authors.} ~\thanks{Work done during an internship at Sony AI.} ~\& Mehrdad Mahdavi \\
The Pennsylvania State University\\
\texttt{\{yzd82,mzm616\}@psu.edu} \\
 \and 
Junyuan Hong \\
Michigan State University \\
\texttt{hongju12@msu.edu}\\
  \and 
Nidham Gazagnadou$^*$ \& Lingjuan Lyu\\
Sony AI \\
\texttt{\{nidham.gazagnadou,lingjuan.lv\}@sony.com}
}
\begin{document}
\maketitle

\begin{abstract}
Recent studies demonstrated that the adversarially robust learning under $\ell_\infty$ attack is harder to generalize to different domains than standard domain adaptation. 
How to transfer robustness across different domains has been a key question in domain adaptation field.
To investigate the fundamental difficulty behind adversarially robust domain adaptation (or robustness transfer), we propose to analyze a key complexity measure that controls the cross-domain generalization: the adversarial Rademacher complexity over {\em symmetric difference hypothesis space} $\mathcal{H} \Delta \mathcal{H}$. 
For linear models, we show that adversarial version of this complexity is always greater than the non-adversarial one, which reveals the intrinsic hardness of adversarially robust domain adaptation. 
We also establish upper bounds on this complexity measure.
Then we extend them to the ReLU neural network class by upper bounding the adversarial Rademacher complexity in the binary classification setting. 
Finally, even though the robust domain adaptation is provably harder, we do find positive relation between robust learning and standard domain adaptation.  We explain \emph{how adversarial training helps domain adaptation in terms of standard risk}.
We believe our results initiate the study of the generalization theory of adversarially robust domain adaptation, and could shed lights on distributed adversarially robust learning from heterogeneous sources, e.g., federated learning scenario.
\end{abstract}


\section{Introduction}


Domain adaptation is a key learning scenario where one tries to generalize the model learnt on a source domain to a target domain. How to predict target accuracy using source accuracy has been a longstanding research topic in both theory~\cite{ben2006analysis,quinonero2008dataset,  ben2010theory,mansour2009domain,cortes2015adaptation,zhang2019bridging,zhang2020localized} and application community~\cite{long2015learning,saito2018maximum,you2019universal}. From a theoretical perspective,  this problem can be attacked by establishing bounds on the generalization  of the source-domain-learnt model on target domain, using different complexity measures including  the VC-dimension~\cite{ben2006analysis,ben2010theory,zhang2020localized} and Rademacher complexity~\cite{mansour2009domain,zhang2019bridging}. In particular, the latter works~\cite{mansour2009domain,zhang2019bridging} study a loss class defined over {\em symmetric difference hypothesis space} ($\cH \Delta \cH$ class for short):
\begin{definition}[Loss class over symmetric difference hypothesis space]
Given a symmetric loss function $\ell(\cdot,\cdot): \cY \times \cY \mapsto \R$ and a hypothesis class $\cH=\{h_\bw: \mathcal{X} \mapsto \mathcal{Y}\}$, the loss class over symmetric difference hypothesis space is defined as the set:
    \begin{align}
        \ell \circ \cH \Delta \cH := \left\{ \ell(h_{\bw}(\bx), h_{\bw'}(\bx)) :  h_{\bw},h_{\bw'} \in \cH   \right\}.\label{eq:loss over HDH}
    \end{align}
\end{definition}
This loss class measures the discrepancy over any pair of two hypotheses in $\cH$, and in~\cite{mansour2009domain,zhang2019bridging}, they further define the Rademacher complexity over $\ell \circ \cH \Delta \cH$ to bound the gap between source and target generalization risks.
\begin{definition}[~\cite{mansour2009domain}]
    \label{def:rademacher_complexity_over_h_delta_h_class}
    Let hypothesis space $\cH$ be a set of real (vector)-valued  functions defined over input space $\mathcal{X}$ and label space $\mathcal{Y}$: $\cH=\{h_\bw: \mathcal{X} \mapsto \mathcal{Y}\}$ each parameterized by $\bw \in \mathcal{W} \subseteq \mathbb{R}^d$, and $\ell: \mathcal{Y}\times \mathcal{Y} \mapsto \mathbb{R}_{+}$ be the loss function.
    Given a dataset $\hat\cD = \{\bx_1,...,\bx_n\}$  sampled i.i.d. from distribution $\cD$ defined over $\mathcal{X}$, the empirical Rademacher complexity of $\cH \Delta \cH$ over this dataset is defined as follows: 
    \begin{align}
        \label{eq:rademacher_complexity_over_h_delta_h_class}
        &{\frakR}_{\hat{\cD}} ({\ell} \circ \cH \Delta \cH) = \mathbb{E}_{\sigma} \left[\sup_{h_\mathbf{w},h_{\mathbf{w}'}\in \mathcal{H}} \frac{1}{n}\sum_{i=1}^{n} \sigma_i   \ell(h_\mathbf{w} (\mathbf{x}_i ),h_{\mathbf{w}'}(\mathbf{x}_i )) \right] \enspace,
    \end{align} 
    where $\sigma_1, \ldots, \sigma_n$ are \iid Rademacher random variables with $\Prob \{ \sigma_i = 1 \} = \Prob \{ \sigma_i = -1 \} = \frac{1}{2}$.
\end{definition}
Intuitively, above quantity measures how well the loss vector realized by two hypotheses within $\mathcal{H}$  correlates with random vectors. 
The better correlation will imply a richer hypothesis class. However, unlike the classical Rademacher complexity whose loss vector is computed between predictions made by a hypothesis  and true labels, Eq.~\eqref{eq:rademacher_complexity_over_h_delta_h_class} is defined merely over predictions made by two hypotheses. 
Authors of~\cite{mansour2009domain,zhang2019bridging} have shown that this complexity measure controls the domain adaptation generalization bound. Unfortunately, none of those works give the precise analysis of $ {\frakR}_{\hat\cD} ({\ell} \circ \cH \Delta \cH)$. 
To our best knowledge, \cite{kuroki2019unsupervised} is the only prior work to analyze $ {\frakR}_{\hat\cD} ({\ell} \circ \cH \Delta \cH)$ on linear classifier class, but their analysis is not tight. 
Due to the importance of such complexity measure, we are interested in characterizing how large this complexity measure can be in terms of model dimension and data diversity, even on some toy model, e.g., linear model. Hence, the first question we investigate in this paper is: \emph{for linear models, what quantities control the Rademacher complexity over $\cH \Delta \cH$ function class?}

Meanwhile, in modern machine learning, practitioners are not only interested in transferring standard model accuracy to another domain, but also in transferring robustness. 
Consider adversarially robust risk over domain $\cD$:  
$$ \widetilde{\cR}^{label}_{\cD}(h_\mathbf{w},y_\cD) = \mathbb{E}_{\mathbf{x}\sim \cD} \left[\max_{\|\boldsymbol{\delta}\|_\infty\leq \epsilon}\ell (h_\mathbf{w}(\mathbf{x}+\boldsymbol{\delta}), y_\cD(\mathbf{x}))\right],$$
where $y_{\cD}(\cdot)$ is the labeling function. 
In the \textbf{adversarially robust domain adaptation} problem, we are interested in the \textit{robust risk} when the same model $h_\bw$ is evaluated on a new domain $\cD'$. Unfortunately, as shown  empirically ~\cite{shafahi2019adversarially,hong2021federated,fan2021does}, robust model learnt on source domain will lose its robustness catastrophically on a different domain. 
That is, the gap between robust risks on the old domain and new domains can be dramatically huge, compared to the standard risk.  This observation naturally leads to the question \emph{Why is the robust risk harder to adapt to different domains?}, which we aim to examine in this paper. To answer this question, inspired by the Rademacher complexity over $\cH \Delta \cH$ function class, we properly extend this complexity measure to the adversarial learning setting, and  propose the {\em adversarial Rademacher complexity over} the $\cH \Delta \cH$ {\em class}. 
We show that, the adversarial version complexity is \textbf{always greater than} its non-adversarial counterpart, similar to the results proven in~\cite{yin2019rademacher} in the single domain setting.
Relying on this new complexity measure, we explained by robust domain adaptation is harder than standard one. 

Even though we show that robust risk transfer is provably hard, we also find that robust training can have benefits in terms of standard domain adaptation. As observed in recent studies~\cite{salman2020adversarially,deng2021adversarial}, the model trained adversarially on the source domain, usually entails better standard accuracy on target domain, compared to the normally trained model. In this paper, We show that given large enough adversarial budget, \textbf{small source adversarially robust risk will almost guarantee small target domain standard risk}, with the residual error controlled by $\epsilon$. This connection between source robust risk and target standard risk theoretically supports the advantage of performing robust training in domain adaptation tasks.

Our contributions are summarized as follows:

\begin{itemize}
    \item We study the Rademacher complexity over $\cH \Delta \cH$ class, and propose the adversarial variant of it, which is a {new complexity measure} towards better understanding the domain adaptation in adversarial learning. In both linear classification and regression settings, we first show that adversarial Rademacher complexity over $\cH \Delta \cH$ class is greater than its non-adversarial counterpart. 
    We also show that adversarial complexity is bounded by its non-adversarial counterpart plus residual terms polynomially depending on data dimension, model norm and adversarial budget.
    \item We generalize our results to ReLU neural networks, where we derive an upper bound of adversarial $\cH \Delta \cH$ Rademacher complexity of a 2-layer ReLU neural network for binary classification.
    \item Although we proved that robust risk transfer is hard, we discover a positive result when we evaluate target domain standard risk of the robust trained model on source domain. We show that, small source robust risk will imply a small target standard risk with adversarial budget dependent variance.  This establish the connection between robust learning and standard domain adaptation, which helps explain the widely-observed phenomena that adversarially trained models can have good generalization performance on different domains. 
    \item We support our theoretical analysis by providing experiments illustrating how adversarial training can help domain adaptation, especially with $\ell_1$ regularization.
    We also highlight numerically the difficulty of transferring adversarial robustness across domains.
\end{itemize}

\section{Related Work} 
Here, we briefly discuss some relevant prior works. But before we would like to highlight the key differences between \textbf{robust learning}, standard \textbf{domain adaptation}, and \textbf{adversarially robust domain adaptation}.  In  adversarially robust learning, we are interested in the gap between population robust risk and empirical robust risk, on the same domain; while in standard domain adaptation, we consider the gap between the (standard) risk on the target domain and the risk on the source domain on which the model is trained on. In adversarially robust domain adaptation, we examine the relation between adversarially robust risks on target and source domains.

\paragraph{Discrepancy Based Domain Adaptation Theory}   A significant category of the domain adaptation study is discrepancy based generalization analysis.
\cite{ben2006analysis} borrowed the $\mathcal{A}$-discrepancy from seminal work~\cite{kifer2004detecting}, and gave the target domain generalization in terms of source domain error and this discrepancy measure. Afterwards, \cite{ben2010theory} proposed $\cH\Delta\cH$ discrepancy, which is easier to estimate from unlabeled data, and also proved VC-dimenson-based generalization bound. \cite{mansour2009domain} also consider $\cH\Delta\cH$ discrepancy, while their analysis depends on Rademacher complexity over $\cH\Delta\cH$ function class. They claim that in some situations, their learning bound is superior to~\cite{ben2010theory}'s bound. \cite{mohri2012new} proposed $\mathcal{Y}$-discrepancy which is a labeling function dependent measure, but hence it cannot be estimated from unlabeled data. \cite{kuroki2019unsupervised} advocated a source-guided discrepancy and showed that it is a tighter discrepancy measure than $\cH\Delta\cH$ discrepancy. \cite{zhang2020localized} proposed a localized discrepancy measure, where they argued that when defining a discrepancy measure, considering the whole hypothesis class may be too pessimistic, so they chose to incorporate risk level as well into the discrepancy definition.  

\paragraph{Generalization of Adversarially Robust Learning}
To characterize the generalization of adversarially robust learning, a line of researches~\cite{khim2018adversarial,yin2019rademacher,awasthi2020adversarial} are conducted via Rademacher complexity point of view. \cite{khim2018adversarial} is among the first to examine the adversarial Rademacher complexity under $\ell_\infty$ attack, and as a concurrent work, \cite{yin2019rademacher} characterized the upper and lower bound of it, and claim that adversarially robust is at least as hard as standard ERM learning. \cite{awasthi2020adversarial} further extended \cite{yin2019rademacher}'s results to adversary set under arbitrary norm constraint, and analyze the complexity of neural network as well. Another category of generalization studies of robust learning is based on PAC learning framework. \cite{cullina2018pac} proved that empirical robust risk minimization is a successful robust PAC learner. \cite{montasser2019vc} show that the function classes with finite VC dimension are adversarially robustly PAC learnable, with the sample complexity related to dual VC dimension, which could be exponentially larger than vanilla VC dimension. \cite{diochnos2019lower} proved the lower sample complexity bound for robust PAC learning under hybrid attack. They show that a sample complexity exponentially in the adversary budget is unavoidable. \cite{gourdeau2021hardness} also studied the hardness of robust classification under PAC learning framework, and proved some impossibility results regarding the adversary budget.
\cite{diochnos2018adversarial} investigated different adversarial risk definitions, and proved negative results on the uniform distribution.
\cite{pydi2022many} also analyzed the existing adversarial risk notions, and discovered the difference and connections among them.

\paragraph{Robustness Transfer}
Robustness transfer is a newly initiated research area. 
\cite{shafahi2019adversarially} discovered that by fine-tuning the network on the target domain, robustness can be inherited by the new model. \cite{hong2021federated} considered the federated learning scenario, where they wish to transfer robust models from computationally rich users to users that cannot afford adversarial training. They proposed a batch-normalization based method to share robustness among different clients. \cite{fan2021does} studied when the robust features learned in contrastive learning can be transferred to different tasks. Another orthogonal line to transfer robustness is continually fine-tuning models on new samples in an unsupervised manner~\cite{hong2023mecta}, which however cannot defend against adversarial attacks.

 \section{Problem Setup}
We adapt the following notations throughout this paper. 
We use lower case bold letter to denote vector, e.g., $\bw$, and use upper case bold letter to denote matrix, e.g., $\mM$. 
We use $\|\bw\|_p$ and $\|\mM\|_p$ to denote $\ell_p$-norm of vector $\bw$ and matrix $\mM$ respectively.
We define the $(p,q)$-group norm as the $\norm{\mM}_{p, q} \eqdef \|(\norm{\mathbf{m}_1}_p, \ldots, \norm{\mathbf{m}_n}_p)^\top \|_q$ where the $\mathbf{m}_i$s are the columns of $\mM$. 

 We use $\cD: \mathcal{X} \mapsto \mathbb{R}$ to denote a data distribution (domain) defined over instance space $\mathcal{X}$, and $\hat{\cD}$ be the empirical distribution with $n_{\cD}$ samples drawn i.i.d. from $\cD$. 
 We let $\cH:=\{h_\bw: \mathcal{X}\mapsto\mathcal{Y}\}$ be the hypothesis space, and vector $\bw \in \mathcal{W} \subseteq \mathbb{R}^d$ denotes the model parametrization of $h_\bw$. 
 Given a loss function $\ell: \mathcal{Y}\times\mathcal{Y} \mapsto \mathbb{R} $, and a data distribution $\cD$, we let 
 \begin{equation*}
     \cR_{\cD}(h_\mathbf{w},h_{\mathbf{w}'}) \eqdef \mathbb{E}_{\mathbf{x}\sim \cD} [\ell (h_\mathbf{w}(\bx),h_{\mathbf{w}'}(\bx))]
 \end{equation*}
 be the risk of the \emph{disagreement between models} $h_\mathbf{w}$ and $h_{\mathbf{w}'}$ on domain $\cD$. 
 Specially, when the second argument of $\cR_{\cD}(\cdot,\cdot)$ is the labeling function over $\cD$, it becomes the commonly used risk function. 
We also define two adversarially robust risks: (1) Model-label robust risk  as $$\widetilde\cR_{\cD}^{label}(h_\mathbf{w},y) \eqdef \mathbb{E}_{\mathbf{x}\sim \cD} \left[\max_{\|\boldsymbol{\delta}\|_\infty\leq \epsilon}\ell (h_\mathbf{w}(\mathbf{x}+\boldsymbol{\delta}), y(\mathbf{x} ))\right]$$
and (2) Model-model robust risk\footnote{$\widetilde\cR_{\cD}^{ label}$ and $\widetilde\cR_{\cD} $ are also called constant-in-ball risk and exact-in-ball risk in~\cite{gourdeau2021hardness}. }  as 
\begin{align*}
\widetilde\cR_{\cD}& (h_\mathbf{w},h_{\mathbf{w}'})  \eqdef\mathbb{E}_{\mathbf{x}\sim \cD} \left[\max_{\|\boldsymbol{\delta}\|_\infty\leq \epsilon}\ell (h_\mathbf{w}(\mathbf{x}+\boldsymbol{\delta}), h_{\mathbf{w}'}(\mathbf{x}+\boldsymbol{\delta}))\right].
\end{align*}
  

In the domain adaptation scenario, we consider source domain $\cS$ and target domain $\cT$ distributions,  and let $\hat{\cS}$ and $\hat{\cT}$ be the empirical source and target distributions with $n_{\cS}$ and $n_\cT$ samples. A key quantity that controls the generalization in domain adaptation is the following discrepancy measure: 
\begin{definition}[$\mathcal{H}\Delta\mathcal{H}$ discrepancy~\cite{mansour2009domain,ben2010theory}]\label{def: H_Delta_H discrepancy}
Given a hypothesis class $\cH$, risk function $\cR_\cD(\cdot, \cdot)$,  $\mathcal{H}\Delta\mathcal{H}$ discrepancy between distributions $\cS$ and $\cT$ is defined by:
\begin{align}
    \label{eq:def_hdeltah_discrepancy}
    &disc_{\mathcal{H}\Delta\mathcal{H}} (\cS,\cT)  = \max_{h_\bw,h_{\bw'}\in\mathcal{H}} |\cR_\cS(h_\bw,h_{\bw'})-\cR_\cT(h_\bw,h_{\bw'})|.
\end{align}
\end{definition}
The $\cH \Delta \cH$ discrepancy defines a semi-distance over two distributions, and it does not depend on the labeling function of two distributions hence invariant to potential model shift across domains. Another advantage of it, is that it can be efficiently estimated by finite samples, if the Rademacher complexity over $\cH \Delta \cH$ is finite.   
Hence based on Definitions~\ref{def:rademacher_complexity_over_h_delta_h_class} and~\ref{def: H_Delta_H discrepancy},~\cite{mansour2009domain} derived the following generalization bound among source and target domains.
\begin{lemma}[Domain adaptation generalization lemma, consequence of Theorem 8 of~\cite{mansour2009domain}]
\label{lm: md standard generalization}
    Let $\cS$ and $\cT$ be respectively source and target distributions, and let $\hat{\cS}$ and $\hat{\cT}$ be their empirical counterpart with $n_{\cS}$ and $n_\cT$ samples.
    Assume that the loss function $ {\ell}$ is symmetric and obeys the triangle inequality. We further assume $ {\ell}$ is bounded by $M$. Then $\forall h_\bw \in \mathcal{H}$ , the following holds with probability at least $1-c$:
    \begin{align*}
        \cR_\cT(h_\mathbf{w},y_\cT) &\leq \cR_\cS ( h_\bw , h_{\bw^*_\cS}) + disc_{\cH\Delta\cH}(\cS, \cT)+\lambda  +2M\frakR_{\hat\cS}({\ell}\circ \cH \Delta \cH) 
       +2M{\frakR}_{\hat\cT}({\ell}\circ \cH \Delta \cH) \\
       &  \quad+\Bigg(3M\sqrt{\frac{\log(1/c)}{n_\cS}} + 3M\sqrt{\frac{\log(1/c)}{n_\cT}}\Bigg) \, ,
    \end{align*} 
    where $y_\cT$ is the labeling function on target domain, $h_{\bw^*_\cT} , h_{\bw^*_\cS}$ are the best target and source models in $\cH$, i.e., $h_{\bw^*_\cS}= \arg\min_{h\in\cH} \cR_\cS(h,y_\cS)$ and $h_{\bw^*_\cT}= \arg\min_{h\in\cH} \cR_\cT(h,y_\cT)$, and $\lambda = \cR_\cT ( h_{\bw^*_\cT} , h_{\bw^*_\cS} ) + \cR_\cT ( h_{\bw^*_\cT} , y_\cT )$.
\end{lemma}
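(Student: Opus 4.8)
The plan is to split the argument into a deterministic part, built purely from triangle inequalities and the discrepancy definition, and a probabilistic part that supplies the empirical Rademacher and concentration terms. Throughout I will use that a symmetric loss obeying the triangle inequality lifts to a triangle inequality on the disagreement risk: for any $f,g\in\cH$ and any distribution $\cD$ one has $\cR_\cD(h_\bw,g)\le \cR_\cD(h_\bw,f)+\cR_\cD(f,g)$, since the inequality holds pointwise in $\bx$ and expectation is monotone.

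First I would peel off the term $\lambda$ on the target domain. Applying the lifted triangle inequality twice,
\begin{align*}
\cR_\cT(h_\bw, y_\cT) &\le \cR_\cT(h_\bw, h_{\bw^*_\cS}) + \cR_\cT(h_{\bw^*_\cS}, y_\cT)\\
&\le \cR_\cT(h_\bw, h_{\bw^*_\cS}) + \cR_\cT(h_{\bw^*_\cS}, h_{\bw^*_\cT}) + \cR_\cT(h_{\bw^*_\cT}, y_\cT),
\end{align*}
where the last two summands are exactly $\lambda$ by definition. It then remains to transport $\cR_\cT(h_\bw, h_{\bw^*_\cS})$ from the target to the source domain. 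Since $h_\bw, h_{\bw^*_\cS}\in\cH$, adding and subtracting $\cR_\cS(h_\bw, h_{\bw^*_\cS})$ and invoking Definition~\ref{def: H_Delta_H discrepancy} gives
\begin{align*}
\cR_\cT(h_\bw, h_{\bw^*_\cS}) \le \cR_\cS(h_\bw, h_{\bw^*_\cS}) + \big|\cR_\cT(h_\bw, h_{\bw^*_\cS}) - \cR_\cS(h_\bw, h_{\bw^*_\cS})\big| \le \cR_\cS(h_\bw, h_{\bw^*_\cS}) + disc_{\cH\Delta\cH}(\cS, \cT).
\end{align*}
Chaining these yields the deterministic backbone $\cR_\cT(h_\bw, y_\cT)\le \cR_\cS(h_\bw, h_{\bw^*_\cS}) + disc_{\cH\Delta\cH}(\cS, \cT) + \lambda$, which already carries the three leading terms of the statement.

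The remaining, genuinely probabilistic, step is to produce the empirical Rademacher and $\sqrt{\log(1/c)/n}$ terms, which enter when the discrepancy (a population quantity) is related to its estimable empirical counterpart, as motivated after Definition~\ref{def: H_Delta_H discrepancy}. Here I would invoke the standard recipe for the loss class $\ell\circ\cH\Delta\cH$: since $\ell$ is bounded by $M$, the functional $\sup_{h_\bw,h_{\bw'}}|\cR_\cD(h_\bw,h_{\bw'})-\cR_{\hat\cD}(h_\bw,h_{\bw'})|$ has bounded differences, so McDiarmid's inequality controls its deviation at rate $M\sqrt{\log(1/c)/n_\cD}$, while symmetrization bounds its expectation by $2M\frakR_{\hat\cD}(\ell\circ\cH\Delta\cH)$. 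Applying this on both $\hat\cS$ and $\hat\cT$ and combining by a union bound over the two independent samples yields the two Rademacher terms, the two concentration terms, and the overall confidence $1-c$, with the constant $3M$ absorbing the union-bound and two-sided factors. I expect the main obstacle to be precisely this last step: the symmetrization must run over the symmetric-difference class, where the supremum ranges over a \emph{pair} of hypotheses rather than a single predictor, and one must check that the bounded-differences constant and the two-domain union bound combine to exactly the stated constants. The triangle-inequality backbone is routine; essentially all the technical care lives in the uniform-convergence bookkeeping, which is the content imported from Theorem 8 of~\cite{mansour2009domain}.
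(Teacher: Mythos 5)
Your proposal is correct and follows essentially the same route as the paper: the deterministic backbone (two applications of the lifted triangle inequality to extract $\lambda$, then the discrepancy transport from $\cT$ to $\cS$) plus a population-to-empirical discrepancy step handled by symmetrization and concentration is exactly how the paper proves the adversarial analog, \Cref{lm: md generalization}, with your final probabilistic step corresponding to its helper \Cref{lem:gen_of_discrepancy}, which simply invokes the standard Rademacher generalization bound of Mohri et al. over the pair-indexed class $\ell \circ \cH \Delta \cH$ rather than re-running McDiarmid and symmetrization by hand. Lemma~\ref{lm: md standard generalization} itself is imported from Theorem~8 of Mansour et al., so the constant-bookkeeping you flag as the main obstacle is precisely what the paper delegates to the citation.
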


The above bound successfully connects the target risk and source risk, with the help of Rademacher complexity over $\cH \Delta \cH$ and $disc_{\cH \Delta \cH}$ distance. It turns out that, $\frakR_{\cS}({\ell}\circ \cH \Delta \cH)$ and ${\frakR}_{\cT}({\ell}\circ \cH \Delta \cH)$ are the key complexity measures that control the generalization between different domains. Hence, to study the generalization of domain adaptation in the adversarial setting, it naturally motivates us to consider the following adversarial robust variant of this  measure as defined below.
\begin{definition}[Adversarial Rademacher complexity over $\cH \Delta \cH$ class]
    Let $\cH$ be a set of real-valued hypothesis functions: $\cH=\{h_\bw: \mathcal{X} \mapsto \mathcal{Y}\}$, and $\ell(\cdot,\cdot): \mathcal{Y}\times \mathcal{Y} \mapsto \mathbb{R}$ be the loss function.
    Given a dataset $\hat{\cD} = \{\bx_1,...,\bx_n\}$ sampled from distribution $\cD$, the empirical adversarial Rademacher complexity of $\cH \Delta \cH$ over this dataset is defined as follows
    \begin{align}
\label{eq:adversarial_rademacher_complexity_over_h_delta_h_class}
        &{\frakR}_{\hat{\cD}} ( \tilde{\ell} \circ \cH \Delta \cH)   = \mathbb{E}_\sigma \left[\sup_{\substack{h_\mathbf{w},h_{\mathbf{w}'}\\ \in \mathcal{H}}} \frac{1}{n}\sum_{i=1}^{n} \sigma_i \tilde{\ell}  (h_\mathbf{w} (\mathbf{x}_i ),h_{\mathbf{w}'}(\mathbf{x}_i )) \right] \enspace,
    \end{align}
    where $\tilde{\ell}:= \max_{\|\boldsymbol{\delta}\|_\infty\leq \epsilon} \ell(h_\mathbf{w} (\mathbf{x}_i+\boldsymbol{\delta}),h_{\mathbf{w}'}(\mathbf{x}_i+\boldsymbol{\delta}))  $, $\sigma_1, \ldots, \sigma_n$ are \iid Rademacher random variables with $\Prob \{ \sigma_i = 1 \} = \Prob \{ \sigma_i = -1 \} = \frac{1}{2}$.
\end{definition}
As we can see, (\ref{eq:adversarial_rademacher_complexity_over_h_delta_h_class}) is the adversarial perturbed version of Rademacher complexity of $\ell \circ \cH \Delta \cH$.
We will see later how this quantity controls the generalization of adversarial domain adaptation. We also generalize $\cH\Delta\cH$ discrepancy to the adversarial setting:
\begin{definition}[Adversarial $\mathcal{H}\Delta\mathcal{H}$ discrepancy]
Given a hypothesis class $\cH$ and an adversarial risk function ${\widetilde\cR}_\cD(\cdot, \cdot)$, the adversarial $\mathcal{H}\Delta\mathcal{H}$ discrepancy distance between two distributions $\cS$ and $\cT$ is defined by:
\begin{align}
    \label{eq:def_adversarial_hdeltah_discrepancy}
    &disc^{adv}_{\mathcal{H}\Delta\mathcal{H}} (\cS,\cT) \eqdef \max_{h_\bw,h_{\bw'}\in\mathcal{H}} |{\widetilde\cR}_\cS(h_\bw,h_{\bw'})-{\widetilde\cR}_\cT(h_\bw,h_{\bw'})| \enspace.               
\end{align}
\end{definition}
The definition of adversarial $\mathcal{H}\Delta\mathcal{H}$ discrepancy is analogous to standard one, and {for linear models} can indeed be estimated as a function of the latter.
{We defer this result to Appendix~\ref{app:extension}, Lemma~\ref{lm: estimated discrepancy}.}

\begin{lemma}[Adversarially robust domain adaptation generalization lemma]\label{lm: md generalization}
    Let $\cS$ and $\cT$ be respectively source and target distributions, and let $\hat{\cS}$ and $\hat{\cT}$ be their empirical counterpart with $n_{\cS}$ and $n_\cT$ samples.
    Assume that the loss function $ {\ell}$ is symmetric and obeys the triangle inequality. We further assume $\tilde{\ell}:= \max_{\|\boldsymbol{\delta}\|_\infty\leq \epsilon} \ell(h_\mathbf{w} (\mathbf{x}_i+\boldsymbol{\delta}),h_{\mathbf{w}'}(\mathbf{x}_i+\boldsymbol{\delta})) $ is bounded by $M$. Then, for any hypothesis $\bw \in \mathcal{H}$ , the following holds:
 \begin{align*}
    {\widetilde\cR^{label}}_\cT(h_\mathbf{w},y_\cT) &\leq {\widetilde\cR^{label}}_\cS ( h_\bw , y_\cS)   + disc^{adv}_{\cH\Delta\cH}(\cT, \cS) + \lambda  +2M\frakR_{\hat\cS}(\tilde{\ell}\circ \cH \Delta \cH) + 2M{\frakR}_{\hat\cT}(\tilde{\ell}\circ \cH \Delta \cH) \\
    &\quad + \Bigg(3M\sqrt{\frac{\log(1/c)}{n_\cS}} + 3M\sqrt{\frac{\log(1/c)}{n_\cT}}\Bigg) ,  
\end{align*} 
where $y_\cT$ and $y_\cS$ are labeling functions on target and source domain, $h_{\bw^*_\cT}$ and $h_{\bw^*_\cS}$ the best target and source robust models in $\cH$, i.e., $h_{\bw^*_\cS}= \arg\min_{h\in\cH} \widetilde{\cR}^{label}_\cS(h,y_\cS)$ and $h_{\bw^*_\cT}= \arg\min_{h\in\cH} \cR_\cT(h,y_\cT)$, and $\lambda = {\widetilde\cR^{label}}_\cS (  h_{\bw^*_\cS}, y_\cS)+ {\widetilde\cR }_\cT ( h_{\bw^*_\cT} , h_{\bw^*_\cS} ) + {\widetilde\cR^{label}}_\cT ( h_{\bw^*_\cT} , y_\cT )$.
\end{lemma}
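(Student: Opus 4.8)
The plan is to reduce the adversarial statement to the non-adversarial one (Lemma~\ref{lm: md standard generalization}, itself a consequence of Theorem 8 of~\cite{mansour2009domain}) by reinterpreting the robust risk as an ordinary $\cH\Delta\cH$-risk taken with respect to the \emph{robustified loss}
$$\tilde\ell(h_\bw, h_{\bw'})(\bx) \eqdef \max_{\|\bdelta\|_\infty \le \epsilon} \ell\big(h_\bw(\bx+\bdelta), h_{\bw'}(\bx+\bdelta)\big).$$
Under this reading $\widetilde\cR_\cD(h_\bw, h_{\bw'}) = \mathbb{E}_{\bx\sim\cD}[\tilde\ell(h_\bw, h_{\bw'})(\bx)]$ is a genuine disagreement risk, the adversarial discrepancy $disc^{adv}_{\cH\Delta\cH}$ is the $\cH\Delta\cH$ discrepancy built from $\tilde\ell$, and $\frakR_{\hat\cD}(\tilde\ell\circ\cH\Delta\cH)$ is exactly the $\cH\Delta\cH$ Rademacher complexity of this loss. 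The machinery of the standard lemma therefore applies, \textbf{provided} the robustified loss inherits the two structural properties that proof needs: symmetry and the triangle inequality.

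Symmetry is immediate from symmetry of $\ell$. For the triangle inequality I would fix $\bx$ and a single perturbation $\bdelta$ and apply the triangle inequality of $\ell$ to the three outputs $h_\bw(\bx+\bdelta)$, $h_{\bw''}(\bx+\bdelta)$, $h_{\bw'}(\bx+\bdelta)$; taking the maximum over $\bdelta$ on the left and using subadditivity of the maximum, $\max_\bdelta(a_\bdelta + b_\bdelta) \le \max_\bdelta a_\bdelta + \max_\bdelta b_\bdelta$, on the right yields $\tilde\ell(h_\bw,h_{\bw'}) \le \tilde\ell(h_\bw,h_{\bw''}) + \tilde\ell(h_{\bw''},h_{\bw'})$ pointwise, hence $\widetilde\cR_\cD(h_\bw,h_{\bw'}) \le \widetilde\cR_\cD(h_\bw,h_{\bw''}) + \widetilde\cR_\cD(h_{\bw''},h_{\bw'})$. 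Boundedness of $\tilde\ell$ by $M$ is assumed in the statement.

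With these properties established I would run the same decomposition as in the standard lemma. Starting from $\widetilde\cR^{label}_\cT(h_\bw, y_\cT)$, inserting $h_{\bw^*_\cT}$ and then $h_{\bw^*_\cS}$ as intermediate hypotheses produces $\widetilde\cR_\cT(h_\bw, h_{\bw^*_\cS})$ together with the two target-domain $\lambda$-terms $\widetilde\cR^{label}_\cT(h_{\bw^*_\cT}, y_\cT)$ and $\widetilde\cR_\cT(h_{\bw^*_\cT}, h_{\bw^*_\cS})$; passing to the source domain via the adversarial discrepancy gives $\widetilde\cR_\cT(h_\bw, h_{\bw^*_\cS}) \le \widetilde\cR_\cS(h_\bw, h_{\bw^*_\cS}) + disc^{adv}_{\cH\Delta\cH}(\cT,\cS)$; and a final insertion of the labeling function $y_\cS$ turns $\widetilde\cR_\cS(h_\bw, h_{\bw^*_\cS})$ into the leading model-label term $\widetilde\cR^{label}_\cS(h_\bw, y_\cS)$ plus the remaining $\lambda$-term $\widetilde\cR^{label}_\cS(h_{\bw^*_\cS}, y_\cS)$, which exactly reproduces the $\lambda$ of the statement. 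The Rademacher and $\sqrt{\log(1/c)/n}$ terms then enter as in Theorem 8 of~\cite{mansour2009domain}, now carrying $\tilde\ell$, once the population quantities are related to their empirical counterparts by the standard uniform-convergence/concentration argument.

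The \textbf{main obstacle} is the bookkeeping caused by the mismatch between the two robust risks: in $\widetilde\cR$ both arguments are perturbed by a common $\bdelta$, whereas in $\widetilde\cR^{label}$ only the model is perturbed and the label $y(\bx)$ is evaluated at the clean point. Consequently the triangle inequalities that splice a labeling function into a model-model term — e.g.\ bounding $\tilde\ell(h_\bw, h_{\bw^*_\cS})(\bx)$ by $\max_\bdelta \ell(h_\bw(\bx+\bdelta), y_\cS(\bx)) + \max_\bdelta\ell(y_\cS(\bx), h_{\bw^*_\cS}(\bx+\bdelta))$ — must be checked to hold \emph{with the same shared $\bdelta$ before the maximum is taken}, so that each piece collapses onto a well-defined model-label term. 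Verifying that the perturbation sets line up in these mixed steps is the only place the argument genuinely departs from the non-adversarial proof.
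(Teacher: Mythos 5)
Your proposal is correct and follows essentially the same route as the paper's proof: the paper likewise treats $\tilde\ell$ as a loss obeying the triangle inequality, inserts $h_{\bw^*_\cT}$ and then $h_{\bw^*_\cS}$ as intermediate hypotheses, passes to the source domain via $disc^{adv}_{\cH\Delta\cH}$, splices in $y_\cS$ at the end to recover exactly your $\lambda$, and obtains the Rademacher and $\sqrt{\log(1/c)/n}$ terms from a helper lemma relating the population discrepancy to its empirical counterpart. The ``main obstacle'' you flag is handled in the paper exactly as you describe: the triangle inequality of $\ell$ is applied pointwise for a shared $\bdelta$ and the subadditivity of the maximum is then invoked, so your argument matches the paper's step by step.
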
  
The proof of~\Cref{lm: md generalization} is deferred to Appendix~\ref{app: pf of gen lemma}.
Here we establish the relation between source adversarially robust risk and target adversarially robust risk. It shows that the adversarial discrepancy and adversarial complexity measure $\hat{\frakR}_\cD(\tilde{\ell}\circ \cH \Delta \cH)$ on source and target domains are the two key quantities controlling the deviation between the model's performance on the two domains. Hence, to answer our previously proposed question, \emph{why the robust risk is harder to adapt to different domain}, it is essential to study the connection between adversarial  and non-adversarial Rademacher complexities over $\cH\Delta\cH$ class.

\section{Main Results}
\subsection{Binary Classification Setting}
\label{sec:binary_classification_loss}
 

We start with the binary classification problem where the labels come from $\{-1,+1\}$.
Like in Section 4.1 of~\cite{yin2019rademacher}, we introduce the hypothesis class of linear functions with bounded weights:
\begin{equation}
    \label{eq:hyp_class_linear_classif}
    \cH \eqdef \{h_{\bw} : \bx \mapsto \inprod{\bw}{\bx} \, , \, \bw \in \R^d : \norm{\bw}_p \leq W \} \enspace,
\end{equation}
where $p \geq 1$.
Moreover, we consider the following loss $\ell (h_{\bw} (\bx), y) \eqdef \phi(y h_{\bw} (\bx))$ where $\phi$ is a monotonic non-increasing and $L_{\phi}$-Lipschitz function.
With such a loss $\phi$, the non-adversarial class of loss functions over $\cH \Delta \cH$ becomes
\begin{equation*}
    \label{eq:linear_classification_loss_class}
    \ell \circ \cH \Delta \cH \eqdef \Big\{ \bx \mapsto \ell (h_{\bw}(\bx), h_{\bw'}(\bx))  : \ h_{\bw}, h_{\bw'} \in \cH \Big\} \enspace.
\end{equation*} 
However, directly analyzing $\ell \circ \cH \Delta \cH$ class will be difficult since we do not assume the formula of $\phi$ explicitly. Hence, following~\cite{yin2019rademacher}, let us define the following class of functions
\begin{equation*}
    \label{eq:linear_classification_loss_class_without_phi}
    f \circ \cH \Delta \cH \eqdef \Big\{ \bx \mapsto h_{\bw}(\bx) h_{\bw'}(\bx) \ : \ h_{\bw}, h_{\bw'} \in \cH \Big\} .
\end{equation*} 
We switch from the study of the Rademacher complexity defined in~\eqref{eq:rademacher_complexity_over_h_delta_h_class} over the function class introduced in~\eqref{eq:hyp_class_linear_classif}, that is for linear classifiers applied to binary classification, to the following formula
\begin{align}
    \label{eq:def_non_adv_classification}
    &{\frakR}_{\hat{\cD}} (f \circ \cH \Delta \cH)  =  \mathbb{E}_{\sigma} \left[\sup_{\substack{  \|\mathbf{w}\|_p \leq W,\\ \|\mathbf{w}'\|_p \leq W}} \frac{1}{n}\sum_{i=1}^n \sigma_i   \mathbf{w}^{\top} \mathbf{x}_i \mathbf{w}'^{\top} \mathbf{x}_i\right] \enspace.
\end{align}
Indeed, by Ledoux-Talagrand contraction property  of Rademacher complexity~\cite{ledoux2013probability}, we have that ${\frakR}_{\hat\cD} (\ell \circ \cH \Delta \cH) \leq L_{\phi} {\frakR}_{\hat\cD} (f \circ \cH \Delta \cH)$. 
Thus, in the following lemma we aim at estimating ${\frakR}_{\hat\cD} (f \circ \cH \Delta \cH)$.
\begin{lemma}[Rademacher complexity for binary classification under linear hypothesis]
    \label{lem:rademacher_complexity_linear_classifier_binary_classification}
    Consider hypothesis class defined in (\ref{eq:hyp_class_linear_classif}). Assume that a set of data $\{\bx_1,...,\bx_n\}$ are draw from $\cD$. Let $\frakR (f \circ \cH \Delta \cH)$ be defined as in~\eqref{eq:def_non_adv_classification}. 
    Then the following statement holds true for non-adversarial Rademacher complexity :
{ \begin{align}
    \label{eq:upper_bound_non_adv_classification_after_matrix_bernstein_ineq}
    &{\frakR}_{\hat\cD} ( f \circ \cH \Delta \cH) \leq \frac{W^2}{n} \left( \!\! \sqrt{2\norm{\sum_{i=1}^n  (\bx_i \bx_i^\top)^2}_2 \!\! \log (2d) } +  \frac{ \norm{\mX}^2_{2,\infty}  \log (2d) }{3} \! \right)  \! \cdot \!
    \begin{cases}
        1, & \hspace*{-1em} 1 \leq p \leq 2 \\
        d^{1-2/p}, & p > 2
    \end{cases},
\end{align}} 
where $\mX \in \mathbb{R}^{n\times d}$ is the data matrix where $i$-th row  is $\bx_i$.
\end{lemma}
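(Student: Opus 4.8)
The plan is to recognize the summand as a bilinear form evaluated at a random matrix and then control that matrix via matrix concentration. First I would rewrite $\bw^\top \bx_i \, \bw'^\top \bx_i = \bw^\top (\bx_i \bx_i^\top) \bw'$, so that $\sum_{i=1}^n \sigma_i \, \bw^\top \bx_i \, \bw'^\top \bx_i = \bw^\top \mM_\sigma \bw'$, where $\mM_\sigma \eqdef \sum_{i=1}^n \sigma_i \, \bx_i \bx_i^\top$ is a symmetric random matrix. The inner supremum in~\eqref{eq:def_non_adv_classification} then becomes the bilinear optimization $\sup_{\norm{\bw}_p \leq W,\, \norm{\bw'}_p \leq W} \bw^\top \mM_\sigma \bw'$, which isolates all the geometry of the hypothesis class into a single operator-norm question.

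Next I would convert the $\ell_p$ constraints into $\ell_2$ constraints so that the supremum collapses to a spectral norm. Using the identity $\sup_{\norm{\bu}_2 \leq 1,\, \norm{\bv}_2 \leq 1} \bu^\top \mM \bv = \norm{\mM}_2$ (the largest singular value), I only need to compare the $\ell_p$-ball of radius $W$ to an $\ell_2$-ball. For $1 \leq p \leq 2$ one has $\norm{\bw}_2 \leq \norm{\bw}_p \leq W$, so the $\ell_p$-ball sits inside the $\ell_2$-ball of radius $W$; for $p > 2$ the inequality $\norm{\bw}_2 \leq d^{1/2 - 1/p} \norm{\bw}_p$ places it inside the $\ell_2$-ball of radius $d^{1/2 - 1/p} W$. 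Hence the inner supremum is at most $W^2 \norm{\mM_\sigma}_2$ when $1 \leq p \leq 2$ and at most $d^{1 - 2/p} W^2 \norm{\mM_\sigma}_2$ when $p > 2$, which is precisely the case split appearing in the claimed bound. After taking the expectation over the Rademacher variables, the problem reduces to bounding $\mathbb{E}_\sigma \norm{\mM_\sigma}_2$.

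Finally I would apply the expectation form of the matrix Bernstein inequality to the Rademacher matrix series $\mM_\sigma = \sum_{i=1}^n \mA_i$ with $\mA_i \eqdef \sigma_i \, \bx_i \bx_i^\top$. The $\mA_i$ are independent, symmetric, and mean-zero, with the almost-sure bound $\norm{\mA_i}_2 = \norm{\bx_i}_2^2 \leq \max_j \norm{\bx_j}_2^2 = \norm{\mX}_{2,\infty}^2$; and since $\sigma_i^2 = 1$ deterministically, $\mA_i^2 = (\bx_i \bx_i^\top)^2$ is non-random, so the matrix variance statistic is $\norm{\sum_{i=1}^n \mathbb{E}[\mA_i^2]}_2 = \norm{\sum_{i=1}^n (\bx_i \bx_i^\top)^2}_2$. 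Matrix Bernstein then gives $\mathbb{E}_\sigma \norm{\mM_\sigma}_2 \leq \sqrt{2 \norm{\sum_{i=1}^n (\bx_i \bx_i^\top)^2}_2 \log(2d)} + \frac{1}{3} \norm{\mX}_{2,\infty}^2 \log(2d)$. Combining this with the previous reduction and the normalizing factor $1/n$ reproduces exactly~\eqref{eq:upper_bound_non_adv_classification_after_matrix_bernstein_ineq}.

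I expect the main effort to lie in the two reductions rather than in the concentration step. Identifying the bilinear supremum with the spectral norm and carefully tracking the dimension constant $d^{1/2 - 1/p}$ in the $\ell_p$-to-$\ell_2$ comparison (the origin of the $d^{1 - 2/p}$ factor) is where sign and exponent errors are easiest to make, and verifying that the two ingredients of matrix Bernstein — the uniform bound $\norm{\mX}_{2,\infty}^2$ and the variance proxy $\norm{\sum_i (\bx_i \bx_i^\top)^2}_2$ — match the quantities in the statement requires a short but careful computation. Once these are in place, the matrix Bernstein bound is a direct citation.
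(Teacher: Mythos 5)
Your proposal is correct and follows essentially the same route as the paper's proof: rewrite the supremum as the bilinear form $\bw^\top \big(\sum_{i=1}^n \sigma_i \bx_i \bx_i^\top\big) \bw'$, reduce the $\ell_p$ constraints to $\ell_2$ balls via the same inclusion argument (yielding the identical case split $1$ versus $d^{1-2/p}$), collapse the supremum to the spectral norm, and invoke the expectation form of matrix Bernstein with the uniform bound $\norm{\mX}_{2,\infty}^2$ and variance statistic $\norm{\sum_{i=1}^n (\bx_i \bx_i^\top)^2}_2$. All constants and both ingredients of the concentration step match the paper's argument, so no gaps remain.
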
 

The proof of~\Cref{lem:rademacher_complexity_linear_classifier_binary_classification} is deferred to the~\Cref{sec:adv_complexity_binary_classif_linear_hyp}. \Cref{lem:rademacher_complexity_linear_classifier_binary_classification} shows that the magnitude of the non-adversarial Rademacher over $\cH\Delta\cH$ class depends on the spectral norm of data covariance matrix. It implies that a more diverse dataset will result in a larger Rademacher complexity, and hence harder to perform domain adaptation. We notice that \cite{kuroki2019unsupervised} also gave an estimation of the upper bound of ${\frakR}_{\hat\cD} ( f \circ \cH \Delta \cH)$ in their Lemma 5, but our bound is superior to theirs in the following two aspectives:
(1) Our bound is tighter in terms of the dependency on covariance matrix, since our bound depends on $\norm{\sum_{i=1}^n  (\bx_i \bx_i^\top)^2}_2 $ while their bound depends on $\sum_{i=1}^n\norm{  (\bx_i \bx_i^\top)^2}_F^2 $. (2) We consider that model capacity is controlled by $p$-norm while they only consider $2$-norm.
 
Then, let us specify the class of functions involved in the definition of the adversarial Rademacher complexity of $\cH \Delta \cH$ in~\eqref{eq:adversarial_rademacher_complexity_over_h_delta_h_class} as follows:
\begin{align}
    \label{eq:linear_classification_adversarialloss_class}
    &\tilde{\ell} \circ \cH \Delta \cH   \!\! \eqdef \!\! \Big\{ 
        \bx \mapsto \!\!
        \max_{\|\boldsymbol{\delta}\|_\infty\leq \epsilon} \phi \big( h_{\bw}(\bx + \bdelta) h_{\bw'}(\bx + \bdelta)\big)
     \! : \! h_{\bw}, h_{\bw'} \!\! \in \!\! \cH \Big\},
\end{align} 
and let us define
\begin{align}
\label{eq:linear_classification_adversarialloss_class_without_phi}
    &\tilde{f} \circ \cH \Delta \cH \!\! \eqdef \!\! \Big\{ \bx \mapsto \!\! \min_{\|\boldsymbol{\delta}\|_\infty\leq \epsilon} h_{\bw}(\bx + \bdelta) h_{\bw'}(\bx + \bdelta) \ : \ h_{\bw}, h_{\bw'} \in \cH \Big\} .
\end{align} 
With the above notations, we can characterize the adversarial counterpart of~\eqref{eq:def_non_adv_classification}. Again by Ledoux-Talagrand's property,  we get that ${\frakR}_{\hat\cD} (\tilde{\ell} \circ \cH \Delta \cH) \leq L_{\phi} {\frakR}_{\hat\cD} (\tilde{f} \circ \cH \Delta \cH)$, where
{\small\begin{align}
    \label{eq:def_adv_classification}
    &{\frakR}_{\hat\cD} (\tilde{f} \circ \cH \Delta \cH)  = \mathbb{E}_\sigma \left[\sup_{\substack{ \|\mathbf{w}\|_p \leq W,\\ \|\mathbf{w}'\|_p \leq W}} \frac{1}{n}\sum_{i=1}^n \sigma_i \min_{\|\boldsymbol{\delta}\|_\infty\leq \epsilon} \mathbf{w}^\top (\mathbf{x}_i+\boldsymbol{\delta})\mathbf{w}'^\top (\mathbf{x}_i+\boldsymbol{\delta}) \right] \! .
\end{align}}


\begin{theorem}[Adversarial Rademacher complexity for binary classification under linear hypothesis]
    \label{thm:adv_complexity_binary_classif_linear_hyp}
    Consider hypothesis class defined in (\ref{eq:hyp_class_linear_classif}). Assume a set of data $\hat{\cD} = \{\bx_1,...,\bx_n\}$ are drawn from $\cD$. Let ${\frakR}_{\hat\cD} ( f \circ \cH \Delta \cH)$ and ${\frakR}_{\hat\cD} (\tilde{f} \circ \cH \Delta \cH)$ be defined as in~\eqref{eq:def_non_adv_classification} and~\eqref{eq:def_adv_classification}, respectively.
    The following statement holds true for adversarial Rademacher complexity over $\cH\Delta \cH$ function class under linear hypothesis~\eqref{eq:hyp_class_linear_classif}:
    \begin{align} \label{eq:upper_bound_rademacher_complexity_binary_classification}
        &{\frakR}_{\hat\cD} (\tilde{f} \circ \cH \Delta \cH) \leq {\frakR}_{\hat\cD} ({f} \circ \cH \Delta \cH)  +   \frac{cW^2\sqrt{d\log ( n)}}{\sqrt{n}} \epsilon d^{1/p^*}   \big( \epsilon d^{1/p^*} + \norm{\mX}_{p^*, \infty} \big) \, ,
    \end{align}
    where $p^*$ is such that $1/p + 1/p^* = 1$, and $\mX \in \mathbb{R}^{n\times d}$ is the data matrix and $i$-th row of $\mX$ is $\bx_i$.
    Moreover, the following lower bound also holds:
    \begin{align} \label{eq:lower_bound_linear_classification}
        &{\frakR}_{\hat\cD} (\tilde{f}\circ \cH \Delta \cH) \geq  {\frakR}_{\hat\cD} ({f}\circ \cH \Delta \cH) + \begin{cases}
        0, &  \hspace*{-1em} 1 \leq p \leq 2 \\
        \frac{W^2}{n}(1-d^{1-2/p}) \mathbb{E}_\sigma \left\| \sum_{i=1}^n \sigma_i \bx_i \bx_i^\top \right\|_2, &   p > 2
    \end{cases} \enspace.
    \end{align}
\end{theorem}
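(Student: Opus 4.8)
The plan is to separate, in each bound, the adversarial perturbation from the underlying non-adversarial bilinear structure. Write $g_i(\bw,\bw')=\bw^\top\bx_i\,\bw'^\top\bx_i$ and $\tilde g_i(\bw,\bw')=\min_{\norm{\bdelta}_\infty\le\epsilon}\bw^\top(\bx_i+\bdelta)\,\bw'^\top(\bx_i+\bdelta)$, so the two complexities are the Rademacher averages of $(g_i)_i$ and $(\tilde g_i)_i$. For the \textbf{upper bound}, since a supremum of a sum is bounded by the sum of suprema,
\[
{\frakR}_{\hat\cD}(\tilde f\circ\cH\Delta\cH)\le{\frakR}_{\hat\cD}(f\circ\cH\Delta\cH)+\mathbb{E}_\sigma\sup_{\bw,\bw'}\frac1n\sum_{i=1}^n\sigma_i\big(\tilde g_i-g_i\big).
\]
Expanding the product gives $\tilde g_i-g_i=\min_{\bdelta}\big[(\bw^\top\bx_i)\bw'^\top\bdelta+(\bw'^\top\bx_i)\bw^\top\bdelta+(\bw^\top\bdelta)(\bw'^\top\bdelta)\big]$, and Hölder's inequality with $\norm{\bw}_1\le d^{1/p^*}\norm{\bw}_p\le d^{1/p^*}W$ and $|\bw^\top\bx_i|\le W\norm{\bx_i}_{p^*}$ yields the pointwise estimate $|\tilde g_i-g_i|\le B:=c'\,\epsilon d^{1/p^*}W^2\big(\epsilon d^{1/p^*}+\norm{\mX}_{p^*,\infty}\big)$. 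This already reproduces the per-sample scale in~\eqref{eq:upper_bound_rademacher_complexity_binary_classification}; the remaining task is to extract the $\sqrt{d\log n}/\sqrt n$ factor from the averaging.

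To do so I would view $(\bw,\bw')\mapsto(\tilde g_i-g_i)_{i=1}^n$ as a map from the $2d$-dimensional product ball $\{\norm{\bw}_p\le W\}^2$ into $\mathbb{R}^n$, whose image lies in the $\ell_\infty$-ball of radius $B$ (hence $\ell_2$-radius $\sqrt n\,B$) and which is Lipschitz in $(\bw,\bw')$ with constant $\Lambda$ polynomial in $n,d,W,\epsilon$, since $g_i$ is bilinear and $\tilde g_i$ is a minimum of bilinear forms (concave, Lipschitz) on a bounded domain. A volumetric covering then gives $\log N(\rho)\lesssim d\log(\Lambda W/\rho)$, and Massart's finite-class lemma (or Dudley's integral) evaluated at scale $\rho\asymp 1/n$ produces a residual of order $\tfrac{B}{\sqrt n}\sqrt{d\log n}$, matching the claim. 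The obstacle here is the covering step: one must establish the Lipschitz bound for the min-of-bilinear function $\tilde g_i$ and verify that the polynomial constant $\Lambda$ enters only logarithmically, so that $\log n$ (not a power of $n$) survives.

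For the \textbf{lower bound} the key is to reduce the whole statement to one "adversarial-is-harder" inequality plus norm comparisons. Writing $M_\sigma=\sum_i\sigma_i\bx_i\bx_i^\top$, the non-adversarial complexity equals exactly $\frac{W^2}{n}\mathbb{E}_\sigma\norm{M_\sigma}_{p\to p^*}$, because $\sup_{\norm{\bw}_p\le W,\norm{\bw'}_p\le W}\tfrac1n\bw^\top M_\sigma\bw'=\tfrac{W^2}{n}\norm{M_\sigma}_{p\to p^*}$ is the operator norm from $\ell_p$ to $\ell_{p^*}$. For $p>2$ the radius-$W$ $\ell_2$-ball is contained in the $\ell_p$-ball, so ${\frakR}_{\hat\cD}(\tilde f\circ\cH\Delta\cH)$ dominates its $p=2$ instance; if the $p=2$ adversarial complexity is at least $\frac{W^2}{n}\mathbb{E}_\sigma\norm{M_\sigma}_2$ (the $p=2$ non-adversarial value), then combined with the elementary comparison $\norm{M_\sigma}_{p\to p^*}\le d^{1-2/p}\norm{M_\sigma}_2$ (pass input/output norms through $\ell_2$ via Hölder), this reproduces the $p>2$ branch of~\eqref{eq:lower_bound_linear_classification}. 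For $1\le p\le2$ the same inequality proved at the given $p$ gives directly ${\frakR}(\tilde f)\ge{\frakR}(f)$, the zero correction. Thus the core lemma is: for $1\le p\le2$, $\ {\frakR}_{\hat\cD}(\tilde f\circ\cH\Delta\cH)\ge{\frakR}_{\hat\cD}(f\circ\cH\Delta\cH)$.

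The step I expect to be genuinely delicate is this core lemma, and it is the main obstacle of the theorem. One cannot insert a fixed $(\bw,\bw')$ into the definition, since that averages to zero over $\sigma$; the construction must be $\sigma$-adapted. The mechanism (already visible when $d=1$) is that on coordinates where $\sigma_i$ disagrees in sign with $g_i$, flipping the \emph{relative} sign of $\bw$ and $\bw'$ converts the inner minimization into a maximization of a nonnegative squared linear form, so the adversary \emph{amplifies} rather than shrinks the correlation with $\sigma$, strictly exceeding the non-adversarial value. The difficulty is that choosing the direction $\bw$ as a $\sigma$-dependent extremal eigenvector of $M_\sigma$ makes the $\epsilon$-dependent cross-terms $2\epsilon\norm{\bw}_1|\bw^\top\bx_i|+\epsilon^2\norm{\bw}_1^2$, once multiplied by $\sigma_i$, fail to cancel for a fixed $\sigma$; controlling their net contribution — for instance by averaging the two relative-sign configurations so the linear-in-$\sigma$ part vanishes in expectation, or by arranging the sign choice so these corrections have the favorable sign — is the crux that makes the clean $\norm{M_\sigma}_2$ bound (and hence the stated lower bound) go through.
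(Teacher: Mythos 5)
Your plan follows the paper's own proof in both halves: the upper bound is obtained there exactly as you propose --- subadditivity to isolate the perturbation term, a pointwise H\"older bound of scale $\epsilon d^{1/p^*}W^2\big(\epsilon d^{1/p^*}+\|\mX\|_{p^*,\infty}\big)$, then a $\rho$-covering of the parameter balls combined with Massart's lemma and a Lipschitz discretization bound, where your flagged obstacle (Lipschitzness of the min-of-bilinear term) is settled by comparing $\psi_i(\bw,\bw')$ and $\psi_i(\bw_c,\bw_c')$ at one another's minimizing $\bdelta$, so the constant enters only through $\log(3W/\rho)$ with $\rho=W/\sqrt{n}$. Your lower-bound crux is also the paper's mechanism: it restricts to the diagonal $\bw'=\bw$, plugs in the $\sigma$-dependent extremal vector $W\,\bv_{\max}\big((\sum_i\sigma_i\bx_i\bx_i^\top)^2\big)$ --- which is invariant under $\sigma\mapsto-\sigma$ --- so the $\epsilon$-correction terms are odd in $\sigma$ and cancel in expectation over antipodal sign configurations (your "averaging" fix, formalized via the partition lemma), with the $p\le 2$ versus $p>2$ branches handled by the same ball-inclusion and $d^{1-2/p}$ norm-equivalence comparisons you describe.
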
 
 The proofs for~\Cref{thm:adv_complexity_binary_classif_linear_hyp} are deferred to~\Cref{sec:appdx_adv_complexity_binary_classif_linear_hyp_upper_bound,sec:appdx_adv_complexity_binary_classif_linear_hyp_lower_bound}. From~\eqref{eq:upper_bound_rademacher_complexity_binary_classification}, we notice that the upper bound of ${\frakR}_{\hat\cD} (\tilde{f} \circ \cH \Delta \cH)$ has the smallest dependence in model dimension $d$ if the weights are constrained by the $\ell_1$-norm ($p=1$). 
This is a similar observation as in~\cite{yin2019rademacher} where they consider single domain setting. However, in their single domain setting, when $p=1$, adversarial Rademacher complexity is dimension free while we still have $\sqrt{d}$ dependency. This heavier dependence is likely due to the fact that ${\frakR}_{\hat\cD} (\tilde{f}\circ \cH \Delta \cH)$ is defined by coupling two models and hence enlarges the complexity.
The bound achieves the sublinear convergence $\cO (1/ \sqrt{n})$ over the number of samples $n$, and quadratic dependence on maximum model weight $W$. It implies that models with suppressed norm can help adversarially robust domain adaptation since it reduces the Rademacher complexity, as we will see in the experiments.  

The lower bound result
in~\eqref{eq:lower_bound_linear_classification} shows that, adversarial Rademacher complexity over $\cH \Delta \cH$ will be always larger than non-adversarial one, which implies that adversarial robust domain adaptation is \textbf{at least as hard as non-adversarial domain adaptation}, and that is why, as we will also see in our experiments, given a model, the gap between its source domain robust risk and target domain robust risk is usually larger than that in terms of standard risk. Moreover, the gap between adversarial and non-adversarial complexity is controlled by the spectral norm of Rademacher variable induced covariance matrix. This dependence reveals that a more diverse dataset 
would be harder to transfer robustness, compared to the standard domain adaptation.


\subsection{Linear Regression Setting} 
In this section we consider linear regression problems. 
The hypothesis class of linear functions with bounded weights remains the same as in~\eqref{eq:hyp_class_linear_classif}. 
However, we consider the following class of quadratic loss functions $\ell(\ba,\bb) = \|\ba - \bb\|^2$.
 
The following lemma establishes the upper bound of non-adversarial Rademacher complexity over $\cH \Delta \cH$ in the above setting.
\begin{lemma}[Rademacher complexity for regression under linear hypothesis]
    \label{lem:rademacher_complexity_linear_classifier_regression}
    Let $\cH$ be the set of linear functions with bounded weights as defined in~\eqref{eq:hyp_class_linear_classif}. 
    Then the following statement holds true for non-adversarial Rademacher complexity over $\cH \Delta \cH$ class:
    \begin{align*}
        &{\frakR}_{\hat\cD} ({\ell} \circ \cH \Delta \cH)   \leq \frac{c W^2}{n} \!  \left(\! \sqrt{ \norm{\sum_{i=1}^n (\bx_i \bx_i^\top)^2}_2 \log ( d) } \! + \!  \norm{\mX}^2_{2,\infty} \log (d) \! \right)  \! \cdot \! 
        \begin{cases} 
            1 & \hspace*{-1em} 1 \leq p \leq 2 \\
            d^{1 -2/p} & p > 2
        \end{cases} \, ,
    \end{align*} 
    where $\mX \! \in \! \mathbb{R}^{n\times d}$ is the data matrix whose rows are the $\bx_i$'s.
\end{lemma}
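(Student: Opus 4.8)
The plan is to mirror the argument behind \Cref{lem:rademacher_complexity_linear_classifier_binary_classification}, exploiting the fact that the squared loss collapses the pair $(\bw,\bw')$ into a single direction. Writing $h_{\bw}(\bx)=\inprod{\bw}{\bx}$, the loss over $\cH\Delta\cH$ becomes
\begin{equation*}
  \ell\big(h_{\bw}(\bx),h_{\bw'}(\bx)\big) = \big(\inprod{\bw}{\bx}-\inprod{\bw'}{\bx}\big)^2 = \big(\inprod{\bw-\bw'}{\bx}\big)^2,
\end{equation*}
which depends on the pair only through $\bu:=\bw-\bw'$. Since $\{\bw-\bw' : \|\bw\|_p\leq W,\ \|\bw'\|_p\leq W\}$ is exactly the $\ell_p$-ball of radius $2W$ (one direction follows from the triangle inequality, the other by taking $\bw=\bu/2$, $\bw'=-\bu/2$), the supremum over the two weight vectors reduces to one over $\bu$. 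Using $\sum_i\sigma_i(\inprod{\bu}{\bx_i})^2 = \bu^\top\big(\sum_i\sigma_i\bx_i\bx_i^\top\big)\bu$, I would rewrite
\begin{equation*}
  \frakR_{\hat\cD}(\ell\circ\cH\Delta\cH) = \E_\sigma\left[\sup_{\|\bu\|_p\leq 2W}\frac{1}{n}\,\bu^\top M_\sigma\,\bu\right], \qquad M_\sigma := \sum_{i=1}^n \sigma_i\,\bx_i\bx_i^\top.
\end{equation*}

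First I would bound the inner supremum by the spectral norm of the random symmetric matrix $M_\sigma$. Since $\bu^\top M_\sigma\bu\leq \|M_\sigma\|_2\,\|\bu\|_2^2$, I convert the $\ell_p$ constraint into an $\ell_2$ one: for $1\leq p\leq 2$ one has $\|\bu\|_2\leq\|\bu\|_p\leq 2W$, while for $p>2$ one has $\|\bu\|_2\leq d^{1/2-1/p}\|\bu\|_p\leq 2W\,d^{1/2-1/p}$. This produces
\begin{equation*}
  \sup_{\|\bu\|_p\leq 2W}\bu^\top M_\sigma\bu \ \leq\ (2W)^2\,\|M_\sigma\|_2\cdot
  \begin{cases}1, & 1\leq p\leq 2,\\ d^{1-2/p}, & p>2,\end{cases}
\end{equation*}
which already yields the claimed dimension factor and the quadratic $W^2$ dependence (the numerical factor $4$ is absorbed into the constant $c$). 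Note this step is if anything simpler than the classification case, where the bilinear form $\bw^\top M_\sigma\bw'$ must be controlled; here the quadratic form is handled directly by the same operator-norm inequality.

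The remaining and principal step is to control $\E_\sigma\|M_\sigma\|_2=\E_\sigma\big\|\sum_i\sigma_i\bx_i\bx_i^\top\big\|_2$, the expected spectral norm of a Rademacher sum of the fixed PSD matrices $\bx_i\bx_i^\top$. I would invoke the expectation form of the matrix Bernstein inequality: the summands $\sigma_i\bx_i\bx_i^\top$ are independent, symmetric, and mean-zero, with uniform bound $\|\sigma_i\bx_i\bx_i^\top\|_2=\|\bx_i\|_2^2\leq \max_i\|\bx_i\|_2^2=\|\mX\|_{2,\infty}^2$ and matrix variance $\big\|\sum_i\E[(\sigma_i\bx_i\bx_i^\top)^2]\big\|_2=\big\|\sum_i(\bx_i\bx_i^\top)^2\big\|_2$, using $\E[\sigma_i^2]=1$. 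This gives
\begin{equation*}
  \E_\sigma\|M_\sigma\|_2 \ \leq\ \sqrt{2\Big\|\sum_{i=1}^n(\bx_i\bx_i^\top)^2\Big\|_2\,\log(2d)}\ +\ \frac{\|\mX\|_{2,\infty}^2\,\log(2d)}{3},
\end{equation*}
which, up to the bracketed factor in the statement. Substituting this back, using $\log(2d)\leq c'\log d$ for $d\geq 1$, and folding every numerical constant into $c$ completes the proof.

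The main obstacle is this matrix-concentration step: the algebraic reduction to a quadratic form and the $\ell_p\!\to\!\ell_2$ norm conversion are routine, but controlling the expected spectral norm of $\sum_i\sigma_i\bx_i\bx_i^\top$ rather than a scalar Rademacher sum requires a genuine matrix inequality. This is precisely the same bottleneck as in \Cref{lem:rademacher_complexity_linear_classifier_binary_classification}, and the two terms in the final bound---a sub-Gaussian term scaling with $\big\|\sum_i(\bx_i\bx_i^\top)^2\big\|_2$ and a linear term scaling with $\max_i\|\bx_i\|_2^2$---are the characteristic signature of matrix Bernstein.
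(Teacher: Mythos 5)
Your proposal is correct and follows essentially the same route as the paper's own proof: the change of variables $\bv = \bw - \bw'$ reducing the pair supremum to a single $\ell_p$-ball of radius $2W$, the rewriting of the Rademacher sum as the quadratic form $\bv^\top \left(\sum_i \sigma_i \bx_i \bx_i^\top\right) \bv$ bounded via norm equivalence and the spectral norm, and the application of the expectation form of the matrix Bernstein inequality with $\mZ_i = \sigma_i \bx_i \bx_i^\top$, $\|\mZ_i\|_2 \leq \norm{\mX}_{2,\infty}^2$ and variance $\left\| \sum_i (\bx_i \bx_i^\top)^2 \right\|_2$. All constants and the case factor $d^{1-2/p}$ match the paper's derivation, so there is nothing further to flag.
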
 
The proof of~\Cref{lem:rademacher_complexity_linear_classifier_regression} is deferred to~\Cref{sec:appdx_rademacher_complexity_linear_classifier_regression}. 
As in the binary classification case presented in~\Cref{sec:binary_classification_loss}, we can relate the non-adversarial Rademacher complexity over $\cH\Delta\cH$ class to the spectral norm of data covariance matrix. 

\begin{theorem}[Adversarial Rademacher complexity for regression under linear hypothesis] 
\label{thm:adv_rademacher_complexity_linear_regression}
Let $\cH$ be the set of linear functions with bounded weights as defined in~\eqref{eq:hyp_class_linear_classif}. 
Then the following statement holds true for adversarial Rademacher complexity over $\cH\Delta \cH$ function class:
\begin{align*}
   &{\frakR}_{\hat\cD} (\tilde{\ell} \circ \cH \Delta \cH)
    \leq  \ {\frakR}_{\hat\cD}  ({\ell} \circ \cH \Delta \cH)   +   \frac{cW^2d\sqrt{   \log ( n)} }{\sqrt{n}} \left(    \epsilon \norm{\mX}_{2,\infty} \!\! + \! \sqrt{d} \epsilon^2 \right) 
    \! \cdot \!
    \begin{cases}
       \! 1, & \!\!\!\!\!\!\!\!\!\!\!\!\! 1 \leq p \leq 2 \\
       \! d^{1-2/p}, & \!\! p > 2
    \end{cases},
\end{align*}
where $\mX \in \mathbb{R}^{n\times d}$ is the data matrix and $i$-th row of $\mX$ is $\bx_i$.
Meanwhile, the following lower bound holds as well:
\begin{align*}
        &{\frakR}_{\hat\cD} (\tilde{\ell}\circ \cH \Delta \cH) \geq  {\frakR}_{\hat\cD} ({\ell}\circ \cH \Delta \cH) + \begin{cases}
        0, &  \hspace*{-1em} 1 \leq p \leq 2 \\
       \frac{4W^2}{n} (1-d^{1-2/p}) \mathbb{E}_{\sigma} \left\| \sum_{i=1}^n \sigma_i \bx_i \bx_i^\top \right\|_2, &  p > 2
    \end{cases} \, . 
    \end{align*}                                                              
\end{theorem}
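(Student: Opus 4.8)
The plan is to reduce both complexities to explicit quadratic forms in the gap vector $\bu \eqdef \bw - \bw'$, which ranges over $\{\norm{\bu}_p \le 2W\}$: the supremum over the pair $(\bw,\bw')$ with $\norm{\bw}_p,\norm{\bw'}_p \le W$ equals the supremum over $\norm{\bu}_p \le 2W$ (take $\bw=\bu/2,\bw'=-\bu/2$ for the reverse inclusion). For the quadratic loss $\ell(\ba,\bb)=\norm{\ba-\bb}^2$ the inner maximization has a closed form: by $\ell_\infty$--$\ell_1$ duality $\max_{\norm{\bdelta}_\infty\le\epsilon}\inprod{\bu}{\bdelta}=\epsilon\norm{\bu}_1$, so $\max_{\norm{\bdelta}_\infty\le\epsilon}(\inprod{\bu}{\bx_i+\bdelta})^2=(|\inprod{\bu}{\bx_i}|+\epsilon\norm{\bu}_1)^2$. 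Expanding the square yields the pointwise identity $\tilde\ell_i(\bu)=(\inprod{\bu}{\bx_i})^2+2\epsilon\norm{\bu}_1|\inprod{\bu}{\bx_i}|+\epsilon^2\norm{\bu}_1^2$, separating the non-adversarial term from a manifestly nonnegative residual. Writing $\mM_\sigma\eqdef\tfrac1n\sum_i\sigma_i\bx_i\bx_i^\top$, we have $\frakR_{\hat\cD}(\ell\circ\cH\Delta\cH)=\E_\sigma\sup_{\norm{\bu}_p\le2W}\bu^\top\mM_\sigma\bu$.

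For the upper bound I would use subadditivity of the supremum, $\sup(A+B)\le\sup A+\sup B$, to peel off $\frakR_{\hat\cD}(\ell\circ\cH\Delta\cH)$ and leave two residual Rademacher terms: a cross term $\E_\sigma\sup_\bu\tfrac{2\epsilon\norm{\bu}_1}{n}\sum_i\sigma_i|\inprod{\bu}{\bx_i}|$ and a constant-in-$i$ term $\E_\sigma\sup_\bu\tfrac{\epsilon^2\norm{\bu}_1^2}{n}\sum_i\sigma_i$. For the cross term I would bound $\norm{\bu}_1$ by its maximum over the ball, apply Ledoux--Talagrand contraction (the map $t\mapsto|t|$ is $1$-Lipschitz and vanishes at $0$) to drop the absolute value, and control $\E_\sigma\norm{\sum_i\sigma_i\bx_i}_2$ by a concentration/maximal inequality, producing the $\epsilon\norm{\mX}_{2,\infty}$ factor. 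The constant term reduces to $\E_\sigma(\sum_i\sigma_i)_+\le\sqrt{n}$ after pulling $\norm{\bu}_1^2$ to its maximum, giving the $\sqrt{d}\epsilon^2$ factor. Converting the $\ell_p$ constraint to an $\ell_2$ one (free for $1\le p\le2$, and costing $d^{1-2/p}$ on quadratic quantities for $p>2$, since $\norm{\bu}_2\le d^{1/2-1/p}\norm{\bu}_p$) produces the case split, while the $\sqrt{\log n}$ factor comes from the concentration step, entirely parallel to the classification bound of \Cref{thm:adv_complexity_binary_classif_linear_hyp}.

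For the lower bound I would exploit that every summand is odd in $\sigma$: writing $\Phi(\bu,\sigma)$ for the full adversarial inner expression, one has $\Phi(\bu,-\sigma)=-\Phi(\bu,\sigma)$, and $\bu^\top\mM_\sigma\bu$ is odd in $\sigma$ as well. Averaging over the equidistributed pair $(\sigma,-\sigma)$ and evaluating each supremum at the standard maximizer $\bar\bu(\sigma)$ of $\bu^\top\mM_\sigma\bu$ recovers $\frakR_{\hat\cD}(\ell\circ\cH\Delta\cH)$ plus the expected residual evaluated along $\bar\bu(\sigma)$. For $1\le p\le2$ this residual correlation term is nonnegative, giving $\frakR_{\hat\cD}(\tilde\ell\circ\cH\Delta\cH)\ge\frakR_{\hat\cD}(\ell\circ\cH\Delta\cH)$. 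For $p>2$ I would instead restrict to the smaller ball $\{\norm{\bu}_2\le2W\}\subseteq\{\norm{\bu}_p\le2W\}$, on which the same argument gives $\frakR_{\hat\cD}(\tilde\ell\circ\cH\Delta\cH)\ge\E_\sigma\sup_{\norm{\bu}_2\le2W}\bu^\top\mM_\sigma\bu=4W^2\E_\sigma(\lambda_{\max}(\mM_\sigma))_+$; combining this with $\frakR_{\hat\cD}(\ell\circ\cH\Delta\cH)\le d^{1-2/p}\E_\sigma\sup_{\norm{\bu}_2\le2W}\bu^\top\mM_\sigma\bu$ and $\E_\sigma(\lambda_{\max}(\mM_\sigma))_+\le\E_\sigma\norm{\mM_\sigma}_2$ (applied with the now-negative coefficient $1-d^{1-2/p}$) yields the stated additive correction $\tfrac{4W^2}{n}(1-d^{1-2/p})\E_\sigma\norm{\sum_i\sigma_i\bx_i\bx_i^\top}_2$.

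The main obstacle is the nonnegativity of the residual correlation term in the lower bound, i.e. that replacing $\ell$ by the larger $\tilde\ell$ does not decrease the $\cH\Delta\cH$ Rademacher complexity. This is delicate because the standard maximizer $\bar\bu(\sigma)$ depends on $\sigma$, so the sign-weighted residual $\tfrac1n\sum_i\sigma_i(2\epsilon\norm{\bar\bu}_1|\inprod{\bar\bu}{\bx_i}|+\epsilon^2\norm{\bar\bu}_1^2)$ is not termwise signed; its nonnegativity in expectation rests on the positive correlation between the direction maximizing $\sum_i\sigma_i(\inprod{\bu}{\bx_i})^2$ and the sign pattern $\sigma$, and must be argued through the $\sigma\leftrightarrow-\sigma$ symmetry exactly as in the classification lower bound and in the single-domain analysis of \cite{yin2019rademacher}. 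Everything else — the contraction step, the vector/matrix concentration, and the $\ell_p$-to-$\ell_2$ conversions — is routine bookkeeping mirroring \Cref{lem:rademacher_complexity_linear_classifier_regression} and \Cref{thm:adv_complexity_binary_classif_linear_hyp}.
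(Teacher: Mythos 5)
Your proposal is correct in substance, and it splits cleanly into two halves relative to the paper. The lower bound is essentially the paper's own argument: the paper likewise reduces to $\bv \eqdef \bw - \bw'$, invokes the exact inner maximum $(\epsilon\norm{\bv}_1 + |\bv^\top \bx_i|)^2$ (its Lemma~\ref{lem:solution_max_pb_delta}), evaluates at the scaled top eigenvector of $\mS(\bsigma)^2$ where $\mS(\bsigma) = \sum_i \sigma_i \bx_i \bx_i^\top$, and kills the residual $J(\bsigma)$ via the $\bsigma \leftrightarrow -\bsigma$ pairing of Lemma~\ref{lem:partition}. One precision you should import from the paper: the residual expectation is not merely ``nonnegative'' --- it is exactly zero, and the mechanism is that the evaluation point must be chosen \emph{even} in $\bsigma$. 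The top eigenvector of $\mS(\bsigma)^2$ has this property since $\mS(-\bsigma)^2 = \mS(\bsigma)^2$, whereas the $\ell_p$-ball argmax $\bar{\bu}(\bsigma)$ you propose to use for $1 \leq p \leq 2$ is \emph{not} even in $\bsigma$ (negating $\bsigma$ turns the maximizer of the quadratic form into its minimizer), so the oddness argument would break for it; this is why the $\ell_2$-ball eigenvector point is forced, and your $p>2$ bookkeeping --- restricting to $\cB_2(2W)$, paying $d^{1-2/p}$ on the non-adversarial side, and using $\E_{\bsigma}(\lambda_{\max})_+ \leq \E_{\bsigma}\norm{\cdot}_2$ with the now-negative coefficient --- matches the paper's. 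Your upper bound, by contrast, takes a genuinely different route: the paper never uses the closed form there; it isolates $\zeta_i(\bv) = \max_{\norm{\bdelta}_\infty \leq \epsilon} \bv^\top (2\bx_i \bdelta^\top + \bdelta\bdelta^\top)\bv$ and runs a $\rho$-covering of $\cB_p(2W)$ with Massart's lemma at radius $\rho = W/n$, and that covering is the sole source of the $d\sqrt{\log n}$ in the stated bound. Your route --- expand the exact maximum $(|\inprod{\bu}{\bx_i}| + \epsilon\norm{\bu}_1)^2$, peel by subadditivity, pull $\norm{\bu}_1$ out of each supremum (legitimate, since the remaining suprema are nonnegative as $\bu = 0$ is feasible), then Ledoux--Talagrand contraction on $t \mapsto |t|$ and Jensen to get $\E_{\bsigma}\norm{\sum_i \sigma_i \bx_i}_2 \leq \sqrt{n}\,\norm{\mX}_{2,\infty}$ --- is valid and in fact sharper: it generates no $\log n$ at all and smaller powers of $d$ (roughly $\sqrt{d}\,\epsilon \norm{\mX}_{2,\infty}/\sqrt{n}$ and $d\epsilon^2/\sqrt{n}$ at $p=2$), so your attribution of the $\sqrt{\log n}$ to ``the concentration step'' is a slip --- your argument simply does not produce that factor, and since your bound is smaller it still implies the stated inequality. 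What the paper's covering machinery buys is uniformity: it needs no closed-form inner solution and so transfers verbatim to the classification and two-layer ReLU settings; what your expansion buys, in the regression case where the closed form exists, is a short elementary proof with strictly better dimension and log dependence.
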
 
 The proof of~\Cref{thm:adv_rademacher_complexity_linear_regression} is deferred to~\Cref{sec:appdx_adv_rademacher_complexity_linear_regression_upper_bound,sec:appdx_adv_rademacher_complexity_linear_regression_lower_bound}. Few comments can be made concerning the above theorem. First, the upper bound of adversarial Rademacher complexity also depends quadratically on $W$ and adversarial budget $\epsilon$, and super-linearly on model dimension $d$.   Second, for the lower bound, we established the similar gap between ${\frakR}_{\hat\cD} (\tilde{\ell} \circ \cH \Delta \cH)$ and ${\frakR}_{\hat\cD} ({\ell} \circ \cH \Delta \cH)$ as in classification setting, which means the data diversity also affects hardness of adversarially robust domain adaptation in regression setting.

\subsection{Technical Novelty}
Here we explain our technical novelty compared to existing works regarding adversarial Rademacher complexity~\cite{yin2019rademacher,awasthi2020adversarial}. Taking classification setting for example, \cite{yin2019rademacher,awasthi2020adversarial} consider the Rademacher complexity over the loss class between model predictions and labels, i.e., $ \E[\sup_{\mathbf{w}}\frac{1}{n}\sum_{i=1}^n \sigma_i \min_{\|\boldsymbol{\delta}\|\leq \infty} \mathbf{w}^\top (\mathbf{x}+\boldsymbol{\delta})]$, where the inner minimization problem is \textbf{linear} in $\mathbf{w}$ and $\boldsymbol{\delta}$. We consider the loss between predictions among two models, hence the Rademacher complexity is $  \E[\sup_{\mathbf{w},\mathbf{w}'}\frac{1}{n}\sum_{i=1}^n \sigma_i \min_{\|\boldsymbol{\delta}\|\leq \infty} \mathbf{w}^\top (\mathbf{x}+\boldsymbol{\delta})\mathbf{w}'^\top (\mathbf{x}+\boldsymbol{\delta})]$, where the inner problem is quadratic in terms of $\mathbf{w}$ and $\boldsymbol{\delta}$. Hence, the existing techniques for upper bound and lower bound are not applicable. The heart of our proof is in the proof of results on lower bound. For proving lower bound, controlling the magnitude of Rademacher complexity with the inner problem being quadratic objective is significantly harder than linear objective.  We derive the (complicated) closed form solution to inner quadratic programming, and leverage the symmetric property of Rademacher random variables to avoid heavy computation.

\section{Extension to Neural Networks with ReLU Activation}
We next extend our analysis methods to more complicated neural network function class.
In this section, we will present our results for two-layer ReLU neural networks. 
That is, we consider the following hypothesis class
\begin{align}\label{eq: NN class}
     \cH \! \eqdef \! \left\{
    \begin{aligned}
         & \bx \mapsto  \ba^\top \text{ReLU}(\mW \bx): \\
         & \ba \in \mathbb{R}^m, \mW \in \mathbb{R}^{m\times d} : \|\ba\|_1\leq A, \norm{\bw (r)}_{p} \leq W
    \end{aligned}
    \right\},
\end{align}
where $\bw (r)^\top$ are the rows of $\mW$ for $r=1,\ldots, m$.
The following theorem establishes the relation between Adversarial Rademacher complexity and non-adversarial version in classification setting.
As in~\Cref{sec:binary_classification_loss}, we consider the same classification loss functions of the form $\ell (h_{\bw} (\bx), y) \eqdef \phi(y h_{\bw} (\bx))$ .
\begin{theorem}[Adversarial Rademacher complexity on ReLU neural network class]
    \label{thm:adv_complexity_binary_and_regression_classif_NN_hyp}
    Let $\cH$ be the set of two-layer ReLU neural networks with bounded weights as defined in~\eqref{eq: NN class}.
    Then, the following statement holds true for adversarial Rademacher complexity over $\cH\Delta \cH$ function class, with classification loss
{  \begin{align*}
    &{\frakR}_{\hat\cD} (\tilde{f} \circ \cH \Delta \cH) \leq O \Bigg( \frac{A^2 W^2}{n} \sqrt{\sum_{i=1}^n\pare{  \norm{  \bx_i  }_q^2+{d}^{2/q} \epsilon^2 }^2} \sqrt{  md\log (n)} + \frac{A^2 W^2}{n} \max_{i\in[n]} \norm{\bx_i}_q^2 + d^{2/q} \epsilon^2 \Bigg) \enspace.
\end{align*}}
\end{theorem}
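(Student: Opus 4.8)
The plan is to work directly with the class $\tilde f \circ \cH \Delta \cH$, since the Ledoux--Talagrand contraction already reduces $\frakR_{\hat\cD}(\tilde\ell\circ\cH\Delta\cH)$ to $L_\phi\,\frakR_{\hat\cD}(\tilde f\circ\cH\Delta\cH)$ exactly as in the linear case. Writing $h_{\bw}(\bx) = \ba^\top\mathrm{ReLU}(\mW\bx) = \sum_{r=1}^m a_r\,\mathrm{ReLU}(\bw(r)^\top\bx)$, the first step is a pointwise boundedness estimate for the adversarial product. Using that $\mathrm{ReLU}$ is $1$-Lipschitz with $\mathrm{ReLU}(t)\le|t|$, Hölder's inequality $|\bw(r)^\top\bz|\le\norm{\bw(r)}_p\norm{\bz}_q$ (with $1/p+1/q=1$), the row bound $\norm{\bw(r)}_p\le W$, and the outer bound $\norm{\ba}_1\le A$, I obtain $|h_{\bw}(\bx_i+\bdelta)|\le AW\norm{\bx_i+\bdelta}_q\le AW(\norm{\bx_i}_q+d^{1/q}\epsilon)$ uniformly over $\norm{\bdelta}_\infty\le\epsilon$, where I used $\norm{\bdelta}_q\le d^{1/q}\norm{\bdelta}_\infty$. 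Hence the inner objective satisfies $\bigl|\min_{\norm{\bdelta}_\infty\le\epsilon}h_{\bw}(\bx_i+\bdelta)h_{\bw'}(\bx_i+\bdelta)\bigr|\le A^2W^2(\norm{\bx_i}_q+d^{1/q}\epsilon)^2\le 2A^2W^2(\norm{\bx_i}_q^2+d^{2/q}\epsilon^2)$, which identifies $b_i := A^2W^2(\norm{\bx_i}_q^2+d^{2/q}\epsilon^2)$ as the per-sample magnitude that ultimately appears in the bound.

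With these magnitudes in hand I would bound the complexity by a covering argument over the parameter vector $\theta=(\ba,\mW,\ba',\mW')$. For fixed $\theta$, the quantity $Z_\theta := \frac1n\sum_{i=1}^n\sigma_i\min_{\bdelta}h_{\bw}(\bx_i+\bdelta)h_{\bw'}(\bx_i+\bdelta)$ is, over the Rademacher draws, a sum of independent mean-zero terms each bounded by $b_i/n$, hence sub-Gaussian with variance proxy $\frac1{n^2}\sum_i b_i^2$. The effective dimension of $\theta$ is $O(md)$, dominated by the two weight matrices $\mW,\mW'\in\R^{m\times d}$ (the outer vectors $\ba,\ba'$ being controlled through their $\ell_1$ balls), so an $\eta$-net of the admissible set has $\log$-cardinality $O(md\log(\cdot/\eta))$. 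Combining a sub-Gaussian maximal inequality (Massart/Dudley) over this net with the variance proxy produces a leading term of order $\frac1n\sqrt{\sum_i b_i^2}\,\sqrt{md\log(\cdot/\eta)}$; choosing $\eta\asymp 1/n$ turns the logarithm into $\log n$ and yields $\frac{A^2W^2}{n}\sqrt{\sum_i(\norm{\bx_i}_q^2+d^{2/q}\epsilon^2)^2}\sqrt{md\log n}$. The discretization error from replacing $\theta$ by its net point is controlled by the Lipschitz dependence of $Z_\theta$ on $\theta$ (the $\min$ over $\bdelta$ preserves Lipschitzness and the product of two uniformly bounded Lipschitz maps is Lipschitz), and with $\eta\asymp1/n$ this residual contributes the additive $\frac{A^2W^2}{n}(\max_i\norm{\bx_i}_q^2+d^{2/q}\epsilon^2)$ term.

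The main obstacle is the inner minimization $\min_{\norm{\bdelta}_\infty\le\epsilon}$ applied to the \emph{product} of two ReLU networks: unlike the single-network, linear-in-$(\bw,\bdelta)$ objective handled in \cite{yin2019rademacher,awasthi2020adversarial}, this objective is quadratic in the coupled variables, the two networks share a single perturbation $\bdelta$, and the $\mathrm{ReLU}$ nonlinearity precludes any closed form. The delicate point is to establish, uniformly over $\bdelta$, both the magnitude bound and the Lipschitz-in-parameter estimate that drive the covering step, \emph{without} relaxing the coupled minimum into a product of per-network extrema (which would corrupt the $\epsilon$-dependence). Once these two uniform estimates are secured, the boundedness estimate feeds the variance proxy and the Lipschitz estimate feeds the discretization error, and assembling the two contributions gives the stated bound.
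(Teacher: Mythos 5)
Your proposal is correct and follows essentially the same route as the paper's proof: the same pointwise magnitude bound $|\psi_i| \leq 2A^2W^2\bigl(\norm{\bx_i}_q^2 + d^{2/q}\epsilon^2\bigr)$ via H\"older, the $1$-Lipschitzness of ReLU and $\norm{\bdelta}_q \leq d^{1/q}\norm{\bdelta}_\infty$; a covering of $(\ba,\ba',\mW,\mW')$ of log-cardinality $O(md\log n)$ with Massart's lemma on the net (your sub-Gaussian maximal inequality with variance proxy $\tfrac{1}{n^2}\sum_i b_i^2$ is the same estimate); and a discretization error controlled by a uniform-in-$\bdelta$ Lipschitz bound at resolution $\asymp 1/n$, which the paper implements one-sidedly by evaluating the coupled objective at the covering point's minimizer $\bdelta^*_{c,i}$ — exactly the device your warning about not decoupling the shared perturbation calls for.
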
   

  
The proof of~\Cref{thm:adv_complexity_binary_and_regression_classif_NN_hyp} is deferred to~\Cref{sec:fixed_proof_thm_adv_complexity_binary_and_regression_classif_NN_hyp}. 
As we can see from the above theorem, we get the similar upper bound for ReLU neural network class to what we showed in the linear model case. The upper bound of adversarial Rademacher can be bounded by non-adversarial version plus terms depending on the norm of each layer, and the norm of data points. We leave the lower bound analysis as promising future works.

\section{Adversarial Training Helps Transfer to Different Domain}\label{sec:transfer learning}
Even though robust risk is provably harder to transfer in domain adaptation, adversarially robust model does bring benefits, if we consider standard risk on target domain as criteria. 
In this section, we will provide a positive result connecting standard ERM learning and adversarially robust learning.
As observed by prior works~\cite{salman2020adversarially,deng2021adversarial}, if a model is adversarially trained on the source domain, then its standard accuracy on target domain is sometimes better than if it had been fitted via vanilla ERM on source domain.
In this section, we try to explain this phenomena from adversarially robust domain adaptation perspective. 
We found that, when adversarial budget is large enough, \emph{small source adversarial risk almost guarantees the small target domain standard risk}. 
First, we need to introduce the following optimization problem.
\begin{definition}
Let $\mathbf{p}$ and $\mathbf{p}'$ be two vectors on the $N$-dimensional simplex and let $\boldsymbol{\ell}$ be a 0-1 vector.
Let also $\Lambda$ be an arbitrary subset of $[N] \eqdef \{1, \ldots, N\}$. 
The \emph{Subset Sum Problem with Structural Objective} can be defined as solving the following combinatorial optimization problem:  
\begin{align*}
    \left\lbrace 
    \begin{aligned}
        \min_{\tilde{\boldsymbol{\ell}} \in \{0,1\}^N} \quad & \left| \mathbf{p}^\top \tilde{\boldsymbol{\ell}} - \mathbf{p}'^\top  \boldsymbol{\ell} \right| \\
        \textrm{s.t.} \quad & \tilde{{\ell}}_i = {\ell}_i,  \  \forall \ i \in [N] \setminus \Lambda \\
    \end{aligned}
    \right. \enspace.
\end{align*}
We denote its optimal value as $V^* (\mathbf{p}',\mathbf{p},\boldsymbol{\ell},\Lambda)$.
\end{definition} 
 
The above problem is a variant of Subset Sum Problem~\cite{hartmanis1982computers}, which is also \emph{NP-complete}. 
We look for a subset of coordinates of a simplex vector $\mathbf{p}$, such that their sum is closest to a given goal. 
The given goal has special structure: it is defined as sum of a subset of coordinates in another simplex vector $\mathbf{p}'$. 
If the constraint set $\Lambda_\epsilon$ has more indices, the optimal value will be smaller since we can determine the value on more coordinates of $\tilde{\boldsymbol{\ell}}$. 
In the following lemma, we explain how this combinatorial measure helps us to connect adversarially robust and standard risks for the binary classification task.
\begin{lemma}~\label{lem:max_standard_risk} 
Consider binary classification task, with sign linear classifier class $\cH=\{h_\bw: h_\bw = \sign (\bw^\top \bx), \|\bw\|_p \leq W\}$ and 0-1 loss function $\phi(x,y) = \frac{1}{2}|x-y|$. 
Assume all domains share the same labeling function $y(\bx)\in\{-1,1\}$. The following statement holds for any $\cT'$:
\begin{align*}
     \cR_{\cT'}(h_\mathbf{w},y) \leq \widetilde\cR^{label}_{\cT}(h_\mathbf{w},y) + V^* (\mathbf{p}',\mathbf{p},\boldsymbol{\ell},\Lambda_\epsilon),
\end{align*}
where $\boldsymbol{\ell}$ is the loss vector such that $\ell_i = \frac{1}{2}|\sign(\bw^\top \bx_i) - y(\bx_i)|$, with $\bx_i \in \cX$.
The vectors $\mathbf{p}$, $\mathbf{p}'$ are probability mass vectors of $\cT$ and $\cT'$, i.e., $\mathbf{p}(\bx) = \mathbb{P}_{X\sim \cT} (X=\bx), \bx \in \cX$ and $\mathbf{p}'(\bx) = \mathbb{P}_{X\sim \cT'} (X=\bx), \bx \in \cX$
Moreover, $\Lambda_\epsilon = \{i: |\bw^\top \bx_i| \leq \epsilon \|\bw\|_1, \bx_i \in \cX, \forall \bw, \|\bw\|_p \leq W\}$. 
\end{lemma}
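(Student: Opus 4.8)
The plan is to reduce the lemma to the combinatorial structure encoded in the Subset Sum Problem by first rewriting both risks as inner products against the simplex vectors $\mathbf{p}$ and $\mathbf{p}'$, and then recognizing the adversarial loss vector as a \emph{feasible} point of the minimization defining $V^*$ that, moreover, dominates the optimizer coordinatewise. Throughout I treat $\cX = \{\bx_1, \ldots, \bx_N\}$ as the discrete support on which $\mathbf{p}$ and $\mathbf{p}'$ live, so that $\cR_{\cT'}(h_\bw, y) = \mathbf{p}'^\top \boldsymbol{\ell}$ with $\ell_i = \tfrac{1}{2}|\sign(\bw^\top \bx_i) - y(\bx_i)|$, and likewise the target robust risk is an inner product $\mathbf{p}^\top \tilde{\boldsymbol{\ell}}$ against a suitable $0$--$1$ vector $\tilde{\boldsymbol{\ell}}$ still to be identified.

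First I would pin down the adversarial loss coordinatewise. For the sign classifier under an $\ell_\infty$ budget, H\"older's inequality gives $\max_{\|\bdelta\|_\infty \leq \epsilon} \bw^\top \bdelta = \epsilon\|\bw\|_1$, with both the maximizing and minimizing perturbations attainable (take $\delta_j = \pm \epsilon\, \sign(w_j)$). Hence for a fixed $\bx_i$ the adversary can drive $\bw^\top(\bx_i + \bdelta)$ to either side of zero precisely when $|\bw^\top \bx_i| \leq \epsilon \|\bw\|_1$, i.e.\ exactly when $i \in \Lambda_\epsilon$. Consequently the per-example robust $0$--$1$ loss $\tilde\ell_i = \max_{\|\bdelta\|_\infty \leq \epsilon} \tfrac{1}{2}|\sign(\bw^\top(\bx_i+\bdelta)) - y(\bx_i)|$ equals $1$ for every $i \in \Lambda_\epsilon$ (the point is either already misclassified or can be flipped into a mistake) and equals the clean loss $\ell_i$ for $i \notin \Lambda_\epsilon$ (the margin is too wide to cross). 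Writing $\tilde{\boldsymbol{\ell}}^{\mathrm{adv}}$ for the resulting vector, we obtain $\widetilde{\cR}^{label}_\cT(h_\bw, y) = \mathbf{p}^\top \tilde{\boldsymbol{\ell}}^{\mathrm{adv}}$, and by construction $\tilde{\boldsymbol{\ell}}^{\mathrm{adv}}$ agrees with $\boldsymbol{\ell}$ on $[N] \setminus \Lambda_\epsilon$.

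The second step is the combinatorial comparison. Since $\tilde{\boldsymbol{\ell}}^{\mathrm{adv}}$ matches $\boldsymbol{\ell}$ off $\Lambda_\epsilon$, it is feasible for the Subset Sum Problem with $\Lambda = \Lambda_\epsilon$, so if $\tilde{\boldsymbol{\ell}}^\star$ denotes the optimizer then $|\mathbf{p}^\top \tilde{\boldsymbol{\ell}}^\star - \mathbf{p}'^\top \boldsymbol{\ell}| = V^*(\mathbf{p}',\mathbf{p},\boldsymbol{\ell},\Lambda_\epsilon)$. Dropping the absolute value in the favorable direction yields $\mathbf{p}'^\top \boldsymbol{\ell} \leq \mathbf{p}^\top \tilde{\boldsymbol{\ell}}^\star + V^*$. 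Finally, because $\tilde{\boldsymbol{\ell}}^\star$ and $\tilde{\boldsymbol{\ell}}^{\mathrm{adv}}$ coincide with $\boldsymbol{\ell}$ off $\Lambda_\epsilon$ and are $\{0,1\}$-valued, while $\tilde{\boldsymbol{\ell}}^{\mathrm{adv}}$ is identically $1$ on $\Lambda_\epsilon$, one has the coordinatewise domination $\tilde{\boldsymbol{\ell}}^\star \leq \tilde{\boldsymbol{\ell}}^{\mathrm{adv}}$; as $\mathbf{p} \geq 0$ this gives $\mathbf{p}^\top \tilde{\boldsymbol{\ell}}^\star \leq \mathbf{p}^\top \tilde{\boldsymbol{\ell}}^{\mathrm{adv}} = \widetilde{\cR}^{label}_\cT(h_\bw, y)$. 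Chaining the two inequalities produces $\cR_{\cT'}(h_\bw, y) = \mathbf{p}'^\top \boldsymbol{\ell} \leq \widetilde{\cR}^{label}_\cT(h_\bw, y) + V^*$, which is the claim.

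I expect the only delicate point to be the first step, namely the exact identification of $\Lambda_\epsilon$ as the attackable set: one must argue that the H\"older bound $\epsilon\|\bw\|_1$ is tight (so that every in-margin point really can be flipped), and handle the measure-zero ties at $\bw^\top(\bx_i+\bdelta)=0$ consistently with the $\leq$ appearing in the definition of $\Lambda_\epsilon$. Once the adversarial loss vector is recognized as a feasible, coordinatewise-maximal completion of $\boldsymbol{\ell}$ on $\Lambda_\epsilon$, the remainder is the elementary ``feasible point dominates the optimizer'' argument above, and no further estimation is needed.
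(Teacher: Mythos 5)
Your proof is correct, and it rests on the same core reduction as the paper's: both arguments write $\cR_{\cT'}(h_\bw,y)=\mathbf{p}'^\top\boldsymbol{\ell}$, use tightness of the H\"older bound $\max_{\|\bdelta\|_\infty\leq\epsilon}\bw^\top\bdelta=\epsilon\|\bw\|_1$ to show the per-point $0$--$1$ loss can be driven to either value exactly on the in-margin set $\Lambda_\epsilon$ and is pinned to $\ell_i$ elsewhere, and thereby tie the adversary's power to the constraint set of the Subset Sum Problem. Where you genuinely diverge is in how the bound is closed. The paper introduces the perturbed risk $\cR_{\cT}(h_\bw,y,\bdelta)=\mathbf{p}^\top\tilde{\boldsymbol{\ell}}(\{\bdelta_i\})$, uses $\widetilde\cR^{label}_{\cT}(h_\bw,y)\geq\cR_{\cT}(h_\bw,y,\bdelta)$ for every admissible perturbation family, minimizes over $\{\bdelta_i\}$, and then identifies the set of reachable perturbed loss vectors with the feasible set of the combinatorial problem, so that the minimum equals $V^*(\mathbf{p}',\mathbf{p},\boldsymbol{\ell},\Lambda_\epsilon)$. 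You never minimize over perturbations at all: you take the optimizer $\tilde{\boldsymbol{\ell}}^\star$, peel off the absolute value to get $\mathbf{p}'^\top\boldsymbol{\ell}\leq\mathbf{p}^\top\tilde{\boldsymbol{\ell}}^\star+V^*$, and dominate $\tilde{\boldsymbol{\ell}}^\star$ coordinatewise by the robust loss vector $\tilde{\boldsymbol{\ell}}^{\mathrm{adv}}$ (equal to $1$ on $\Lambda_\epsilon$, equal to $\ell_i$ off it), concluding via $\mathbf{p}\geq 0$. This buys two things: it avoids the paper's slightly delicate step of inserting an absolute value inside a minimum over perturbations, and it only requires the one-sided fact that every index in $\Lambda_\epsilon$ is attackable, rather than an exact identification of the reachable set with the feasible set --- which also makes your argument insensitive to the awkward $\forall\bw$ quantifier in the lemma's definition of $\Lambda_\epsilon$ (under the literal reading, $\Lambda_\epsilon$ is only a subset of the attackable indices for the fixed $\bw$, and your domination $\tilde{\boldsymbol{\ell}}^\star\leq\tilde{\boldsymbol{\ell}}^{\mathrm{adv}}$ survives because the robust loss always dominates the clean loss coordinatewise). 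Finally, the boundary case you flag, $|\bw^\top\bx_i|=\epsilon\|\bw\|_1$, where the adversary can only reach $\bw^\top(\bx_i+\bdelta)=0$ and $\sign(0)$ yields loss $\tfrac{1}{2}\notin\{0,1\}$, is indeed a real tie that the paper's proof glosses over in exactly the same way (its displayed flip only achieves $\bw^\top(\bx+\bdelta^*)y(\bx)\leq 0$, not a strict sign change); it is harmless under a sign convention with $\sign(0)\in\{-1,+1\}$ chosen adversarially, or by defining $\Lambda_\epsilon$ with strict inequality, and your proof needs no further repair beyond that remark.
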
 
The corresponding proof is given in Appendix~\ref{subsec:pf_lem_max_standard_risk}. 
Lemma~\ref{lem:max_standard_risk} shows that, the standard risk on domain $\cT'$ can be bounded by robust risk on domain $\cT$, plus the quantity controlled by $\epsilon$. 
Since the set $\Lambda_\epsilon$ stores all indices $i$ such that we can choose to flip $\tilde{\ell}_i$'s value between $0$ and $1$,
then if we have larger adversarial budget $\epsilon$, there will be more indices in $\Lambda_\epsilon$, which means there are more coordinates in $\tilde{\boldsymbol{\ell}}$ we can play with, and hence smaller value of $V^*$.  

\paragraph{Comparison with ERM model}

Lemma~\ref{lem:max_standard_risk} also apply to standard risk on $\cT$, which means $\epsilon = 0$:
\begin{align*}
     \cR_{\cT'}(h_\mathbf{w},y) \leq \cR_{\cT}(h_\mathbf{w},y) + V^* (\mathbf{p}',\mathbf{p},\boldsymbol{\ell},\Lambda_0).
\end{align*}
In this case, $V^*$ become large, since for any $\epsilon >0$, we have $|\Lambda_\epsilon| \geq |\Lambda_0|$ holding. It turns out that, a small source (standard) risk may not imply a small target risk. Hence the ERM model will yield looser generalization guarantee to different domain, than robust ERM model.

\section{Empirical Results}

In this section, we verify the theoretical implications through empirical studies on a multi-domain dataset, \textsc{Digits}~\cite{ganin2015unsupervised}. 
\textsc{Digits} has $28\times28$ images and includes $5$ different domains: MNIST \citep{lecun1998gradientbased}, SVHN \citep{netzer2011reading}, USPS \citep{hull1994database}, SynthDigits \citep{ganin2015unsupervised}, and MNIST-M \citep{ganin2015unsupervised}.
All domain datasets are subsampled to contain 7438 images to eliminate the effect of number of samples in generalization. Given a model $h_\bw$ parameterized by $\bw$, we consider two training methods:
\begin{table*}[t]
    \centering 
    \small
    \setlength\tabcolsep{2 pt}
    \caption{Transferred standard (SA $\%$) and robust (RA $\%$) accuracies tested on different domains from the \textsc{Digits} datasets. $\Delta$ indicates the difference between the train and test domain accuracy. }
    \label{tbl:trans_acc}
    \begin{tabular}{cc|*{4}{*{2}{c}|}*{2}{c}}
        \toprule
           &    Target   & \multicolumn{2}{c}{MNIST} & \multicolumn{2}{c}{MNIST-M} & \multicolumn{2}{c}{SVHN} & \multicolumn{2}{c}{SynthDigits} & \multicolumn{2}{c}{USPS} \\
         Source&      & SA   & RA   & SA   & RA   & SA   & RA   & SA   & RA   & SA   & RA   \\

         \midrule
         &         & \multicolumn{10}{c}{\textbf{Standardly-trained models}} \\
         \multirow{2}{*}{MNIST} &    Acc   & $\textcolor{gray}{98.8}$ & $\textcolor{gray}{95.9}$ & $34.7$ & $15.3$ & $16.0$ & $ 5.9$ & $25.0$ & $ 7.8$ & $49.9$ & $27.9$ \\
         & $\Delta$ & $\textcolor{gray}{+0.0}$ & $\textcolor{gray}{+0.0}$ & $-64.1$ & $\mathbf{-80.6}$ & $-82.8$ & $\mathbf{-90.1}$ & $-73.7$ & $\mathbf{-88.2}$ & $-48.9$ & $\mathbf{-68.0}$ \\
        \multirow{2}{*}{MNIST-M} &    Acc   & $97.2$ & $76.7$ & $\textcolor{gray}{94.1}$ & $\textcolor{gray}{28.5}$ & $33.9$ & $ 0.0$ & $49.1$ & $ 1.7$ & $63.6$ & $ 5.3$ \\
         & $\Delta$ & $+3.1$ & $\mathbf{+48.2}$ & $\textcolor{gray}{+0.0}$ & $\textcolor{gray}{+0.0}$ & $\mathbf{-60.2}$ & $-28.5$ & $\mathbf{-45.0}$ & $-26.8$ & $\mathbf{-30.5}$ & $-23.2$ \\
        \multirow{2}{*}{SVHN} &    Acc   & $59.6$ & $32.7$ & $47.2$ & $ 5.2$ & $\textcolor{gray}{87.5}$ & $\textcolor{gray}{ 6.0}$ & $84.3$ & $28.4$ & $64.5$ & $22.5$ \\
         & $\Delta$ & $\mathbf{-27.9}$ & $+26.7$ & $\mathbf{-40.3}$ & $-0.8$ & $\textcolor{gray}{+0.0}$ & $\textcolor{gray}{+0.0}$ & $-3.2$ & $\mathbf{+22.4}$ & $\mathbf{-22.9}$ & $+16.5$ \\
        \multirow{2}{*}{SynthDigits} &    Acc   & $83.6$ & $57.0$ & $57.9$ & $ 9.1$ & $73.0$ & $ 3.8$ & $\textcolor{gray}{96.1}$ & $\textcolor{gray}{59.8}$ & $82.7$ & $40.4$ \\
         & $\Delta$ & $\mathbf{-12.5}$ & $-2.8$ & $-38.2$ & $\mathbf{-50.7}$ & $-23.2$ & $\mathbf{-56.1}$ & $\textcolor{gray}{+0.0}$ & $\textcolor{gray}{+0.0}$ & $-13.4$ & $\mathbf{-19.4}$ \\
        \multirow{2}{*}{USPS} &    Acc   & $67.0$ & $54.3$ & $25.8$ & $13.1$ & $ 9.6$ & $ 5.1$ & $31.2$ & $13.1$ & $\textcolor{gray}{98.7}$ & $\textcolor{gray}{94.1}$ \\
         & $\Delta$ & $-31.7$ & $\mathbf{-39.8}$ & $-73.0$ & $\mathbf{-80.9}$ & $-89.2$ & $-89.0$ & $-67.6$ & $\mathbf{-81.0}$ & $\textcolor{gray}{+0.0}$ & $\textcolor{gray}{+0.0}$ \\
         
        \midrule
         &         & \multicolumn{10}{c}{\textbf{Adversarially-trained models}} \\
         \multirow{2}{*}{MNIST} &    Acc   & $\textcolor{gray}{99.0}$ & $\textcolor{gray}{98.3}$ & $49.5$ & $31.9$ & $19.4$ & $14.6$ & $32.2$ & $17.3$ & $59.7$ & $38.8$ \\
         & $\Delta$ & $\textcolor{gray}{+0.0}$ & $\textcolor{gray}{+0.0}$ & $-49.5$ & $\mathbf{-66.4}$ & $-79.6$ & $\mathbf{-83.7}$ & $-66.9$ & $\mathbf{-81.0}$ & $-39.4$ & $\mathbf{-59.5}$ \\
        \multirow{2}{*}{MNIST-M} &    Acc   & $96.9$ & $94.5$ & $\textcolor{gray}{93.0}$ & $\textcolor{gray}{76.8}$ & $26.9$ & $11.5$ & $46.4$ & $25.4$ & $66.5$ & $46.8$ \\
         & $\Delta$ & $+4.0$ & $\mathbf{+17.7}$ & $\textcolor{gray}{+0.0}$ & $\textcolor{gray}{+0.0}$ & $\mathbf{-66.1}$ & $-65.3$ & $-46.5$ & $\mathbf{-51.4}$ & $-26.5$ & $\mathbf{-30.0}$ \\
        \multirow{2}{*}{SVHN} &    Acc   & $56.2$ & $46.6$ & $43.3$ & $18.0$ & $\textcolor{gray}{76.2}$ & $\textcolor{gray}{42.6}$ & $78.9$ & $60.2$ & $66.8$ & $51.3$ \\
         & $\Delta$ & $\mathbf{-20.0}$ & $+4.0$ & $\mathbf{-32.9}$ & $-24.6$ & $\textcolor{gray}{+0.0}$ & $\textcolor{gray}{+0.0}$ & $+2.7$ & $\mathbf{+17.6}$ & $\mathbf{-9.3}$ & $+8.7$ \\
        \multirow{2}{*}{SynthDigits} &    Acc   & $84.9$ & $75.6$ & $58.0$ & $25.8$ & $64.1$ & $17.9$ & $\textcolor{gray}{95.6}$ & $\textcolor{gray}{84.8}$ & $82.6$ & $64.8$ \\
         & $\Delta$ & $\mathbf{-10.6}$ & $-9.2$ & $-37.6$ & $\mathbf{-59.1}$ & $-31.5$ & $\mathbf{-66.9}$ & $\textcolor{gray}{+0.0}$ & $\textcolor{gray}{+0.0}$ & $-13.0$ & $\mathbf{-20.0}$ \\
        \multirow{2}{*}{USPS} &    Acc   & $72.3$ & $65.6$ & $24.1$ & $15.4$ & $ 9.7$ & $ 5.3$ & $30.1$ & $16.8$ & $\textcolor{gray}{98.9}$ & $\textcolor{gray}{97.5}$ \\
         & $\Delta$ & $-26.6$ & $\mathbf{-31.9}$ & $-74.8$ & $\mathbf{-82.2}$ & $-89.1$ & $\mathbf{-92.3}$ & $-68.8$ & $\mathbf{-80.8}$ & $\textcolor{gray}{+0.0}$ & $\textcolor{gray}{+0.0}$ \\

         \bottomrule
    \end{tabular}  
\end{table*} 

\begin{figure} 
     \centering
    \includegraphics[width=0.45\textwidth]{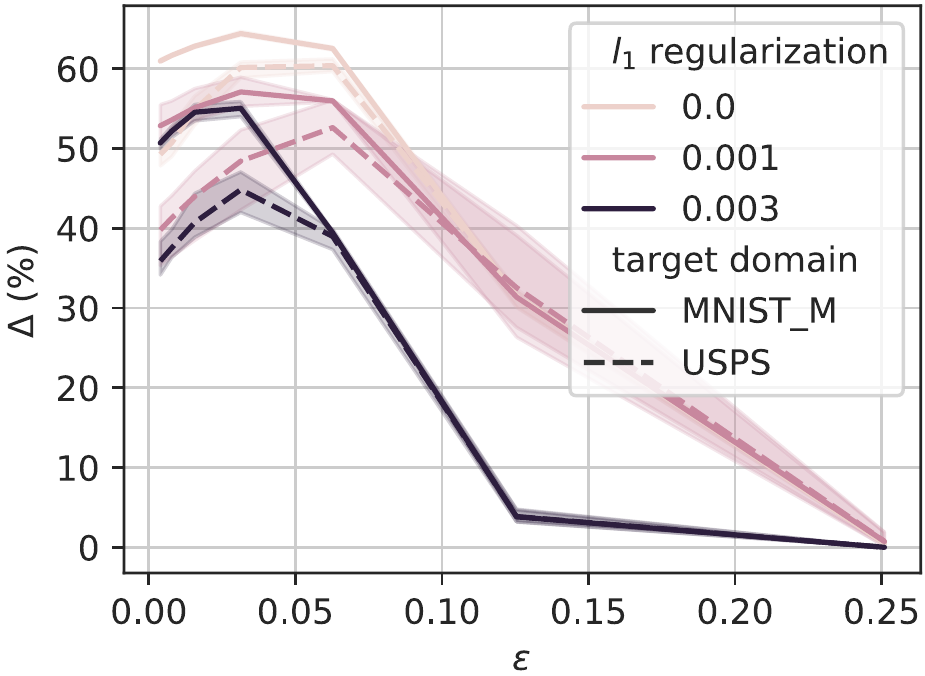} 
     \vspace{-0.15in}
    \caption{Robust accuracy drops ($\Delta$) by varying the $\ell_1$ regularization intensity ($\mu$) and $\ell_\infty$ perturbation $\epsilon$. A linear classifier is adversarially trained on the MNIST and tested on target domains.} 
    \label{fig:l1_reg_RA} 
    \vspace{-0.1in}
\end{figure}
\begin{itemize}
    \item \emph{Adversarial Training}
\end{itemize}
\begin{equation}
    \label{eq:exp_at_loss}
    \min_{\bw} \frac{1}{n} \sum\nolimits_{i=1}^n \max_{\norm{\boldsymbol{\delta}}_{\infty} \le \epsilon} \ell(h_{\bw} (\bx_i + \boldsymbol{\delta}), y(\bx_i)) \enspace, 
\end{equation}

\begin{itemize}
    \item \emph{Standard Training}
\end{itemize}
\begin{equation}
    \label{eq:exp_at_loss}
    \min_{\bw} \frac{1}{n} \sum\nolimits_{i=1}^n \ell(h_{\bw} (\bx_i), y(\bx_i)) \enspace.
\end{equation}


To solve the inner maximization in \eqref{eq:exp_at_loss}, we leverage $k$-step PGD (projected gradient descent) attack \citep{madry2018deep} with a constant noise magnitude $\epsilon$.
Following~\citep{madry2018deep}, we use $\epsilon=8/255$, $k=7$, and attack inner-loop step size $2/255$, for training, and adversarial test. 
Then we use Adam to minimize the losses with $100$ epochs and learning rate of $10^{-2}$ decaying in a cosine manner.
We evaluate the model performance by: (1) \underline{standard accuracy} (SA): classification accuracy on the clean test set; and (2) \underline{robust accuracy} (RA): classification accuracy on adversarial images perturbed from the original test set.

\textbf{How does adversarial robustness transfer arcoss domains?} 
In this experiments, we use a convolutional network whose architecture is elaborated in Appendix~\ref{sec:app:exp}.
We report the transfer accuracy in Table~\ref{tbl:trans_acc} where models are trained on source domain (first column in each row) and tested on different target domains (the rest columns), as well as the difference between source SA/RA and target SA/RA. 
The experiment has the following implications:
(1) We observe that transfer difference $\Delta$ is more significant on RA than SA. For example, for the model trained on MNIST dataset, no matter trained standardly or adversarially, their testing RAs on all other domains drop dramatically than SAs.  It implies that adversarially robust domain adaptation is harder than standard domain adaptation, as illustrated by Theorem~\ref{thm:adv_complexity_binary_classif_linear_hyp}.
(2) The models trained on complicated dataset may gain higher robust accuracy at simple dataset, e.g., SVHN $\rightarrow$ \{MNIST, SynthDigits and USPS\} and MNIST-M $\rightarrow$ MNIST.
The increase can be attributed to that the source domain has more complicated features and thus more robust features are learnt. However, exploring the reason behind this interesting phenomena is beyond the scope of this paper.

\textbf{Adversarial training helps domain adaptation.}
We can see from~\Cref{tbl:trans_acc} that, sometimes when models are adversarial trained on simple dataset (e.g., MNIST and SynthDigits dataset), it is noticeable the standard accuracy on other datasets are improved.  For example, if we do adversarial training on MNIST dataset, we achieve significantly higher SA on other dataset, than standard trained model on MNIST. The same phenomena happens when we choose SynthDigits or USPS as source domain. Such advantages are consistent with our Lemma~\ref{lem:max_standard_risk}.



\textbf{Does $\ell_1$ regularization help adversarial transfer?}
Our Theorem~\ref{thm:adv_complexity_binary_classif_linear_hyp} shows that the adversarial Rademacher complexity over $\cH \Delta \cH$ class is suppressed the most when the $\ell_1$-norm of the model parameters is controlled.
To empirically investigate the relation, we consider a linear model on vectorized images $\bx$ and solve the following $\ell_1$-regularized problem: 
$\min_{\bw} \frac{1}{n} \sum_{i=1}^n \max_{\norm{\boldsymbol{\delta}}_{\infty} \le \epsilon} \ell(h_{\bw} (\bx_i + \boldsymbol{\delta}), y(\bx_i)) + \mu \norm{\bw}_1,$
where $\mu \ge 0$ is the regularization term.
In Figure~\ref{fig:l1_reg_RA}, we present the drops of robust accuracy from source domain to target domain $\text{RA}_{S} - \text{RA}_{T}$, regarding values of $\mu$. 
Consistent with our theoretical results,  increasing $\ell_1$ regularization ($\mu > 0$) can reduce the transfer accuracy drops on different level of adversarial attacks.

\section{Conclusion}
In this paper we propose and analyze the adversarial Rademacher complexity over $\cH \Delta \cH$ class, which is proven to be the key factor controlling the generalization of adversarially robust risk to different domains. 
We theoretically explain why adversarial domain adaptation is harder than standard domain adaptation. 
We also characterize the standard accuracy of a given model on any target domain, using its adversarial accuracy on the source domain, which helps explaining the recent observation regarding the superiority of adversarially training in standard domain adaptation.
\section*{Acknowledgement}
This work was supported in part by NSF grant 1956276.

\bibliographystyle{plain}
\bibliography{references}

\appendix
\onecolumn

\section{Useful lemmas}
In this section, we present necessary lemmas that are used further in the proof of our main results.
\subsection{Matrix concentration inequality}

\begin{theorem}[Matrix Bernstein inequality, Thm 6.1.1 of~\cite{tropp2015introduction}]
    \label{thm:matrix_bernstein_inequality}
    Let us denote by $\norm{.}_2$ the spectral norm of matrix.
    Consider a finite sequence of $n$ independent, random matrices $\mZ_i$ with common dimension $d_1 \times d_2$.
    Assume that 
    \begin{equation*}
        \EE{\mZ_i} = 0 \quad \text{and} \quad \norm{\mZ_i}_2 \leq L, \quad \forall i \in [n] \enspace.
    \end{equation*}
    Let $\mY \eqdef \sum_{i=1}^n \mZ_i$. Then, 
    \begin{equation}
    \label{eq:matrix_bernstein_inequality}
        \EE{\norm{\mY}_2} = \EE{\norm{\sum_{i=1}^n \mZ_i}_2} \leq \sqrt{2 \Var (\mY) \log (d_1 + d_2)} + \frac{1}{3} L \log (d_1 + d_2) \enspace,
    \end{equation}
    where the matrix variance is given by 
    \begin{align*}
        \Var(\mY) \eqdef& \max \left\{ \norm{\EE{\mY \mY^\top}}_2, \norm{\EE{\mY^\top \mY}}_2 \right\} \\
        =& \max \left\{ \norm{\sum_{i=1}^n \EE{\mZ_i \mZ_i^\top}}_2, \norm{\sum_{i=1}^n \EE{\mZ_i^\top \mZ_i}}_2 \right\} \, .
    \end{align*}
\end{theorem}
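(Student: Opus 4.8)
The plan is to follow the matrix Laplace-transform (Ahlswede--Winter, refined by Tropp) method. Since the $\mZ_i$ are rectangular $d_1 \times d_2$ matrices, the first step is to pass to the Hermitian dilation $\mathcal{H}(\mZ) = \bigl[\begin{smallmatrix} 0 & \mZ \\ \mZ^\top & 0 \end{smallmatrix}\bigr]$, which is $(d_1 + d_2) \times (d_1 + d_2)$, self-adjoint, and satisfies $\eigmax(\mathcal{H}(\mZ)) = \norm{\mZ}_2$ together with $\mathcal{H}(\mZ)^2 = \diag(\mZ \mZ^\top, \mZ^\top \mZ)$. Writing $\mathbf{S} \eqdef \sum_{i=1}^n \mathcal{H}(\mZ_i) = \mathcal{H}(\mY)$, this identifies $\norm{\mY}_2 = \eigmax(\mathbf{S})$ and $\norm{\sum_i \E \mathcal{H}(\mZ_i)^2}_2 = \Var(\mY)$, so it suffices to bound $\E\,\eigmax(\mathbf{S})$ for zero-mean Hermitian summands $\mathcal{H}(\mZ_i)$, each bounded in spectral norm by $L$, living in dimension $d \eqdef d_1 + d_2$.

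Next I would invoke the matrix Laplace-transform bound $\E\,\eigmax(\mathbf{S}) \leq \inf_{\theta > 0} \frac{1}{\theta} \log \E\,\trace \exp(\theta \mathbf{S})$, which follows from the pointwise inequality $\eigmax(\mathbf{A}) \leq \log \trace \exp(\mathbf{A})$ and Jensen's inequality applied to the concave $\log$. The central and deepest step is the subadditivity of the matrix cumulant generating function: by Lieb's concavity theorem (concavity of $\mathbf{A} \mapsto \trace \exp(\mathbf{H} + \log \mathbf{A})$) combined with independence and the tower property of conditional expectation, one obtains $\E\,\trace \exp(\theta \mathbf{S}) \leq \trace \exp\bigl( \sum_{i=1}^n \log \E \exp(\theta \mathcal{H}(\mZ_i)) \bigr)$.

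Then I would control each summand via the Bernstein moment bound. Using $\E\,\mathcal{H}(\mZ_i) = 0$ and $\norm{\mathcal{H}(\mZ_i)}_2 \leq L$, the scalar estimate $e^z \leq 1 + z + \frac{z^2/2}{1 - |z|/3}$ lifts to the operator inequality $\exp(\theta \mathcal{H}(\mZ_i)) \preceq \mathbf{I} + \theta \mathcal{H}(\mZ_i) + g(\theta) \mathcal{H}(\mZ_i)^2$ for $0 < \theta < 3/L$, with $g(\theta) = \frac{\theta^2/2}{1 - \theta L/3}$. Taking expectations and using $\mathbf{I} + \mathbf{M} \preceq \exp(\mathbf{M})$ gives $\E \exp(\theta \mathcal{H}(\mZ_i)) \preceq \exp\bigl( g(\theta)\, \E\,\mathcal{H}(\mZ_i)^2 \bigr)$, so by operator monotonicity of $\log$ we get $\log \E \exp(\theta \mathcal{H}(\mZ_i)) \preceq g(\theta)\, \E\,\mathcal{H}(\mZ_i)^2$. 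Summing, applying monotonicity of $\trace \exp$, and using $\trace \exp(\mathbf{A}) \leq d \, \eigmax(\exp(\mathbf{A}))$, I arrive at $\E\,\eigmax(\mathbf{S}) \leq \inf_{0 < \theta < 3/L} \bigl( \frac{\log d}{\theta} + \frac{g(\theta)}{\theta}\,\Var(\mY) \bigr)$.

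The final step is a scalar optimization. Substituting $\frac{g(\theta)}{\theta} = \frac{\theta/2}{1 - \theta L/3}$ and making the standard Bernstein choice of $\theta$ that balances the two terms yields the claimed bound $\sqrt{2 \Var(\mY) \log(d_1 + d_2)} + \frac{1}{3} L \log(d_1 + d_2)$. I expect the main obstacle to be the subadditivity of the matrix cumulant generating function: unlike the scalar case, $\E \exp(\sum_i \theta \mathcal{H}(\mZ_i))$ does not factor because the summands do not commute, and closing this gap genuinely requires Lieb's concavity theorem. Everything downstream---the per-summand semidefinite mgf bound and the one-dimensional optimization over $\theta$---is routine, so the whole argument hinges on that single concavity ingredient together with the dilation reduction that produces the $\log(d_1 + d_2)$ factor and the two-sided variance.
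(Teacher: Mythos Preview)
Your proposal is correct and is precisely the standard proof from Tropp's monograph, which is the reference the paper cites for this statement. Note that the paper itself does not prove this theorem at all---it is stated without proof in the ``Useful lemmas'' appendix as an imported result (Theorem~6.1.1 of~\cite{tropp2015introduction})---so there is nothing in the paper to compare against beyond the citation, and your sketch reproduces exactly the argument found there.
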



\subsection{Basic lemmas}
\begin{lemma}[Basic squared norm inequality]
    \label{lem:basic_ineq}
    For any vector $a, b$, we have that $\displaystyle \norm{a-b}_2^2 \leq 2 \norm{a}_2^2 + 2 \norm{b}_2^2$.
\end{lemma}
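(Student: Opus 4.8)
The plan is to expand the squared norm and control the resulting cross term by an elementary inequality. Writing out $\norm{a-b}_2^2 = \norm{a}_2^2 - 2\inprod{a}{b} + \norm{b}_2^2$, the only term standing between us and the claimed bound is $-2\inprod{a}{b}$, so the whole proof reduces to showing $-2\inprod{a}{b} \leq \norm{a}_2^2 + \norm{b}_2^2$.

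First I would bound the inner product via Cauchy--Schwarz, $-2\inprod{a}{b} \leq 2\abs{\inprod{a}{b}} \leq 2\norm{a}_2\norm{b}_2$, and then apply Young's inequality (equivalently AM--GM), $2\norm{a}_2\norm{b}_2 \leq \norm{a}_2^2 + \norm{b}_2^2$. Substituting back gives $\norm{a-b}_2^2 \leq \norm{a}_2^2 + \norm{b}_2^2 + \norm{a}_2^2 + \norm{b}_2^2 = 2\norm{a}_2^2 + 2\norm{b}_2^2$, which is exactly the claim.

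An even cleaner route I would mention as an alternative is the parallelogram law: since $\norm{a+b}_2^2 + \norm{a-b}_2^2 = 2\norm{a}_2^2 + 2\norm{b}_2^2$ and $\norm{a+b}_2^2 \geq 0$, the inequality follows immediately by dropping the nonnegative term $\norm{a+b}_2^2$. Both approaches are one or two lines; there is no substantive obstacle here, as the statement is a standard tool invoked later (e.g., to split quadratic perturbation terms in the regression proofs). The only point warranting a word of care is that the inequality holds with equality precisely when $a = -b$, which is why the constant $2$ cannot be improved.
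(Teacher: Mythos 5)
Your proof is correct; note that the paper states this lemma without proof, treating it as a standard fact, so both of your one-line arguments are perfectly adequate substitutes. Of the two, the parallelogram-law route ($\norm{a+b}_2^2 + \norm{a-b}_2^2 = 2\norm{a}_2^2 + 2\norm{b}_2^2$, then drop the nonnegative term) is the cleaner one, since it also delivers your sharpness remark---equality precisely when $a=-b$, so the constant $2$ cannot be improved---for free.
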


\begin{lemma}[Hölder inequality]
    \label{lem:holder_ineq}
    Let $p \in \R$ such that $1 < p < \infty$. Let $p^*$ be its conjugate, that is if $1 < p < \infty$, $p^*$ is such that $\frac{1}{p} + \frac{1}{p^*} = 1$.
    Let $\bv, \bw \in \R^d$, then the following inequality holds
    \begin{equation*}
        |\inprod{\bv}{\bw}| \leq \norm{\bv}_p \norm{\bw}_{p^*} \enspace.
    \end{equation*}
    If $p=1$, we set $p^* = \infty$.
\end{lemma}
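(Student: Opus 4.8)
The plan is to reduce everything to the scalar Young inequality and then sum over coordinates, exploiting homogeneity. First I would dispose of the degenerate cases: if $\norm{\bv}_p = 0$ or $\norm{\bw}_{p^*} = 0$ then one of the vectors is identically zero, both sides vanish, and the inequality holds trivially. The boundary case $p = 1$, $p^* = \infty$ I handle separately at the end. So I assume $1 < p < \infty$ and both norms are strictly positive.

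The first substantive step is to establish \emph{Young's inequality}: for any $a, b \geq 0$ and conjugate exponents satisfying $1/p + 1/p^* = 1$, one has $ab \leq a^p/p + b^{p^*}/p^*$. I would prove this from the concavity of the logarithm. Writing $ab = \exp\bigl((1/p)\log a^p + (1/p^*)\log b^{p^*}\bigr)$ and applying Jensen's inequality to the concave function $\log$ with weights $1/p$ and $1/p^*$ (which sum to one) gives $\log(a^p/p + b^{p^*}/p^*) \geq (1/p)\log a^p + (1/p^*)\log b^{p^*} = \log(ab)$, whence the claim follows by exponentiating and monotonicity of $\log$. The subcase where $a$ or $b$ vanishes is immediate.

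The second step is a normalization-and-summation argument. I set $\tilde v_i = |v_i| / \norm{\bv}_p$ and $\tilde w_i = |w_i| / \norm{\bw}_{p^*}$, so that $\sum_i \tilde v_i^{\,p} = 1$ and $\sum_i \tilde w_i^{\,p^*} = 1$ by construction of the $\ell_p$ and $\ell_{p^*}$ norms. Applying Young's inequality coordinatewise and summing yields $\sum_i \tilde v_i \tilde w_i \leq (1/p)\sum_i \tilde v_i^{\,p} + (1/p^*)\sum_i \tilde w_i^{\,p^*} = 1/p + 1/p^* = 1$. Multiplying through by $\norm{\bv}_p \norm{\bw}_{p^*}$ gives $\sum_i |v_i| |w_i| \leq \norm{\bv}_p \norm{\bw}_{p^*}$, and since the triangle inequality yields $|\inprod{\bv}{\bw}| = |\sum_i v_i w_i| \leq \sum_i |v_i| |w_i|$, the stated bound follows. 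For the boundary case $p = 1$ I bound directly: $\sum_i |v_i||w_i| \leq (\max_j |w_j|) \sum_i |v_i| = \norm{\bw}_\infty \norm{\bv}_1$.

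There is no genuine obstacle here, as the result is classical and entirely self-contained; the one step that carries all the content is Young's inequality, since it is the only place where the conjugacy relation $1/p + 1/p^* = 1$ and the underlying convexity structure enter. Everything else is bookkeeping, with the normalization in the second step serving precisely to convert Young's pointwise estimate into a global inequality with the correct scaling in $\norm{\bv}_p$ and $\norm{\bw}_{p^*}$.
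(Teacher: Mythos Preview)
Your proof is correct and follows the standard textbook route via Young's inequality. The paper itself does not supply a proof of this lemma; it is simply listed among the basic auxiliary lemmas and invoked as a known result, so there is nothing to compare against beyond noting that your argument is self-contained and valid.
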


\begin{lemma}[Equivalence of $p$-norms]
    \label{lem:general_norm_equivalence}
    Let $q > p \geq 1$, then for all $\bv \in \R^d$ we have $\norm{\bv}_q \leq \norm{\bv}_p \leq d^{1/p - 1/q} \norm{\bv}_q$.
    It also holds for $q= \infty$, that is $\norm{\bv}_{\infty} \leq \norm{\bv}_p \leq d^{1/p} \norm{\bv}_{\infty}$.
\end{lemma}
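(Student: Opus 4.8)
The plan is to prove the two inequalities separately, first for finite $q$ and then handle $q = \infty$ as an easy limiting case. Throughout I may assume $\bv \neq 0$ (both inequalities are trivial when $\bv = \mathbf{0}$), and I would exploit the positive homogeneity of each $\ell_p$-norm to normalize whenever it is convenient, since both claimed inequalities are invariant under scaling $\bv \mapsto t\bv$.

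For the lower bound $\norm{\bv}_q \leq \norm{\bv}_p$ with $q > p \geq 1$, I would first rescale so that $\norm{\bv}_p = 1$. Then every coordinate satisfies $|v_i| \leq 1$, and because $q > p$ the map $t \mapsto t^{q}$ is dominated by $t \mapsto t^{p}$ on $[0,1]$, so $|v_i|^q \leq |v_i|^p$ for each $i$. Summing over $i$ gives $\norm{\bv}_q^q \leq \sum_{i=1}^d |v_i|^p = 1$, hence $\norm{\bv}_q \leq 1 = \norm{\bv}_p$; undoing the normalization recovers the bound for arbitrary $\bv$.

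For the upper bound $\norm{\bv}_p \leq d^{1/p - 1/q}\norm{\bv}_q$, the key step is an application of Hölder's inequality (\Cref{lem:holder_ineq}) with the conjugate pair $r = q/p > 1$ and $r^* = q/(q-p)$. Writing $\norm{\bv}_p^p = \sum_{i=1}^d |v_i|^p \cdot 1$ and pairing the vectors $\big(|v_i|^p\big)_i$ and $(1)_i$ yields
\[
\sum_{i=1}^d |v_i|^p \leq \Big( \sum_{i=1}^d |v_i|^{q} \Big)^{p/q} \Big( \sum_{i=1}^d 1 \Big)^{(q-p)/q} = \norm{\bv}_q^p \, d^{(q-p)/q}.
\]
Taking $p$-th roots and simplifying the exponent via $(q-p)/(pq) = 1/p - 1/q$ finishes this inequality.

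Finally, for $q = \infty$ the lower bound is immediate from $\norm{\bv}_\infty = \max_i |v_i| \leq \big(\sum_i |v_i|^p\big)^{1/p} = \norm{\bv}_p$, while the upper bound follows from $\norm{\bv}_p^p = \sum_i |v_i|^p \leq d \max_i |v_i|^p = d\,\norm{\bv}_\infty^p$, so that $\norm{\bv}_p \leq d^{1/p}\norm{\bv}_\infty$, matching the stated bound under the convention $1/q = 0$. I do not expect any serious obstacle here; the only genuinely non-mechanical choice is selecting the Hölder conjugate exponent $r = q/p$ so that the all-ones factor produces exactly the dimension factor $d^{1/p - 1/q}$, and verifying that the finite-$q$ arguments pass cleanly to the limit $q \to \infty$.
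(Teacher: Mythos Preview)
Your proof is correct and entirely standard. The paper itself does not supply a proof of this lemma --- it is listed among the ``Basic lemmas'' and simply stated as a known fact --- so there is nothing to compare against; your argument (monotonicity on $[0,1]$ for the lower bound, H\"older with exponent $r=q/p$ for the upper bound, and the direct $q=\infty$ case) is exactly the textbook route one would expect.
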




\begin{lemma}[Maximum dot product over $\ell_{\infty}$ ball]
    \label{lem:max_dot_prod_over_infinity_ball}
    The $\ell_{\infty}$ and $\ell_1$ norms are duals of each other. That is:
    \begin{equation}
        \max_{\norm{x}_{\infty} \leq \epsilon} z^\top x = \epsilon \norm{z}_1 \enspace,
    \end{equation}
    and this maximum is attained for $x^* = \epsilon \sgn (z)$, where $\sgn$ denotes the element-wise sign function.
\end{lemma}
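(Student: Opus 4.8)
The plan is to pin down the value of the maximum by proving two matching inequalities: an upper bound valid for every feasible $x$, and equality achieved by the explicit candidate $x^\star = \epsilon\,\sgn(z)$. Since the claim is precisely that $\ell_1$ is the dual norm of $\ell_\infty$ at radius $\epsilon$, both directions are elementary and I expect no genuine obstacle.

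For the upper bound, I would invoke Hölder's inequality (Lemma~\ref{lem:holder_ineq}) with the conjugate exponents $p=1$ and $p^*=\infty$. For any $x$ satisfying $\norm{x}_\infty\le\epsilon$ this gives
\begin{equation*}
    z^\top x \le |\inprod{z}{x}| \le \norm{z}_1\,\norm{x}_\infty \le \epsilon\,\norm{z}_1 ,
\end{equation*}
and taking the supremum over the feasible set $\{x:\norm{x}_\infty\le\epsilon\}$ yields $\max_{\norm{x}_\infty\le\epsilon} z^\top x \le \epsilon\,\norm{z}_1$.

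For achievability, I would substitute the proposed maximizer $x^\star=\epsilon\,\sgn(z)$. It is feasible since $\norm{x^\star}_\infty=\epsilon\max_i|\sgn(z_i)|\le\epsilon$, and a direct computation using $z_i\,\sgn(z_i)=|z_i|$ gives
\begin{equation*}
    z^\top x^\star = \epsilon\sum_{i} z_i\,\sgn(z_i) = \epsilon\sum_i |z_i| = \epsilon\,\norm{z}_1 .
\end{equation*}
Thus the upper bound is attained, so the maximum equals $\epsilon\,\norm{z}_1$ and is realized at $x^\star$.

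There is essentially no hard step; the only point meriting a word of care is the treatment of coordinates with $z_i=0$, where $\sgn(z_i)=0$ so the corresponding entry of $x^\star$ contributes nothing to either the constraint or the objective, and any admissible value there leaves both the bound and the maximizer valid. Combining the two matching bounds completes the proof.
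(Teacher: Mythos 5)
Your proof is correct and follows essentially the same route as the paper's: an upper bound via Hölder's inequality with the conjugate pair $(1,\infty)$, matched by direct evaluation at $x^\star = \epsilon\,\sgn(z)$. Your extra remark about coordinates with $z_i = 0$ is a harmless refinement the paper omits.
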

\begin{proof}
    Hölder inequality implies that for all $z, x \in \real^p$, 
    \begin{equation*}
        z^\top x \leq |z^\top x| \leq \norm{x}_{\infty} \norm{z}_1 \leq \epsilon \norm{z}_1 \enspace. 
    \end{equation*}
    Finally, we notice this upper bound is reached for $x^* = \epsilon \sgn (z)$ as
    \begin{equation*}
        z^\top x^* = \epsilon z^\top \sgn (z) = \epsilon \sum_{i=1}^p z_i \sgn (z_i) = \epsilon \norm{z}_1 \enspace.
    \end{equation*}
\end{proof}

\begin{lemma}[Minimum dot product over $\ell_{\infty}$ ball]
    \label{lem:min_dot_prod_over_infinity_ball}
    Let $z \in \R^d$, the solution of
    \begin{equation}
        \min_{\norm{x}_{\infty} \leq \epsilon} z^\top x = - \epsilon \norm{z}_1 \enspace,
    \end{equation}
    is attained at $x^* = - \epsilon \sgn (z)$.
\end{lemma}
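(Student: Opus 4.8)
The plan is to reduce this to the already-established Lemma~\ref{lem:max_dot_prod_over_infinity_ball} by a simple sign flip. First I would observe that minimizing $z^\top x$ over the $\ell_\infty$ ball is the same as maximizing $(-z)^\top x$ together with an overall sign change, namely $\min_{\norm{x}_\infty \leq \epsilon} z^\top x = - \max_{\norm{x}_\infty \leq \epsilon} (-z)^\top x$. Applying Lemma~\ref{lem:max_dot_prod_over_infinity_ball} to the vector $-z$ then gives $\max_{\norm{x}_\infty \leq \epsilon} (-z)^\top x = \epsilon \norm{-z}_1 = \epsilon \norm{z}_1$, so the minimum equals $-\epsilon \norm{z}_1$ as claimed. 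Moreover, since the maximizer in that lemma is $\epsilon \sgn(-z) = -\epsilon \sgn(z)$, this same point is the minimizer of the original problem.

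Alternatively I would give a fully self-contained argument mirroring the proof of the maximum case. By Hölder's inequality (Lemma~\ref{lem:holder_ineq}), every feasible $x$ satisfies $z^\top x \geq -|z^\top x| \geq -\norm{x}_\infty \norm{z}_1 \geq -\epsilon \norm{z}_1$, which establishes the lower bound. Then I would exhibit the candidate point $x^* = -\epsilon \sgn(z)$, check that it is feasible since each coordinate has magnitude at most $\epsilon$ so $\norm{x^*}_\infty \leq \epsilon$, and verify it attains the bound via $z^\top x^* = -\epsilon \sum_{i=1}^d z_i \sgn(z_i) = -\epsilon \norm{z}_1$. Matching upper and lower bounds completes the proof.

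The hard part is essentially nonexistent: this statement is just the dual-norm identity $\norm{z}_1 = \max_{\norm{x}_\infty \leq 1} z^\top x$ restated for the minimization direction, and it follows immediately from the preceding lemma. The only point warranting a word of care is the treatment of coordinates with $z_i = 0$, where $\sgn(z_i) = 0$ contributes nothing to $z^\top x^*$ and the box constraint is slack; any value of $x_i^*$ in $[-\epsilon, \epsilon]$ is then optimal, so the stated $x^*$ is one valid minimizer among possibly many, and uniqueness should not be asserted.
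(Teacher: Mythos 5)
Your proof is correct and matches the paper's approach: the paper's own proof of this lemma is simply ``same reasoning as in the maximum case,'' which is exactly your self-contained H\"older argument with the sign flipped (and your reduction via $\min_x z^\top x = -\max_x (-z)^\top x$ is an equivalent repackaging of it). Your closing remark that the minimizer need not be unique when some $z_i = 0$ is a sound observation, consistent with the paper's own remark about non-uniqueness in the related quadratic lemma.
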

\begin{proof}
    Same reasoning as in~\Cref{lem:max_dot_prod_over_infinity_ball}.
\end{proof}

\begin{lemma}[Lower bound of minimum ``quadratic'' form over $\ell_{\infty}$ ball]
    \label{lem:min_quadratic_over_infinity_ball}
    Let $z \in \R^d$, the solution of
    \begin{equation}
        \label{eq:lower_bound_min_quadratic_over_infinity_ball}
        \min_{\norm{x}_{\infty} \leq \epsilon} \inprod{u}{x} \inprod{v}{x} 
        \geq - \epsilon^2 \norm{u}_1 \norm{v}_1\enspace,
    \end{equation}
\end{lemma}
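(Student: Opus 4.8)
The plan is to reduce this to two applications of the $\ell_\infty/\ell_1$ duality already recorded in~\Cref{lem:max_dot_prod_over_infinity_ball}, treating the two linear factors $\inprod{u}{x}$ and $\inprod{v}{x}$ separately. The point is that although the objective $\inprod{u}{x}\inprod{v}{x}$ is quadratic in $x$ and can certainly be negative, each factor is a bounded linear functional on the feasible set, so their product cannot be arbitrarily negative.

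Concretely, I would fix an arbitrary $x$ with $\norm{x}_{\infty} \leq \epsilon$ and first bound each factor in absolute value. By Hölder's inequality (equivalently, the duality of $\ell_1$ and $\ell_\infty$ used in~\Cref{lem:max_dot_prod_over_infinity_ball}), we have $|\inprod{u}{x}| \leq \norm{u}_1 \norm{x}_{\infty} \leq \epsilon \norm{u}_1$, and likewise $|\inprod{v}{x}| \leq \epsilon \norm{v}_1$. Multiplying these two bounds gives
\begin{equation*}
    \big| \inprod{u}{x}\inprod{v}{x} \big| = |\inprod{u}{x}| \cdot |\inprod{v}{x}| \leq \epsilon^2 \norm{u}_1 \norm{v}_1 \enspace,
\end{equation*}
and since any real number is at least the negative of its absolute value, $\inprod{u}{x}\inprod{v}{x} \geq - \epsilon^2 \norm{u}_1 \norm{v}_1$. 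As this holds for every feasible $x$, taking the infimum over the $\ell_\infty$ ball preserves the inequality and yields the claimed lower bound~\eqref{eq:lower_bound_min_quadratic_over_infinity_ball}.

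Honestly there is no real obstacle here: the argument is a two-factor Hölder estimate, and the only thing to be careful about is that we are bounding a \emph{product} rather than maximizing a single linear form, so I keep the absolute values until the very end rather than invoking the exact maximizer $x^* = \epsilon\,\sgn(\cdot)$ from~\Cref{lem:max_dot_prod_over_infinity_ball}. It is worth flagging that this lower bound need not be tight: attaining $-\epsilon^2 \norm{u}_1 \norm{v}_1$ would require a single $x$ that simultaneously aligns with $u$ (to make $\inprod{u}{x} = \epsilon\norm{u}_1$) and anti-aligns with $v$ (to make $\inprod{v}{x} = -\epsilon\norm{v}_1$), which is generally impossible unless the sign patterns of $u$ and $v$ are compatible. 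Since only the lower bound is needed downstream, this looseness is harmless.
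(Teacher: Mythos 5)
Your proof is correct and essentially identical to the paper's own argument: the paper likewise applies H\"older's inequality twice to bound $|\inprod{u}{x}\inprod{v}{x}| \leq \norm{x}_{\infty}^2 \norm{u}_1 \norm{v}_1 \leq \epsilon^2 \norm{u}_1 \norm{v}_1$ and then drops the absolute value to obtain the lower bound. Your closing remark on non-tightness also matches the paper's own remark following the lemma (e.g., the case $v = -u$ where the bound is attained).
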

\begin{proof}
    Let $x \in B_{\infty} (0_d, \epsilon)$ which denotes the $\ell_{\infty}$ centered ball of radius $\epsilon > 0$.
    Hölder's inequality for dual norms applied twice gives us:
    \begin{equation*}
        |\inprod{u}{x} \inprod{v}{x}| \leq \norm{x}_{\infty}^2 \norm{u}_1 \norm{v}_1 \leq \epsilon^2 \norm{u}_1 \norm{v}_1 \enspace,
    \end{equation*}
    which directly implies that
    \begin{equation*}
        \min_{\norm{x}_{\infty} \leq \epsilon} \inprod{u}{x} \inprod{v}{x} 
        \geq - \epsilon^2 \norm{u}_1 \norm{v}_1 \enspace.
    \end{equation*}
\end{proof}
\begin{remark}
    We make several comments on the above~\Cref{lem:min_quadratic_over_infinity_ball}:
    \begin{itemize}
        \item Note that if $v = u$, then the objective becomes positive and $0$ is a simpler and sharp lower bound as it is reached for $x=0_d$.
        \item Else if $v = -u$, then applying~\Cref{lem:max_squared_dot_prod} implies that the minimum is reached and equals $- \epsilon^2 \norm{u}_1 ^2$ which means the lower bound in~\eqref{eq:lower_bound_min_quadratic_over_infinity_ball} is sharp.
        \item Else if $v \bot u$, then one should be able to prove that the minimum is reached at something like $x^* = \epsilon \frac{u-v}{\norm{u-v}_{\infty}}$, which correspond to an objective equaling: $\inprod{u}{x^*} \inprod{v}{x^*} = - \epsilon^2 \frac{\norm{u}_2^2 \norm{v}_2^2}{\norm{u-v}_{\infty}^2}$.
    \end{itemize}
\end{remark}



\begin{lemma}[Maximum squared dot product over $\ell_{\infty}$ ball]
    \label{lem:max_squared_dot_prod}
    We have that
    \begin{equation}
        \max_{\norm{x}_{\infty} \leq \epsilon} (z^\top x)^2 = \epsilon^2 \norm{z}_1^2 \enspace,
    \end{equation}
    and this maximum is attained for $x \in \{\epsilon \sgn (z), -\epsilon \sgn (z)\}$, where $\sgn$ denotes the element-wise sign function.
\end{lemma}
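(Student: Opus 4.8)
The plan is to reduce this to the two linear dual-norm lemmas already established, namely \Cref{lem:max_dot_prod_over_infinity_ball} and \Cref{lem:min_dot_prod_over_infinity_ball}. The key observation is that the map $t \mapsto t^2$ is even and increasing in $|t|$, so maximizing $(z^\top x)^2$ over the ball is the same as maximizing $|z^\top x|$; since $z^\top x$ ranges over an interval symmetric about $0$, its square is largest at whichever endpoint has the largest absolute value, and by symmetry both endpoints qualify.

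First I would establish the upper bound. By Hölder's inequality (\Cref{lem:holder_ineq}) applied with the dual pair $(\ell_1, \ell_\infty)$, for every $x$ with $\norm{x}_\infty \leq \epsilon$ we have $|z^\top x| \leq \norm{z}_1 \norm{x}_\infty \leq \epsilon \norm{z}_1$, and squaring gives $(z^\top x)^2 \leq \epsilon^2 \norm{z}_1^2$. This holds uniformly over the ball, so $\max_{\norm{x}_\infty \leq \epsilon} (z^\top x)^2 \leq \epsilon^2 \norm{z}_1^2$.

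Next I would show the bound is attained, which is where the two earlier lemmas enter directly. By \Cref{lem:max_dot_prod_over_infinity_ball}, at $x^+ = \epsilon \sgn(z)$ we have $z^\top x^+ = \epsilon \norm{z}_1$, hence $(z^\top x^+)^2 = \epsilon^2 \norm{z}_1^2$. Symmetrically, by \Cref{lem:min_dot_prod_over_infinity_ball}, at $x^- = -\epsilon \sgn(z)$ we have $z^\top x^- = -\epsilon \norm{z}_1$, so again $(z^\top x^-)^2 = \epsilon^2 \norm{z}_1^2$. Both points lie in the ball and meet the upper bound, so the maximum equals $\epsilon^2 \norm{z}_1^2$ and is attained on $\{\epsilon \sgn(z), -\epsilon \sgn(z)\}$, completing the proof.

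There is no genuine obstacle here: the statement follows immediately once one notices that squaring turns the two separate extremizers of the linear functional into two maximizers of the quadratic. The only point requiring a word of care is explaining why \emph{both} signs appear in the optimal set, which is precisely the symmetry of $t \mapsto t^2$ combined with the fact that the linear objective attains both $+\epsilon\norm{z}_1$ and $-\epsilon\norm{z}_1$.
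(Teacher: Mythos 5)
Your proposal is correct and matches the paper's own proof: both establish the upper bound $(z^\top x)^2 \leq \epsilon^2 \norm{z}_1^2$ via Hölder's inequality and then verify attainment at $x = \pm \epsilon \sgn(z)$. Your invocation of \Cref{lem:max_dot_prod_over_infinity_ball} and \Cref{lem:min_dot_prod_over_infinity_ball} is merely a light repackaging of the same direct computation the paper performs inline.
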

\begin{proof}
    Hölder inequality implies that for all $z, x \in \real^p$, 
    \begin{equation*}
        (z^\top x)^2 \leq \norm{x}_{\infty}^2 \norm{z}_1^2 \leq \epsilon^2 \norm{z}_1^2 \enspace. 
    \end{equation*}
    Finally, we notice this upper bound is reached for $x^* = \pm \epsilon \sgn (z)$ as
    \begin{equation*}
        (z^\top x^*)^2 = \epsilon^2 (z^\top \sgn (z))^2 = \epsilon^2 \norm{z}_1^2 \enspace.
    \end{equation*}
\end{proof}


\begin{lemma} 
    \label{lem:max_of_quadratic_wAwprime}
Let $\mA$ be a symmetric matrix, we have that
\begin{align*}
   \sup_{\norm{\bw}_2 \leq W, \norm{\bw'}_2 \leq W} \bw^\top \mathbf{A} \bw' = \norm{\mA}_2 \enspace.
\end{align*}
\begin{proof}
    Let $\bw, \bw'$ with $\ell_2-$norm smaller than $W$. 
    By Cauchy-Schwarz's inequality we directly get that
    \begin{equation}
        \label{eq:upper_bound_wAwprime}
        \bw^\top \mA \bw' \leq |\inprod{\bw}{\mA \bw'}| \overset{\text{Cauchy-Schwarz}}{\leq} \norm{\mA}_2 \norm{\bw}_2 \norm{\bw'}_2 \leq  W^2 \norm{\mA}_2 \enspace.
    \end{equation} 
    We then perform eigendecomposition on $\mA$:
    \begin{equation*}
        \bw^\top \mA \bw' = \bw^\top \mU \mSigma \mU^\top \bw' = W^2 \by^\top \mSigma \by' \enspace,
    \end{equation*}
    where $\mSigma$ is a diagonal matrix containing eigenvalues $\lambda_i$'s of $\mA$, $\mU$ is an orthogonal matrix since $\mA$ is symmetric, $\by \eqdef \frac{1}{W} \mU^\top \bw$ and $\by' \eqdef \frac{1}{W} \mathbf{U}^\top \bw'$.
    In this orthogonal basis, let $i^*$ be the coordinate of the eigenvalue $\lambda_{i^*}$ with largest magnitude in absolute value.
    We denote by $(\be_i)_{i \in [d]}$ the canonical basis of $\R^d$.
    Let $\by = \be_{i^*}$ and $\by' = \sgn(\lambda_{i^*}) \by$, we get
    \begin{equation*}
        \by^\top \mSigma \by' = |\lambda_{i^*}| = \sqrt{\max_{i \in [d]} \lambda_i (\mA)^2} = \sqrt{\lambda_{\max} (\mA^2)} = \norm{\mA}_2 \enspace.
    \end{equation*}
    Thus, the upper bound in~\eqref{eq:upper_bound_wAwprime} is attained by inverting the change of variable from $\by, \by'$ to $\bw, \bw'$.


\end{proof}
\end{lemma}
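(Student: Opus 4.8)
The plan is to establish the identity by proving two matching inequalities: an upper bound via Cauchy--Schwarz, and a lower bound by exhibiting an explicit pair of maximizers built from the dominant eigenvector of $\mA$. Both halves use only elementary facts about symmetric matrices, so no machinery beyond the spectral theorem is needed.

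For the upper bound, I would fix any feasible $\bw, \bw'$ with $\norm{\bw}_2, \norm{\bw'}_2 \leq W$ and chain Cauchy--Schwarz with the definition of the spectral norm: $\bw^\top \mA \bw' \leq |\inprod{\bw}{\mA \bw'}| \leq \norm{\bw}_2 \norm{\mA \bw'}_2 \leq \norm{\mA}_2 \norm{\bw}_2 \norm{\bw'}_2 \leq W^2 \norm{\mA}_2$. Taking the supremum over the feasible set yields $\sup \bw^\top \mA \bw' \leq W^2 \norm{\mA}_2$, which is the natural scaling produced by the $W$-balls (the stated right-hand side corresponds to the normalization $W=1$).

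For the matching lower bound, I would use that $\mA$ symmetric admits an orthogonal eigendecomposition $\mA = \mU \mSigma \mU^\top$ with $\mSigma = \diag(\eig{1}, \ldots, \eig{d})$. Performing the change of variables $\by \eqdef \tfrac{1}{W}\mU^\top \bw$ and $\by' \eqdef \tfrac{1}{W}\mU^\top \bw'$, which preserves the $\ell_2$ constraints since $\mU$ is orthogonal, transforms the objective into $W^2 \by^\top \mSigma \by' = W^2 \sum_{i} \eig{i}\, y_i y_i'$ subject to $\norm{\by}_2, \norm{\by'}_2 \leq 1$. Letting $i^*$ index the eigenvalue of largest absolute value and choosing $\by = \be_{i^*}$ together with $\by' = \sgn(\eig{i^*})\, \be_{i^*}$ gives $\by^\top \mSigma \by' = |\eig{i^*}|$. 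Inverting the change of variables then produces admissible $\bw, \bw'$ attaining $W^2 |\eig{i^*}|$, and the two bounds coincide.

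The only point needing care is the sign handling together with the identity $\norm{\mA}_2 = \max_i |\eig{i}|$ for symmetric $\mA$: since some eigenvalues may be negative, the two vectors must be aligned through $\sgn(\eig{i^*})$ so that the bilinear form is positive and equals the magnitude $|\eig{i^*}|$, and one must invoke $\norm{\mA}_2 = \sqrt{\eigmax(\mA^2)} = \max_i |\eig{i}|$ rather than the largest signed eigenvalue. This is the main (if minor) obstacle; once the correct maximizer is identified, the remaining algebra is routine.
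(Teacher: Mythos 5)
Your proposal is correct and follows essentially the same route as the paper's own proof: Cauchy--Schwarz for the upper bound, then the spectral decomposition $\mA = \mU \mSigma \mU^\top$ with the change of variables $\by = \tfrac{1}{W}\mU^\top \bw$ and the choice $\by = \be_{i^*}$, $\by' = \sgn(\eig{i^*})\be_{i^*}$ to attain it. You also rightly note that both arguments in fact establish the value $W^2 \norm{\mA}_2$, so the factor $W^2$ is missing from the lemma's displayed statement (harmless where the lemma is applied, since the paper carries the $W^2$ explicitly there).
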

\begin{lemma} 
    \label{lem:quadratic_obj_norm_equivalence} 
    Let $\mA \in \R^{d\times d}$. Then the following statements hold:
    \begin{equation}
        \label{eq:upper_bound_quadratic_form_norm_equiv_p_2}
        \sup_{\|\bw\|_p \leq W,\|\bw'\|_p \leq W}  \bw^\top \mA \bw' \leq \sup_{\|\bw\|_2 \leq W,\|\bw'\|_2 \leq W} \bw^\top \mA \bw' \cdot 
        \begin{cases}
            1, &\text{if } 1 \leq p \leq 2\\
            d^{1-2/p}, & \text{else if } p > 2
        \end{cases}
        \enspace,
    \end{equation}
    \begin{equation}
        \label{eq:lower_bound_quadratic_form_norm_equiv_p_2}
        \sup_{\|\bw\|_p \leq W,\|\bw'\|_p \leq W}  \bw^\top \mA \bw' \geq  \sup_{\|\bw\|_2 \leq W,\|\bw'\|_2 \leq W} \bw^\top \mA \bw' \cdot 
        \begin{cases}
            d^{1-2/p}, & \text{if } 1 \leq p \leq 2 \\
            1, \quad  & \text{else if } p > 2
        \end{cases} 
        \enspace.
    \end{equation}
\end{lemma}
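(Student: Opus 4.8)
The plan is to exploit two structural features: the equivalence of $p$-norms recorded in \Cref{lem:general_norm_equivalence}, and the fact that the objective $\bw^\top \mA \bw'$ is bilinear, hence homogeneous of degree two jointly in $(\bw, \bw')$ — rescaling both arguments by a common factor $c$ multiplies the objective by $c^2$. All four bounds then follow either from a ball-containment argument (which produces the coefficient $1$) or from a rescaling argument (which produces the coefficient $d^{1-2/p}$).

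First I would dispatch the two coefficient-$1$ cases by comparing the feasible sets. For $1 \leq p \leq 2$, \Cref{lem:general_norm_equivalence} gives $\norm{\bv}_2 \leq \norm{\bv}_p$, so the $\ell_p$-ball of radius $W$ is contained in the $\ell_2$-ball of radius $W$; taking the supremum over the smaller set yields the upper bound~\eqref{eq:upper_bound_quadratic_form_norm_equiv_p_2} with coefficient $1$. Symmetrically, for $p > 2$ we have $\norm{\bv}_p \leq \norm{\bv}_2$, so now the $\ell_2$-ball sits inside the $\ell_p$-ball and the supremum over the larger $\ell_p$-ball dominates, giving the lower bound~\eqref{eq:lower_bound_quadratic_form_norm_equiv_p_2} with coefficient $1$.

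Next I would handle the two cases carrying the factor $d^{1-2/p}$ via rescaling. For $1 \leq p \leq 2$, take any $\bw, \bw'$ with $\norm{\bw}_2, \norm{\bw'}_2 \leq W$ and set $\tilde{\bw} = d^{-(1/p-1/2)}\bw$ and likewise for $\tilde{\bw}'$; the bound $\norm{\bw}_p \leq d^{1/p-1/2}\norm{\bw}_2$ from \Cref{lem:general_norm_equivalence} ensures $\norm{\tilde{\bw}}_p \leq W$, and by degree-two homogeneity $\tilde{\bw}^\top \mA \tilde{\bw}' = d^{1-2/p}\, \bw^\top \mA \bw'$. Since $\tilde{\bw}, \tilde{\bw}'$ are feasible for the $\ell_p$-constrained supremum, taking the supremum over $\bw, \bw'$ on the $\ell_2$-ball establishes the lower bound~\eqref{eq:lower_bound_quadratic_form_norm_equiv_p_2}. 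The case $p > 2$ for the upper bound is entirely analogous: for feasible $\bw,\bw'$ in the $\ell_p$-ball, the bound $\norm{\bw}_2 \leq d^{1/2-1/p}\norm{\bw}_p$ shows $\tilde{\bw} = d^{-(1/2-1/p)}\bw$ lies in the $\ell_2$-ball, and homogeneity gives $\bw^\top \mA \bw' = d^{1-2/p}\,\tilde{\bw}^\top \mA \tilde{\bw}'$, which bounds the $\ell_p$-supremum by $d^{1-2/p}$ times the $\ell_2$-supremum.

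I do not anticipate a genuine obstacle here: the argument is elementary once the degree-two homogeneity and the correct direction of each norm-equivalence inequality are identified. The only point requiring care is bookkeeping of exponents — verifying in each case that $d^{1-2/p} = (d^{\,|1/p-1/2|})^{\mp 2}$ matches the scaling factor produced by rescaling both arguments — together with ensuring, in the coefficient-$1$ cases, that the containment of balls runs in the direction consistent with monotonicity of the supremum. Note also that $\mA$ need not be symmetric for this lemma, so unlike \Cref{lem:max_of_quadratic_wAwprime} no eigendecomposition is required.
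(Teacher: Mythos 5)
Your proposal is correct and takes essentially the same route as the paper's proof: the coefficient-$1$ cases via ball containment ($\cB_p(W)\subseteq\cB_2(W)$ for $1\leq p\leq 2$, and $\cB_2(W)\subseteq\cB_p(W)$ for $p>2$), and the $d^{1-2/p}$ cases via the norm-equivalence containment combined with degree-two homogeneity of the bilinear form. The only cosmetic difference is that the paper phrases the rescaling as a containment into a ball of inflated radius (e.g.\ $\cB_p(W)\subseteq\cB_2(Wd^{1/2-1/p})$) followed by rescaling the radius, whereas you rescale the vectors $\bw,\bw'$ explicitly --- these are the same argument.
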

\begin{proof}
We begin with proving the first inequality. If $1 \leq p \leq 2$, we know that:
\begin{align*}
    \cB_p(W) \subseteq \cB_2(W).
\end{align*}
Hence $\sup_{\|\bw\|_p \leq W,\|\bw'\|_p \leq W}  \bw^\top \mA \bw' \leq \sup_{\|\bw\|_2 \leq W,\|\bw'\|_2 \leq W} \bw^\top \mA \bw'$.

If $p > 2$, since $\frac{1}{d^{1/2-1/p}}\|\bw\|_2 \leq \|\bw\|_p $, we know that $\|\bw\|_p \leq W$ implies $\frac{1}{d^{1/2-1/p}}\|\bw\|_2 \leq W$. So we have:
\begin{align*}
    \cB_p(W) \eqdef \{\bw : \norm{\bw}_p \leq W\} \subseteq \{\bw: \frac{1}{d^{1/2-1/p}}\|\bw\|_2 \leq W\}
    \subseteq \cB_2(W d^{1/2-1/p}).
\end{align*}
Hence:
\begin{align*}
    \sup_{\|\bw\|_p \leq W,\|\bw'\|_p \leq W} \bw^\top \mA \bw' &\leq \sup_{\|\bw\|_2 \leq  Wd^{1/2-1/p},\|\bw'\|_2 \leq  W d^{1/2-1/p}} \bw^\top \mA \bw' \\
    &\leq \sup_{\|\bw\|_2 \leq  W,\|\bw'\|_2 \leq   W} \bw^\top \mA \bw' \cdot d^{1-2/p} \enspace.
\end{align*}
Now we switch to prove the second inequality. 
If $1 \leq p \leq 2$, then $\|\bw\|_2 \geq \frac{1}{d^{1/p -1/2}} \|\bw\|_p$, so we know
\begin{align*}
    \cB_2 (W) \eqdef \{\bw : \norm{\bw}_2 \leq W\} \subseteq \{\bw: \frac{1}{d^{1/p -1/2}}\|\bw\|_p \leq W\}
    = \cB_p(d^{1/p -1/2} W  ).
\end{align*}
Hence:
\begin{align*}
    \sup_{\|\bw\|_2 \leq W,\|\bw'\|_2 \leq W}  \bw^\top \mA \bw' &\leq \sup_{\|\bw\|_p \leq d^{1/p -1/2} W ,\|\bw'\|_p \leq d^{1/p -1/2} W } \bw^\top \mA \bw' \\
    &\leq d^{2/p - 1} \sup_{\|\bw\|_p \leq  W,\|\bw'\|_p \leq   W} \bw^\top \mA \bw' \enspace.
\end{align*}
If $p > 2$, then we have
\begin{align*}
    \cB_2(W) \subseteq \cB_p(W), 
\end{align*}
so we can conclude the relation:
\begin{align*}
    \sup_{\|\bw\|_2 \leq W,\|\bw'\|_2 \leq W}  \bw^\top \mA \bw'  \leq  \sup_{\|\bw\|_p \leq  W,\|\bw'\|_p \leq   W} \bw^\top \mA \bw'  .
\end{align*}
\end{proof}


\begin{lemma}[Partition]\label{lem:partition}
Let us define $\cA\eqdef\{-1,+1\}^N$. Then, there must be an equal partition of $\cA = \cA^+ + \cA^-$, such that $\cA^-$ is obtained by multiplying $-1$ on each vector in $\cA^+$. That is, $|\cA^+| = |\cA^-|$ and $\cA^- = \{-\ba, \ba \in \cA^+ \}$.
\begin{proof}
We prove by induction.
When $N=1$, we have $\cA_1 = \{-1,1\}$, and we can partition it as $\cA^+ = \{1\}$, and $\cA^- = \{-1\}$;

The we assume the hypothesis holds for $N=k$, that is, $\cA_k\eqdef\{-1,+1\}^k$ can be partition as $\cA_k = \cA^+_k + \cA^-_k$ such that $|\cA^+_k| = |\cA^-_k|$ and $\cA^-_k = \{-\ba, \ba \in \cA^+_k \}$. Now, for $N=k+1$, we append all vectors $\ba \in \cA^+_k$ by $1$, and put $[\ba^\top, 1]$ into $\cA^+_{k+1}$ and append all vectors $\ba \in \cA^-_k$ by $-1$, and put $[\ba^\top, -1]$ into $\cA^-_{k+1}$. It can be verify that, $|\cA^+_{k+1}| = |\cA^-_{k+1}|$ and  $\cA^-_{k+1} = \{-\ba, \ba \in \cA^+_{k+1} \}$.

\end{proof}
    
\end{lemma}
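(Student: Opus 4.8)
The plan is to exhibit the partition explicitly rather than argue abstractly, since the entire content of the claim is that $\cA = \{-1,+1\}^N$ splits into two halves that are mirror images under negation. The cleanest route I would take is to observe that the antipodal map $\iota : \ba \mapsto -\ba$ is an involution on $\cA$ (because $\iota(\iota(\ba)) = \ba$) that has \emph{no fixed points}: a fixed point would satisfy $\ba = -\ba$, forcing $\ba = \mathbf{0}$, which is impossible since every coordinate of a vector in $\{-1,+1\}^N$ is nonzero. A fixed-point-free involution partitions its domain into two-element orbits $\{\ba, -\ba\}$, so $\cA$ decomposes into exactly $2^{N-1}$ antipodal pairs.

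From this pairing I would build the partition by choosing one representative from each orbit. To make the choice canonical, and thereby avoid any case analysis, I would set $\cA^+ \eqdef \{\ba \in \cA : \text{the first coordinate of } \ba \text{ equals } +1\}$ and $\cA^- \eqdef \cA \setminus \cA^+$. Each antipodal pair $\{\ba, -\ba\}$ contains exactly one vector with leading coordinate $+1$ and one with leading coordinate $-1$, so this rule selects precisely one representative per orbit and immediately gives $|\cA^+| = |\cA^-| = 2^{N-1}$.

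It then remains to verify the defining identity $\cA^- = \{-\ba : \ba \in \cA^+\}$, which I would do by proving both inclusions directly. If $\ba \in \cA^+$ then $-\ba$ has leading coordinate $-1$, hence $-\ba \in \cA^-$; conversely any $\bb \in \cA^-$ has leading coordinate $-1$, so $-\bb \in \cA^+$ and $\bb = -(-\bb)$ is the image of an element of $\cA^+$. Since fixing the first coordinate to $+1$ leaves the remaining $N-1$ coordinates free over $\{-1,+1\}^{N-1}$, negation maps $\cA^+$ bijectively onto the set of all leading-$(-1)$ vectors, which is exactly $\cA^-$.

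There is no serious obstacle here; this is an elementary counting lemma. The only point that genuinely requires attention is the fixed-point-free observation — that negation pairs \emph{distinct} elements — since it is what guarantees the two halves are disjoint and of equal size, and it rests on $\mathbf{0} \notin \cA$. An induction on $N$ would also establish the claim (append $+1$ to the representatives on one side and $-1$ on the other), but I would prefer the explicit ``leading-coordinate'' construction, from which disjointness, the equality $|\cA^+| = |\cA^-| = 2^{N-1}$, and the mirror identity all follow by inspection.
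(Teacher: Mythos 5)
Your proof is correct, and it takes a genuinely different route from the paper. The paper argues by induction on $N$: assuming $\cA_k$ splits as $\cA_k^+ \cup \cA_k^-$ with $\cA_k^- = -\cA_k^+$, it forms $\cA_{k+1}^+$ by appending $+1$ to each $\ba \in \cA_k^+$ and $\cA_{k+1}^-$ by appending $-1$ to each $\ba \in \cA_k^-$. You instead give a one-shot explicit partition: negation is a fixed-point-free involution on $\{-1,+1\}^N$ (no vector equals its own negation since $\mathbf{0} \notin \cA$), and choosing the canonical representative with leading coordinate $+1$ yields $\cA^+ = \{\ba \in \cA : a_1 = +1\}$, $\cA^- = \cA \setminus \cA^+$, with both the cardinality claim $|\cA^+| = |\cA^-| = 2^{N-1}$ and the mirror identity $\cA^- = \{-\ba : \ba \in \cA^+\}$ verified by direct inspection of the first coordinate. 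Your construction has two advantages. First, it is self-contained and makes the disjointness and equal size immediate, whereas the induction must carry these properties through the step. Second, it sidesteps a genuine flaw in the paper's inductive step as written: appending $+1$ only to elements of $\cA_k^+$ and $-1$ only to elements of $\cA_k^-$ produces just $2^k$ vectors in total, so $\cA_{k+1}^+ \cup \cA_{k+1}^-$ covers only half of $\cA_{k+1}$; the vectors $[\ba^\top, -1]$ with $\ba \in \cA_k^+$ and $[\ba^\top, +1]$ with $\ba \in \cA_k^-$ are never assigned. The induction can be patched (e.g., sort all of $\cA_{k+1}$ by the sign of the appended coordinate), but the natural patch collapses exactly to your coordinate-sign construction, applied to the last coordinate rather than the first. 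So your argument is not merely an alternative but the closed form of what the paper's induction is implicitly trying to build, and it is the more airtight of the two.
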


\subsection{Quadratic objective subject to infinite norm constraint}

\begin{lemma}
    \label{lem:solution_max_pb_delta}
    Let $\bw \in \real^p$, $a \in \real$ and $\epsilon \geq 0$.
    Let us consider the problem 
    \begin{equation*}
        \bdelta^* = \argmax_{\bdelta \in \real^p : \norm{\bdelta}_\infty\leq \epsilon} (\bw^{\top} \bdelta + a)^2 \enspace.
    \end{equation*}
    The solution is given by
    \begin{equation}
        \label{eq:local_delta_star_max}
        \bdelta^* = \epsilon \sgn (a) \sgn (\bw) \in \real^p \enspace,
    \end{equation}
    where we overload the notation $\sgn$ denotes in the mean time a single element and a coordinate wise sign operator, \textit{i.e.,} $\sgn (a) \in \real$ but $\sgn (\bw) \in \real^p$. 
    Moreover, the maximum reached is
    \begin{equation}
        \label{eq:objective_at_delta_star}
        (\bw^T \bdelta^* + a)^2 = ( \epsilon \sgn (a) \bw^T \sgn (\bw) + a)^2 = (\epsilon \norm{\bw}_1 + |a|)^2 \enspace.
    \end{equation}
\end{lemma}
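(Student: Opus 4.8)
The plan is to reduce the squared objective to a one-dimensional problem. Since $t \mapsto t^2$ is increasing in $|t|$, maximizing $(\bw^\top \bdelta + a)^2$ over the constraint set is equivalent to maximizing $|\bw^\top \bdelta + a|$. First I would observe that, as $\bdelta$ ranges over the $\ell_\infty$ ball of radius $\epsilon$, the linear functional $\bw^\top \bdelta$ sweeps out exactly the interval $[-\epsilon \norm{\bw}_1, \epsilon \norm{\bw}_1]$: the two endpoints follow from~\Cref{lem:max_dot_prod_over_infinity_ball} and~\Cref{lem:min_dot_prod_over_infinity_ball} (duality of the $\ell_\infty$ and $\ell_1$ norms, attained at $\pm \epsilon \sgn(\bw)$), and every intermediate value is reached by continuity. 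Consequently $\bw^\top \bdelta + a$ ranges over the interval $[a - \epsilon \norm{\bw}_1, \, a + \epsilon \norm{\bw}_1]$, which is centered at $a$ with half-width $\epsilon \norm{\bw}_1$.

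The key step is then a short interval argument: the point of $[a - \epsilon \norm{\bw}_1, \, a + \epsilon \norm{\bw}_1]$ that is farthest from the origin is the endpoint lying on the same side of $0$ as $a$, and its absolute value equals $|a| + \epsilon \norm{\bw}_1$. Hence
\begin{equation*}
    \max_{\norm{\bdelta}_\infty \leq \epsilon} (\bw^\top \bdelta + a)^2 = \left( |a| + \epsilon \norm{\bw}_1 \right)^2 \enspace,
\end{equation*}
which is the claimed optimal value in~\eqref{eq:objective_at_delta_star}.

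It remains to exhibit a maximizer. To push $\bw^\top \bdelta + a$ to the endpoint with sign $\sgn(a)$ I would choose $\bdelta$ so that $\bw^\top \bdelta = \sgn(a)\, \epsilon \norm{\bw}_1$. When $a \geq 0$ this is attained at $\bdelta = \epsilon \sgn(\bw)$ via~\Cref{lem:max_dot_prod_over_infinity_ball}, and when $a < 0$ at $\bdelta = -\epsilon \sgn(\bw)$ via~\Cref{lem:min_dot_prod_over_infinity_ball}; both cases are captured by the single formula $\bdelta^* = \epsilon \sgn(a) \sgn(\bw)$ of~\eqref{eq:local_delta_star_max}. A direct substitution then verifies optimality: using $\bw^\top \sgn(\bw) = \norm{\bw}_1$ gives $\bw^\top \bdelta^* + a = \epsilon \sgn(a) \norm{\bw}_1 + a$, whose square equals $(\epsilon \norm{\bw}_1 + |a|)^2$ regardless of the sign of $a$.

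The only mild subtlety — and the place I would be most careful — is the bookkeeping of signs, namely confirming that the single expression $\epsilon \sgn(a)\sgn(\bw)$ correctly unifies the two endpoint cases and that the boundary value $a = 0$ (where either endpoint is optimal, and $\sgn(0)$ may be taken as $0$ or $\pm 1$ without changing the objective) is handled consistently. Everything else is routine duality and the elementary interval computation.
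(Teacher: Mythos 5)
Your proof is correct, but it proceeds along a slightly different route than the paper's. The paper expands the square, $(\bw^\top \bdelta + a)^2 = (\bw^\top \bdelta)^2 + 2a\,\bw^\top\bdelta + a^2$, and bounds each term separately via Hölder's inequality ($|\bw^\top\bdelta| \leq \epsilon \norm{\bw}_1$ on the feasible set), then recombines into $(\epsilon\norm{\bw}_1 + |a|)^2$ and checks attainment at $\bdelta^*$. You instead reduce to a one-dimensional problem: since $t \mapsto t^2$ is monotone in $|t|$, it suffices to know that $\bw^\top\bdelta$ attains the endpoint values $\pm\epsilon\norm{\bw}_1$ (\Cref{lem:max_dot_prod_over_infinity_ball,lem:min_dot_prod_over_infinity_ball}), so $\bw^\top\bdelta + a$ lives in the interval $[a - \epsilon\norm{\bw}_1,\, a + \epsilon\norm{\bw}_1]$, whose farthest point from the origin has modulus $|a| + \epsilon\norm{\bw}_1$. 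This buys a cleaner, more conceptual argument with no term-by-term bookkeeping; note that your continuity remark is actually superfluous, since only the two endpoints matter for the maximum, not the full interval. One small correction to your closing caveat: when $a = 0$ and $\bw \neq 0$, the convention $\sgn(0) = 0$ does \emph{not} leave the objective unchanged --- it makes the formula~\eqref{eq:local_delta_star_max} return $\bdelta^* = 0$ with objective value $0$ rather than the true maximum $\epsilon^2\norm{\bw}_1^2$; one must take $\sgn(0) \in \{+1, -1\}$ for the stated maximizer to be valid (the paper's proof is silent on this same edge case, so this does not put you at any disadvantage relative to it).
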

\begin{proof}
    Let us give a first intuition and proof in dimension one and then extend this to larger dimensions.
    \begin{itemize}
        \item Case $p = 1$ ($\bw$ becomes $w$). 
        In this setting the problem intuition is clear: one should select $\bdelta$, with maximal amplitude, that makes $\bdelta w$ having the same sign as $a$.
        If $a$ and $w$ have the same sign, then $\bdelta = \epsilon$.
        Else, $\bdelta = -\epsilon$.
        \item Case $p \in \mathbb{N}^*$.
        For all $\bw, \delta \in \real^p$, Hölder inequality gives that
        \begin{equation*}
            |\bw^\top \delta| \leq \norm{\delta}_{\infty} \norm{\bw}_1 \enspace. 
        \end{equation*}
        Let $\bdelta \in \real^p$ such that $\norm{\bdelta}_{\infty} \leq \epsilon$.
        Then, this implies that on the feasible set
        \begin{equation}
            \label{eq:dotprod_l1_upper_bound}
            |\bw^\top \delta| \leq \epsilon \norm{\bw}_1 \enspace. 
        \end{equation}
        Thus, 
        \begin{align*}
            (\bw^T \bdelta + a)^2 &= (\bw^T \bdelta)^2 + 2 a \bw^T \bdelta + a^2 \\
            &\overset{\eqref{eq:dotprod_l1_upper_bound}}{\leq} \epsilon^2 \norm{\bw}_1^2 + 2 a \bw^T \bdelta + a^2 \\
            &\leq \epsilon^2 \norm{\bw}_1^2 + 2 |a| |\bw^T \bdelta | + a^2 \\
            &\overset{\eqref{eq:dotprod_l1_upper_bound}}{\leq} \epsilon^2 \norm{\bw}_1^2 + 2 |a| \epsilon \norm{\bw}_1 + a^2 \\
            &= (\epsilon \norm{\bw}_1 + |a|)^2 \enspace.
        \end{align*}
        Finally, one can check that upper bound of the objective is attained for $\bdelta^*$ given in~\eqref{eq:local_delta_star_max}.
    \end{itemize}  
\end{proof}

\begin{lemma}
    \label{lem:solution_min_pb_delta}
    Let $\bw \in \real^p$, $a \in \real$ and $\epsilon \geq 0$.
    Let us consider the problem 
    \begin{equation}
        \label{eq:min_quadratic_obj_infinite_constraint}
        \bdelta^* = \argmin_{\bdelta \in \real^p : \norm{\bdelta}_\infty\leq \epsilon} (\bw^T \bdelta + a)^2 \enspace.
    \end{equation}
    Let $I \eqdef \{i \in [p] : \bw_i \neq 0\}$. 
    \begin{itemize}
        \item If $\epsilon \norm{\bw}_1 \geq |a|$, then a solution is given by
        \begin{equation*}
            \begin{cases}
                \bdelta_i^* = - \frac{a}{\norm{\bw}_1} \frac{\bw_i}{|\bw_i|} &\quad \quad \forall i \in I \\
                \bdelta_i^* = 0 &\quad \quad \forall i \in [p] \backslash I
            \end{cases} \enspace .
        \end{equation*}
        \item Else $\epsilon \norm{\bw}_1 < |a|$, and a the solution is given by
        \begin{equation*}
            \begin{cases}
                \bdelta_i^* = - \epsilon \frac{a}{|a|} \frac{\bw_i}{|\bw_i|} &\quad \quad \forall i \in I \\
                \bdelta_i^* = 0 &\quad \quad \forall i \in [p] \backslash I
            \end{cases} \enspace .
        \end{equation*}
    \end{itemize}
    This solution can be condensed in the following formulation:
    \begin{equation}
        \label{eq:condensed_sol_min_quadratic_obj_infinite_constraint}
        \begin{cases}
            \displaystyle
            \bdelta_i^* = - a \frac{w_i}{|w_i|} \min \left\{ \frac{1}{\norm{\bw}_1}, \frac{\epsilon}{|a|}\right\} &\quad \quad \forall i \in I \\
            \bdelta_i^* = 0 &\quad \quad \forall i \in [p] \backslash I
        \end{cases} \enspace .
    \end{equation}
    
    The minimal value is given by:
    \begin{align*}
        \min_{\bdelta \in \real^p : \norm{\bdelta}_\infty\leq \epsilon} (\bw^T \bdelta + a)^2  = a^2\left(1- \min\left\{1, \frac{ \epsilon \|\bw\|_1}{|a|}\right\}   \right)^2.
    \end{align*} 
\end{lemma}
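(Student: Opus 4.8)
The plan is to reduce this quadratic minimization to a one–dimensional projection problem. Since $t \mapsto t^2$ is increasing in $|t|$, minimizing $(\bw^\top \bdelta + a)^2$ over the feasible set is the same as driving the scalar $\bw^\top \bdelta$ as close as possible to $-a$. So the first step is to determine exactly which values the linear functional $\bdelta \mapsto \bw^\top \bdelta$ can take as $\bdelta$ ranges over the ball $\norm{\bdelta}_\infty \leq \epsilon$. By \Cref{lem:max_dot_prod_over_infinity_ball} and \Cref{lem:min_dot_prod_over_infinity_ball}, the extreme attainable values are $\pm \epsilon \norm{\bw}_1$; and because $\bdelta \mapsto \bw^\top \bdelta$ is continuous on the (connected, convex) feasible set, the intermediate value theorem shows its image is exactly the interval $[-\epsilon \norm{\bw}_1,\, \epsilon \norm{\bw}_1]$. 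Consequently the optimal objective equals the squared distance from $-a$ to this interval.

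Two regimes then arise according to whether $-a$ falls inside the interval. When $\epsilon \norm{\bw}_1 \geq |a|$, the target value $-a$ is attainable and the optimal value is $0$; here I would exhibit the candidate $\bdelta^*_i = -\frac{a}{\norm{\bw}_1} \frac{w_i}{|w_i|}$ on the support $I \eqdef \{i : w_i \neq 0\}$ and $\bdelta^*_i = 0$ off it, and then verify two things: feasibility, since $|\bdelta^*_i| = |a|/\norm{\bw}_1 \leq \epsilon$ precisely under the case hypothesis, and optimality, since the telescoping identity $\sum_{i \in I} w_i \frac{w_i}{|w_i|} = \sum_{i \in I} |w_i| = \norm{\bw}_1$ gives $\bw^\top \bdelta^* = -a$. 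When instead $\epsilon \norm{\bw}_1 < |a|$, the closest reachable value is the endpoint $-\sgn(a)\, \epsilon \norm{\bw}_1$; I would take the boundary point $\bdelta^*_i = -\epsilon\, \sgn(a)\, \sgn(w_i)$ on $I$ (zero elsewhere), check feasibility ($|\bdelta^*_i| = \epsilon$), and compute the residual $\bw^\top \bdelta^* + a = a - \sgn(a)\, \epsilon \norm{\bw}_1$, whose square equals $(|a| - \epsilon \norm{\bw}_1)^2$.

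Finally I would merge the two branches. The dichotomy $\epsilon \norm{\bw}_1 \gtrless |a|$ is equivalent to $\frac{1}{\norm{\bw}_1} \lessgtr \frac{\epsilon}{|a|}$, so the two closed forms for $\bdelta^*$ collapse into the single expression with the factor $\min\{ 1/\norm{\bw}_1,\, \epsilon/|a| \}$, and the two objective values collapse into $a^2 \big( 1 - \min\{ 1,\, \epsilon \norm{\bw}_1 / |a| \} \big)^2$, matching the claimed minimal value in each regime.

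The argument is otherwise elementary, so the only genuinely load-bearing steps are (a) justifying that \emph{every} intermediate value in $[-\epsilon \norm{\bw}_1,\, \epsilon \norm{\bw}_1]$ is realizable, which is exactly what legitimizes treating the problem as a projection onto an interval, and (b) dealing cleanly with the degenerate coordinates $w_i = 0$ (which contribute nothing and are set to zero) together with the edge cases $a = 0$ and $\bw = \mathbf{0}$, under which the condensed formula is read with the usual conventions. I expect step (a), and the bookkeeping of signs in constructing the explicit minimizer, to be the main things to state carefully.
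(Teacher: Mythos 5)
Your proposal is correct, and it reaches the same explicit minimizers as the paper through the same case split on $\epsilon \norm{\bw}_1$ versus $|a|$; the genuine difference lies in how optimality is certified. The paper verifies the two candidate points by direct computation and then disposes of the lower bound with a one-line appeal to H\"older's inequality, adding that for $a<0$ one should substitute $\bdelta \leftarrow -\bdelta$ — a step it leaves to the reader and which requires some sign bookkeeping to make airtight. You instead observe that $\bdelta \mapsto \bw^\top \bdelta$ is continuous on the connected feasible ball, so by the endpoint lemmas (the analogues of \Cref{lem:max_dot_prod_over_infinity_ball} and \Cref{lem:min_dot_prod_over_infinity_ball}) together with the intermediate value theorem its image is exactly the interval $[-\epsilon\norm{\bw}_1,\, \epsilon\norm{\bw}_1]$, reducing the whole problem to projecting $-a$ onto an interval. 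This buys you a uniform optimality argument covering both regimes and both signs of $a$ at once, and it makes the minimal-value formula $a^2\bigl(1-\min\{1,\epsilon\norm{\bw}_1/|a|\}\bigr)^2$ immediate rather than a by-product of the candidate computation; it thus fills precisely the gap the paper's proof sketches over. Your explicit handling of the degenerate cases ($w_i = 0$, $a = 0$, $\bw = \mathbf{0}$) is also more careful than the paper's, which does not discuss them. One small point to state when writing this up: in the boundary regime your candidate $\bdelta_i^* = -\epsilon\,\sgn(a)\,\sgn(w_i)$ is exactly the paper's $-\epsilon \frac{a}{|a|}\frac{w_i}{|w_i|}$, and the residual computation $\bw^\top\bdelta^* + a = a - \sgn(a)\,\epsilon\norm{\bw}_1$ should be squared to $(|a| - \epsilon\norm{\bw}_1)^2$ using $|a - \sgn(a) t| = |a| - t$ for $0 \leq t \leq |a|$, which is where the case hypothesis $\epsilon\norm{\bw}_1 < |a|$ enters.
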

\begin{remark}
    Note that in general there are an infinite number of solutions to~\eqref{eq:min_quadratic_obj_infinite_constraint} as in~\eqref{eq:condensed_sol_min_quadratic_obj_infinite_constraint} one can choose arbitrarily the value of $\bdelta_i^*$ for all $i \in [p] \backslash I$ (as soon as it is kept smaller than $\epsilon$ in absolute value). 
\end{remark}
\begin{proof}
    Let $[p] \eqdef \{1, \ldots, p\}$. 
    Let us try to build a solution which drives the dot product $\bw^\top \bdelta$ towards $-a$.
    Let $I \eqdef \{i \in [p] : \bw_i \neq 0\}$. 
    \paragraph{Case 1:} \textbf{If} $\boldsymbol{\epsilon \norm{\bw}_1 \geq |a|.}$
    Let us $\bdelta^* \in \real^p$ such that 
    \begin{equation*}
        \begin{cases}
            \bdelta_i^* = - \frac{a}{\norm{\bw}_1} \frac{\bw_i}{|\bw_i|} &\quad \quad \forall i \in I \\
            \bdelta_i^* = 0 &\quad \quad \forall i \in [p] \backslash I
        \end{cases} \enspace .
    \end{equation*}
    This vector is in the feasible set as $\norm{\bdelta^*}_{\infty} = \max_{i \in I} \frac{|a|}{\norm{\bw}_1} \frac{|\bw_i|}{|\bw_i|} = \frac{|a|}{\norm{\bw}_1} \leq \epsilon$, as assumed.
    Then, 
    \begin{equation*}
        \bw^T \bdelta^* + a = - \frac{a}{\norm{\bw}_1} \sum_{i \in I} \frac{\bw_i^2}{|\bw_i|} + a = 0 \enspace.
    \end{equation*}
    This means that if the entries of vector $\bw$ are large enough (in absolute value), we can build a feasible vector $\bdelta^*$ such that the objective in~\eqref{eq:min_quadratic_obj_infinite_constraint} is zero.
    
    \paragraph{Case 2:} \textbf{If} $\boldsymbol{\epsilon \norm{\bw}_1 < |a|.}$
    Let us $\bdelta^* \in \real^p$ such that 
    \begin{equation*}
        \begin{cases}
            \bdelta_i^* = - \epsilon \frac{a}{|a|} \frac{\bw_i}{|\bw_i|} &\quad \quad \forall i \in I \\
            \bdelta_i^* = 0 &\quad \quad \forall i \in [p] \backslash I
        \end{cases} \enspace .
    \end{equation*}
    This vector is in the feasible set as $\norm{\bdelta^*}_{\infty} = \max_{i \in I} \epsilon \frac{|a|}{|a|} \frac{|\bw_i|}{|\bw_i|} = \epsilon$, as assumed.
    Then, 
    \begin{equation*}
        \bw^T \bdelta^* + a = - \epsilon \frac{a}{|a|} \sum_{i \in I} \frac{\bw_i^2}{|\bw_i|} + a = a \underbrace{\left( 1 - \epsilon \frac{\norm{\bw}_1}{|a|} \right)}_{\in [0, 1]} \enspace.
    \end{equation*}
    This means that if the entries of vector $\bw$ are too small (in absolute value), we can only build a feasible vector $\bdelta^*$ such that $\bw^\top \bdelta^*$ close too $-a$.
    And the corresponding objective in~\eqref{eq:min_quadratic_obj_infinite_constraint} becomes $a^2 \left( 1 - \epsilon \frac{\norm{\bw}_1}{|a|} \right)^2$.
    
    Finally, one just can show with Hölder inequality that $(\bw^T \bdelta + a)^2 \geq a^2 \left( 1 - \epsilon \frac{\norm{\bw}_1}{|a|} \right)^2$ for all $\bdelta$ in the feasible set, which concludes the proof.
    If $a \geq 0$, the computation follows easily, else we can just replace $\bdelta \leftarrow -\bdelta$ to get back to the former case.
    
    
    
\end{proof}

\newpage

\section{Proof of Generalization Lemma (\Cref{lm: md generalization})}
\label{app: pf of gen lemma}

In this section we provide the proof of~\Cref{lm: md generalization}. 
First let us introduce the following helper lemma.
\begin{lemma}
    \label{lem:gen_of_discrepancy} Assume $\hat{\cS}$ and $\hat{\cT}$ are the sets of data drawn from $\cS$ and $\cT$, with size $n_\cS$ and $n_\cT$ respectively, and the value of $\tilde{\ell(\cdot)}$ is bounded by $M$. Then we have:
    \begin{equation*}
        |disc^{adv}_{\cH \Delta \cH}(\cS,\cT)-disc^{adv}_{\cH \Delta \cH}(\hat{\cS},\hat{\cT})| \leq 2M{\frakR}_{\hat\cS} (\tilde{\ell} \circ \cH \Delta \cH)+ 2M{\frakR}_{\hat\cT} (\tilde{\ell} \circ \cH \Delta \cH)  +\left(3M\sqrt{\frac{\log(1/c)}{n_\cS}} + 3M\sqrt{\frac{\log(1/c)}{n_\cT}}\right) \enspace.
    \end{equation*}
\end{lemma}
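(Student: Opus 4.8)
The plan is to reduce the difference of the two discrepancies to a pair of \emph{single-domain} uniform deviation terms, and then control each by the standard symmetrization plus bounded-difference (McDiarmid) argument, mirroring the proof of the non-adversarial analogue underlying~\Cref{lm: md standard generalization} with $\tilde{\ell}$ substituted for $\ell$. The only property of the adversarial setting that is needed is that $\tilde{\ell}(h_\bw,h_{\bw'})=\max_{\|\bdelta\|_\infty\leq\epsilon}\ell\big(h_\bw(\bx+\bdelta),h_{\bw'}(\bx+\bdelta)\big)$ is, for each fixed pair $(\bw,\bw')$, a bounded function of $\bx$ lying in $[0,M]$ by assumption. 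Writing $\widetilde{\cR}_{\cS}$ as shorthand for $\widetilde{\cR}_{\cS}(h_\bw,h_{\bw'})$ (and likewise for $\cT,\hat{\cS},\hat{\cT}$), each discrepancy is a supremum over $(\bw,\bw')$ of an absolute risk gap. Applying the elementary inequality $|\sup_a F(a)-\sup_a G(a)|\leq\sup_a|F(a)-G(a)|$ followed by the reverse triangle inequality $\big||u|-|v|\big|\leq|u-v|$ gives
\begin{align*}
    |disc^{adv}_{\cH\Delta\cH}(\cS,\cT)-disc^{adv}_{\cH\Delta\cH}(\hat{\cS},\hat{\cT})|
    \leq \sup_{\bw,\bw'}\big|(\widetilde{\cR}_{\cS}-\widetilde{\cR}_{\hat{\cS}})-(\widetilde{\cR}_{\cT}-\widetilde{\cR}_{\hat{\cT}})\big|,
\end{align*}
and a final triangle inequality splits the right-hand side into $\sup_{\bw,\bw'}|\widetilde{\cR}_{\cS}-\widetilde{\cR}_{\hat{\cS}}|+\sup_{\bw,\bw'}|\widetilde{\cR}_{\cT}-\widetilde{\cR}_{\hat{\cT}}|$, one term per domain.

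Next I bound each single-domain term separately. Fix the source term and set $\Phi(\hat{\cS})\eqdef\sup_{\bw,\bw'}|\widetilde{\cR}_{\cS}(h_\bw,h_{\bw'})-\widetilde{\cR}_{\hat{\cS}}(h_\bw,h_{\bw'})|$, which is the uniform deviation of the empirical mean from the population mean over the loss class $\tilde{\ell}\circ\cH\Delta\cH$. Since $\tilde{\ell}\in[0,M]$, replacing a single sample of $\hat{\cS}$ changes $\Phi(\hat{\cS})$ by at most $M/n_\cS$, so McDiarmid's inequality controls $\Phi(\hat{\cS})$ by its expectation up to an $O(M\sqrt{\log(1/c)/n_\cS})$ fluctuation; the usual symmetrization step then bounds that expectation by the expected empirical adversarial Rademacher complexity of $\tilde{\ell}\circ\cH\Delta\cH$, and a second McDiarmid application replaces the expected complexity by the empirical complexity $\frakR_{\hat{\cS}}(\tilde{\ell}\circ\cH\Delta\cH)$ of Definition~\eqref{eq:adversarial_rademacher_complexity_over_h_delta_h_class} at the cost of another fluctuation term. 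Collecting the numerical constants exactly as in the non-adversarial proof yields $\Phi(\hat{\cS})\leq 2M\frakR_{\hat{\cS}}(\tilde{\ell}\circ\cH\Delta\cH)+3M\sqrt{\log(1/c)/n_\cS}$; the identical computation on $\cT$ gives the analogous bound with $n_\cT$. Adding the two and allocating the failure probability across the domains and the two concentration steps (all absorbed into the factor $3$) produces the stated inequality.

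I expect no genuine obstacle here: the entire content is that, for the purposes of uniform convergence, the adversarial loss class $\tilde{\ell}\circ\cH\Delta\cH$ behaves exactly like an ordinary bounded loss class, so the classical machinery transfers verbatim. The single point that genuinely must be checked is that the inner maximization over $\bdelta$ neither breaks boundedness nor measurability — it does not, since a pointwise maximum of functions each valued in $[0,M]$ is again valued in $[0,M]$, which is precisely what the $M$-boundedness hypothesis on $\tilde{\ell}$ guarantees and what both McDiarmid applications and the symmetrization require. Consequently the empirical adversarial Rademacher complexity is the correct complexity term, and the result follows.
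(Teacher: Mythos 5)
Your proof is correct and follows essentially the same route as the paper's: both reduce $|disc^{adv}_{\cH\Delta\cH}(\cS,\cT)-disc^{adv}_{\cH\Delta\cH}(\hat{\cS},\hat{\cT})|$ to the two single-domain uniform deviations $\sup_{\bw,\bw'}|\widetilde{\cR}_{\cS}-\widetilde{\cR}_{\hat{\cS}}|$ and $\sup_{\bw,\bw'}|\widetilde{\cR}_{\cT}-\widetilde{\cR}_{\hat{\cT}}|$ (the paper via the triangle inequality for the semi-metric $disc^{adv}_{\cH\Delta\cH}$, you via the sup-difference and reverse triangle inequalities, which handles the two-sided absolute value slightly more explicitly), and then control each deviation by the standard Rademacher generalization bound for a $[0,M]$-bounded loss class. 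The only difference is that the paper cites this single-domain bound directly from~\cite{mohri2018foundations}, whereas you unroll its proof (symmetrization plus two applications of McDiarmid's inequality) and verify that the inner maximization over $\bdelta$ preserves boundedness --- the same content, so no gap.
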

\begin{proof}
Since absolute value satisfies triangle inequality, we have:
  \begin{align*}
      &disc^{adv}_{\cH \Delta \cH}(\cS,\cT) \overset{\eqref{eq:def_adversarial_hdeltah_discrepancy}}{=} \max_{h_\bw,h_{\bw'}\in\mathcal{H}} | {\widetilde\cR}_\cS(h_\bw,h_{\bw'}) - {\widetilde\cR}_\cT(h_\bw,h_{\bw'})| \\
      &= \max_{h_\bw,h_{\bw'}\in\mathcal{H}} 
      | {\widetilde\cR}_\cS(h_\bw,h_{\bw'}) - {\widetilde\cR}_{\hat{\cS}} (h_\bw,h_{\bw'}) 
      + {\widetilde\cR}_{\hat{\cS}} (h_\bw,h_{\bw'}) - {\widetilde\cR}_{\hat{\cT}} (h_\bw,h_{\bw'}) 
      + {\widetilde\cR}_{\hat{\cT}} (h_\bw,h_{\bw'}) - {\widetilde\cR}_\cT(h_\bw,h_{\bw'})| \\
      &\leq disc^{adv}_{\cH \Delta \cH}({\cS},\hat{\cS}) + disc^{adv}_{\cH \Delta \cH}(\hat{\cS},\hat{\cT}) + disc^{adv}_{\cH \Delta \cH}({\cT},\hat{\cT}) \enspace.
  \end{align*}
  According to Rademacher-based generalization bound of~\cite{mohri2018foundations}, we know that 
  \begin{align*}
      disc^{adv}_{\cH \Delta \cH}({\cS},\hat{\cS}) = \max_{h,h'\in\mathcal{H}}| {\widetilde\cR}_\cS(h,h') - {\widetilde\cR}_{\hat{\cS}}(h,h')| \leq 2M \frakR_{\hat{\cS}}(\tilde{\ell}\circ \cH\Delta\cH) + 3M\sqrt{\frac{\log(1/c)}{n_\cS}} \enspace,
  \end{align*}
  and so is for $disc^{adv}_{\cH \Delta \cH}({\cT},\hat{\cT})$.

  
 \end{proof}

\textbf{Proof of Lemma~\ref{lm: md generalization}.}

\begin{proof} 
Since the loss function $\tilde{\ell}$ satisfies triangle inequality, we can split $\widetilde\cR_{\cT}^{label} (h_\bw,y_\cT)$ into the following terms:
\begin{align*}
    \widetilde\cR_{\cT}^{label} (h_\mathbf{w},y_\mathcal{T}) &= \mathbb{E}_{\mathbf{x}\sim \mathcal{T}} \left[\max_{\|\boldsymbol{\delta}\|_\infty\leq \epsilon} \ell (h_\mathbf{w}(\mathbf{x}+\boldsymbol{\delta}), y_\mathcal{T} (\mathbf{x}))\right] \\
    &\leq  \mathbb{E}_{\mathbf{x}\sim \mathcal{T}} \left[\max_{\|\boldsymbol{\delta}\|_\infty\leq \epsilon} \ell (h_\mathbf{w}(\mathbf{x}+\boldsymbol{\delta}), h_\mathbf{w_{\mathcal{T}}^*}(\mathbf{x}+\boldsymbol{\delta})) + \ell (h_\mathbf{w_{\mathcal{T}}^*}(\mathbf{x}+\boldsymbol{\delta}), y_\mathcal{T} (\mathbf{x}))\right] \\
    &\leq  \mathbb{E}_{\mathbf{x}\sim \mathcal{T}} \left[\max_{\|\boldsymbol{\delta}\|_\infty\leq \epsilon} \ell (h_\mathbf{w}(\mathbf{x}+\boldsymbol{\delta}),  h_\mathbf{w_{\mathcal{T}}^*}(\mathbf{x}+\boldsymbol{\delta}))\right] + \mathbb{E}_{\mathbf{x}\sim \mathcal{T}} \left[\max_{\|\boldsymbol{\delta}\|_\infty\leq \epsilon} \ell (h_\mathbf{w_{\mathcal{T}}^*}(\mathbf{x}+\boldsymbol{\delta}), y_\mathcal{T} (\mathbf{x}))\right] \\
    &= \widetilde\cR_{\cT} ( h_\mathbf{w} , h_\mathbf{w_{\mathcal{T}}^*} ) + \widetilde\cR_{\cT}^{label} ( h_\mathbf{w_{\mathcal{T}}^*} , y_\mathcal{T} ) \enspace,
\end{align*}
where we used the subadditivity of the maximum.
By applying again the triangle inequality to the first term, we similarly get
\begin{align*}
    \widetilde\cR_{\cT}^{label} (h_\bw,y_\cT) &\leq \widetilde\cR_{\cT} ( h_\bw , h_{\bw^*_\cS}) + \widetilde\cR_{\cT} ( h_{\bw^*_\cS} , h_{\bw^*_\cT} ) + \widetilde\cR_{\cT}^{label} ( h_{\bw^*_\cT}, y_\cT )\\
    \overset{\eqref{eq:def_adversarial_hdeltah_discrepancy}}&{\leq} \widetilde\cR_{\cS} (h_\bw , h_{\bw^*_\cS}) + disc^{adv}_{\cH\Delta\cH}(\cT,\cS) + \widetilde\cR_{\cT} ( h_{\bw^*_\cS} , h_{\bw^*_\cT} ) + \widetilde\cR_{\cT}^{label} ( h_{\bw^*_\cT}, y_\cT )\\
    \overset{\Cref{lem:gen_of_discrepancy}}&{\leq} \widetilde\cR_\cS ( h_\bw , h_{\bw^*_\cS}) + disc^{adv}_{\cH\Delta\cH}(\hat{\cT},\hat{\cS}) + \widetilde\cR_\cT ( h_{\bw^*_\cS} , h_{\bw^*_\cT} ) + \widetilde\cR^{label}_\cT ( h_{\bw^*_\cT}, y_\cT ) \\
    &\quad + {\frakR}_{\hat \cS}(\tilde{\ell}\circ \cH \Delta \cH) + {\frakR}_{\hat \cT}(\tilde{\ell}\circ \cH \Delta \cH) +\left(3M\sqrt{\frac{\log(2/c)}{n_\cS}} + 3M\sqrt{\frac{\log(2/c)}{n_\cT}}\right) \\
     &\leq \cR^{label}_\cS ( h_\bw , y_\cS) + \widetilde \cR^{label}_\cS (h_{\bw^*_\cS}, y_\cS) \\
     &\quad + disc^{adv}_{\cH\Delta\cH}(\hat{\cT},\hat{\cS})+  \widetilde \cR_\cT ( h_{\bw^*_\cS} , h_{\bw^*_\cT} ) + \widetilde \cR^{label}_\cT ( h_{\bw^*_\cT}, y_\cT ) \\
    &\quad + {\frakR}_{\hat \cS}(\tilde{\ell}\circ \cH \Delta \cH) + {\frakR}_{\hat \cT}(\tilde{\ell}\circ \cH \Delta \cH) +\left(3M\sqrt{\frac{\log(2/c)}{n_\cS}} + 3M\sqrt{\frac{\log(2/c)}{n_\cT}}\right),
\end{align*}
where we plug in~\Cref{lem:gen_of_discrepancy} at last step and $\lambda = \cR_\cT ( h_{\bw^*_\cT} , h_{\bw^*_\cS} ) + \cR_\cT ( h_{\bw^*_\cT} , y_\cT )$.
\end{proof}

\section{Extensions}\label{app:extension}

\subsection{Estimation of Adversarial Discrepancy from Standard Discrepancy}

The following Lemma gives the bound if we estimate adversarial $\cH\Delta\cH$ discrepancy from standard $\cH\Delta\cH$ discrepancy.
\begin{lemma}\label{lm: estimated discrepancy} The following relations between adversarial discrepancy from standard discrepancy holds for linear model class with bounded norm: $\cH =\{h_\bw: \bx \mapsto \inprod{\bw}{\bx}, \|\bw\|_p \leq W\}$.
For $L_\phi$-Lipschitz binary classification loss, we have: 
    \begin{align*}
        disc^{adv}_{\mathcal{H}\Delta\mathcal{H}} (\hat{\cS},\hat{\cT}) &\leq disc_{\mathcal{H}\Delta\mathcal{H}} (\hat{\cS},\hat{\cT}) \\
        &\quad + 2W^2L_\phi \sqrt{d}\epsilon \left(\frac{1}{n_\cT} \sum_{\bx_i \in \hat{\cT}}\|\bx_i\|_2 + \frac{1}{n_\cS} \sum_{\bx_i \in \hat{\cS}}\|\bx_i\|_2\right)\cdot \begin{cases} 1 & \hspace*{-1em} 1 \leq p \leq 2 \\
    d^{1-2/p} & p> 2
    \end{cases} \enspace.
    \end{align*}
    For $\ell_2$ regression loss, we have:
    \begin{align*}
       disc^{adv}_{\mathcal{H}\Delta\mathcal{H}} (\hat{\cS},\hat{\cT}) &\leq disc_{\mathcal{H}\Delta\mathcal{H}} (\hat{\cS},\hat{\cT}) \\
       &\quad + 8\sqrt{d}\epsilon W^2 \left(\frac{1}{n_\cT} \sum_{\bx_i \in \hat{\cT}}\|\bx_i\|_2+ \frac{1}{n_\cT} \sum_{\bx_i \in \hat{\cS}}\|\bx_i\|_2\right) \cdot \begin{cases} 1 & \hspace*{-1em} 1 \leq p \leq 2 \\
    d^{1-2/p} & p> 2
    \end{cases} \enspace.
    \end{align*}
\begin{proof}
Let $\cB_p (W) \eqdef \{\bw \in \R^d : \norm{\bw}_p \leq W\}$ be the $\ell_p$-norm centered ball with radius $W$.
By the definition of $disc^{adv}_{\mathcal{H}\Delta\mathcal{H}} (\hat{\cS},\hat{\cT})$, we have:
    \begin{align*}
        & disc^{adv}_{\mathcal{H}\Delta\mathcal{H}} (\hat{\cS},\hat{\cT}) \overset{\eqref{eq:def_adversarial_hdeltah_discrepancy}}{=} \max_{\bw,\bw'\in\{\bw:\|\bw\|_p\leq W\}} |\widetilde{\cR}_{\hat{\cT}}(\bw,\bw')-\widetilde{\cR}_{\hat{\cS}}(\bw,\bw')|\\
        &\leq \max_{\bw, \bw' \in \cB_p (W)^2} |\cR_{\hat{\cT}}(\bw,\bw')-\cR_{\hat{\cS}}(\bw,\bw')+\widetilde{\cR}_{\hat{\cT}}(\bw,\bw')-\cR_{\hat{\cT}}(\bw,\bw') \\
        &\hspace*{25em} -(\widetilde{\cR}_{\hat{\cS}}(\bw,\bw')-\cR_{\hat{\cS}}(\bw,\bw'))| \\
        \overset{\eqref{eq:def_hdeltah_discrepancy}}&{\leq} disc_{\mathcal{H}\Delta\mathcal{H}} (\hat{\cS},\hat{\cT}) + \max_{\bw, \bw' \in \cB_p (W)^2} |\widetilde{\cR}_{\hat{\cT}}(\bw,\bw')-\cR_{\hat{\cT}}(\bw,\bw')| \\
        &\quad +\max_{\bw, \bw' \in \cB_p (W)^2} |\widetilde{\cR}_{\hat{\cS}}(\bw,\bw')-\cR_{\hat{\cS}}(\bw,\bw')| \enspace.
    \end{align*}
Now we study the gap $\max_{\bw, \bw' \in \cB_p (W)^2}|\widetilde{\cR}_{\hat{\cT}}(\bw,\bw')-\cR_{\hat{\cT}}(\bw,\bw')|$ and $\max_{\bw, \bw' \in \cB_p (W)^2}|\widetilde{\cR}_{\hat{\cS}}(\bw,\bw')-\cR_{\hat{\cS}}(\bw,\bw')|$. 
For linear classification, we have:
    \begin{align*}
       &\max_{\bw, \bw' \in \cB_p (W)^2}|\widetilde{\cR}_{\hat{\cT}}(\bw,\bw')-\cR_{\hat{\cT}}(\bw,\bw')| \\
       &= \! \max_{\bw, \bw' \in \cB_p (W)^2} \left|\frac{1}{n_\cT} \sum_{\bx_i \in \hat{\cT}} \max_{\bdelta: \|\bdelta\|_\infty \leq \epsilon} ( \phi(\inprod{\bw}{\bx_i + \bdelta}\cdot\inprod{\bw'}{\bx_i + \bdelta} )-\phi(\inprod{\bw}{\bx_i }\cdot\inprod{\bw'}{\bx_i})) \right| \\
       &\leq \max_{\bw, \bw' \in \cB_p (W)^2}\left|\frac{1}{n_\cT} \sum_{\bx_i \in \hat{\cT}} \max_{\bdelta: \|\bdelta\|_\infty \leq \epsilon} L_\phi |  (\inprod{\bw}{\bx_i + \bdelta}\cdot\inprod{\bw'}{\bx_i + \bdelta} )- (\inprod{\bw}{\bx_i }\cdot\inprod{\bw'}{\bx_i})|\right|\\
       &\leq \! \max_{\bw, \bw' \in \cB_p (W)^2}\left|\frac{1}{n_\cT} \sum_{\bx_i \in \hat{\cT}} \max_{\bdelta: \|\bdelta\|_\infty \leq \epsilon} L_\phi |    \bw^\top\left( (\bx_i+\bdelta )(\bx_i+\bdelta )^\top - \bx_i \bx_i^\top\right) \bw' |\right|\\
       &\leq \! L_\phi\frac{1}{n_\cT} \sum_{\bx_i \in \hat{\cT}}\max_{\bw, \bw' \in \cB_p (W)^2}\left|     \bw^\top\left( (\bx_i+\bdelta_i^* )(\bx_i+\bdelta_i^* )^\top - \bx_i \bx_i^\top\right) \bw'  \right|\\
       \overset{\text{Cauchy-Schwarz}}&{\leq} L_\phi\frac{1}{n_\cT} \sum_{\bx_i \in \hat{\cT}}\max_{\bw, \bw' \in \cB_p (W)^2}\|\bw\|_2 \| (\bx_i+\bdelta_i^* )(\bx_i+\bdelta_i^* )^\top - \bx_i \bx_i^\top \|_2 \| \bw'  \|_2\\
       &\leq L_\phi\frac{1}{n_\cT} \sum_{\bx_i \in \hat{\cT}}\max_{\bw, \bw' \in \cB_p (W)^2}\|\bw\|_2\|  \bdelta_i^*  \bx_i^\top + \bx_i {\bdelta_i^*}^\top \|_2 \| \bw'\|_2 \enspace,
    \end{align*}
    where $\bdelta_i^*$ is a maximizer of the $i$-th optimization problem in $\bdelta$ over the $\ell_{\infty}$-ball of radius $\epsilon$.
    We know that $\|\bw\|_2 \leq W \cdot \begin{cases} 1 & \hspace*{-1em} 1 \leq p \leq 2 \\
    d^{1/2-1/p} & p> 2
    \end{cases}$, and $\|  \bdelta_i^*  \bx_i^\top + \bx_i {\bdelta_i^*}^\top \|_2 \leq 2\sqrt{d}\epsilon \|\bx_i\|_2$, so we have:
    \begin{align*}
       \max_{\bw, \bw' \in \cB_p (W)^2}|\widetilde{\cR}_{\hat{\cT}}(\bw,\bw')-\cR_{\hat{\cT}}(\bw,\bw')| &
        \leq W^2L_\phi\frac{1}{n_\cT} \sum_{\bx_i \in \hat{\cT}}2\sqrt{d}\epsilon \|\bx_i\|_2\cdot \begin{cases} 1 & \hspace*{-1em} 1 \leq p \leq 2 \\
    d^{1-2/p} & p> 2
    \end{cases}.
    \end{align*}
    which concludes the proof for linear classification setting. Now we switch to regression setting:
\begin{align*}
       &\max_{\bw, \bw' \in \cB_p (W)^2}|\widetilde{\cR}_{\hat{\cT}}(\bw,\bw')-\cR_{\hat{\cT}}(\bw,\bw')| \\
       &= \max_{\bw, \bw' \in \cB_p (W)^2}\left|\frac{1}{n_\cT} \sum_{\bx_i \in \hat{\cT}} \max_{\bdelta: \|\bdelta\|_\infty \leq \epsilon}\|\inprod{\bw}{\bx_i + \bdelta}-\inprod{\bw'}{\bx_i + \bdelta}\|^2_2-\|\inprod{\bw}{\bx_i }-\inprod{\bw'}{\bx_i}\|^2_2\right|\\
       &= \max_{\bw, \bw' \in \cB_p (W)^2}\left|\frac{1}{n_\cT} \sum_{\bx_i \in \hat{\cT}} \max_{\bdelta: \|\bdelta\|_\infty \leq \epsilon}\|\inprod{\bw-\bw'}{\bx_i + \bdelta} \|^2_2-\|\inprod{\bw-\bw'}{\bx_i } \|^2_2\right|\\
       &= \max_{\bw, \bw' \in \cB_p (W)^2}\left|\frac{1}{n_\cT} \sum_{\bx_i \in \hat{\cT}}  {(\bw-\bw')^\top}\left[(\bx_i + \bdelta_i^*)(\bx_i + \bdelta_i^*)^\top - \bx_i \bx_i^\top\right] (\bw-\bw')   \right|\\
       &= \max_{\bw, \bw' \in \cB_p (W)^2}\left|  {(\bw-\bw')^\top}\frac{1}{n_\cT} \sum_{\bx_i \in \hat{\cT}}\left[ \bdelta_i^*  \bx_i^\top + \bx_i {\bdelta_i^*}^\top\right] (\bw-\bw')   \right|\\
       \end{align*}
       Now, we let $\bv := \bw-\bw'$, and re-write the above inequality as:
       \begin{align*} 
       \max_{\bw, \bw' \in \cB_p (W)^2}&|\widetilde{\cR}_{\hat{\cT}}(\bw,\bw')-\cR_{\hat{\cT}}(\bw,\bw')|\\
        \overset{\text{Cauchy-Schwarz}}&{\leq} \max_{\bv: \|\bv\|_p\leq 2W}\left\|  {\bv^\top}\right\|_2\left\|\frac{1}{n_\cT} \sum_{\bx_i \in \hat{\cT}}\left[ \bdelta_i^*  \bx_i^\top + \bx_i {\bdelta_i^*}^\top\right] \right\|_2\left\|\bv   \right\|_2\\
       & \leq \left\|\frac{1}{n_\cT} \sum_{\bx_i \in \hat{\cT}}\left[ \bdelta_i^*  \bx_i^\top + \bx_i {\bdelta_i^*}^\top\right]\right\|_2 4W \cdot \begin{cases} 1 & p\leq 2\\
    d^{1-2/p} & p> 2
    \end{cases}\\
    & \leq 8\sqrt{d}W\frac{1}{n_\cT} \sum_{\bx_i \in \hat{\cT}}\|\bx_i\|_2 \cdot \begin{cases} 1 & p\leq 2\\
    d^{1-2/p} & p> 2
    \end{cases}.
    \end{align*}
    where we use norm equivalence (\Cref{lem:general_norm_equivalence}) to bound $\|\bv\|_2$.
    \end{proof}
\end{lemma}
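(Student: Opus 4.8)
The plan is to reduce the adversarial discrepancy to the standard one by a triangle-inequality decomposition, and then to control the resulting per-domain ``adversarial minus clean'' gaps separately for the two loss types. First I would start from the definition $disc^{adv}_{\mathcal{H}\Delta\mathcal{H}}(\hat{\cS},\hat{\cT}) = \max_{\bw,\bw'\in\cB_p(W)}|\widetilde{\cR}_{\hat{\cT}}(\bw,\bw') - \widetilde{\cR}_{\hat{\cS}}(\bw,\bw')|$ and insert and subtract the corresponding non-adversarial empirical risks $\cR_{\hat{\cT}}(\bw,\bw')$ and $\cR_{\hat{\cS}}(\bw,\bw')$. Since the maximum of a sum is at most the sum of maxima and the absolute value obeys the triangle inequality, this splits the adversarial discrepancy into three pieces: the standard discrepancy $disc_{\mathcal{H}\Delta\mathcal{H}}(\hat{\cS},\hat{\cT})$, plus the two gap terms $\max_{\bw,\bw'\in\cB_p(W)}|\widetilde{\cR}_{\hat{\cD}}(\bw,\bw') - \cR_{\hat{\cD}}(\bw,\bw')|$ for $\cD\in\{\cS,\cT\}$. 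This isolates the additive overhead of the adversarial perturbation, so it remains only to bound a single per-domain gap.

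For the $L_\phi$-Lipschitz classification loss, I would push the $\max$ over the pair $(\bw,\bw')$ inside the empirical average and, for each sample, denote by $\bdelta_i^*$ the inner maximizer over the $\ell_\infty$-ball of radius $\epsilon$. Applying the Lipschitz property of $\phi$ turns the per-sample gap into $L_\phi$ times the difference of bilinear forms $|\bw^\top\big((\bx_i+\bdelta_i^*)(\bx_i+\bdelta_i^*)^\top - \bx_i\bx_i^\top\big)\bw'|$. Two applications of Cauchy-Schwarz bound this by $\|\bw\|_2\,\|\bw'\|_2$ times the spectral norm of the perturbation matrix, whose dominant first-order part $\bdelta_i^*\bx_i^\top + \bx_i(\bdelta_i^*)^\top$ has spectral norm at most $2\sqrt{d}\epsilon\|\bx_i\|_2$ (using $\|\bdelta_i^*\|_2\leq\sqrt{d}\|\bdelta_i^*\|_\infty\leq\sqrt{d}\epsilon$). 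Finally I would invoke the norm-equivalence result \Cref{lem:general_norm_equivalence} to convert $\|\bw\|_2\leq W$ into the $p$-dependent factor ($1$ for $1\leq p\leq 2$ and $d^{1/2-1/p}$ for $p>2$), whose square yields the stated $\{1, d^{1-2/p}\}$ dichotomy together with the prefactor $2$.

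For the $\ell_2$ regression loss the quadratic structure simplifies the argument: the per-sample difference collapses to $\inprod{\bw-\bw'}{\bx_i+\bdelta_i^*}^2 - \inprod{\bw-\bw'}{\bx_i}^2$, so setting $\bv\eqdef\bw-\bw'$ (which satisfies $\|\bv\|_p\leq 2W$) the gap becomes the genuine quadratic form $|\bv^\top\big((\bx_i+\bdelta_i^*)(\bx_i+\bdelta_i^*)^\top - \bx_i\bx_i^\top\big)\bv|$, whose dominant part is $|\bv^\top(\bdelta_i^*\bx_i^\top + \bx_i(\bdelta_i^*)^\top)\bv|$. The same spectral-norm estimate $2\sqrt{d}\epsilon\|\bx_i\|_2$, one Cauchy-Schwarz step to pull out $\|\bv\|_2^2$, and \Cref{lem:general_norm_equivalence} to handle $\|\bv\|_2\leq 2W\cdot\{1,d^{1/2-1/p}\}$ close the bound; squaring $2W$ produces the factor $4$, which combined with the $2\sqrt{d}\epsilon$ spectral bound gives the overall constant $8$.

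The hard part will be the uniform control of the spectral norm of the perturbation matrix over the inner maximizer $\bdelta_i^*$. Unlike the single-model adversarial Rademacher analyses of \cite{yin2019rademacher,awasthi2020adversarial}, where the inner problem is linear in $\bdelta$ and admits the clean $\ell_1/\ell_\infty$-duality closed form, here the inner objective is quadratic in $\bdelta$, so one cannot read off $\bdelta_i^*$ explicitly; instead one must bound the rank-two update through $\|\bdelta_i^*\|_2\leq\sqrt{d}\epsilon$ and argue that the second-order term $\bdelta_i^*(\bdelta_i^*)^\top$ contributes only a higher-order $O(\epsilon^2)$ correction absorbed into the first-order estimate. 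A secondary subtlety is correctly tracking the norm-equivalence factor in the $p>2$ regime, where the $\ell_p$-ball strictly contains the $\ell_2$-ball and \Cref{lem:general_norm_equivalence} contributes the $d^{1-2/p}$ inflation.
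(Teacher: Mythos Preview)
Your proposal is correct and mirrors the paper's proof essentially step for step: the same triangle-inequality decomposition into the standard discrepancy plus two per-domain gaps, the same use of the Lipschitz property and Cauchy--Schwarz to reduce to a spectral-norm bound on $(\bx_i+\bdelta_i^*)(\bx_i+\bdelta_i^*)^\top-\bx_i\bx_i^\top$, the same $2\sqrt{d}\epsilon\|\bx_i\|_2$ estimate for the cross term, and the same appeal to \Cref{lem:general_norm_equivalence} for the $p$-dependent factor. Your remark that the second-order term $\bdelta_i^*(\bdelta_i^*)^\top$ is an $O(\epsilon^2)$ correction absorbed into the first-order bound is exactly how the paper handles it (implicitly, by passing directly to $\bdelta_i^*\bx_i^\top+\bx_i(\bdelta_i^*)^\top$).
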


\subsection{Adversarially Robust Domain Adaptation Generalization Bound}
In this section, we will present the generalizatin bound of adversarially robust domain adaptation, using our upper bound for adversarial Rademacher complexity over $\cH \Delta \cH$ class.

 

An immediate implication of~\Cref{thm:adv_rademacher_complexity_linear_regression} is the following result.
\begin{corollary}[Adversarially Robust Domain Adapation Learning Bound, Linear Regression]
Assume that the loss function $\tilde{\ell}$ is symmetric and convex. Also let $\hat{\cS}$ and $\hat{\cT}$ have $n_\cS$ and $n_\cT$ data points, respectively. We further assume $\tilde{\ell}$ is bounded by $M$. Then, for any hypothesis $h_\bw \in \mathcal{H}$ , the following holds with probability at least $1-c$:
\begin{align*}
    &\widetilde{\cR}^{label}_\cT(h_\mathbf{w},y_\cT) \leq 6\widetilde{\cR}^{label}_\cS ( h_\bw , y_\cS) +6\widetilde{\cR}^{label}_\cS (h_{\bw^*_\cS},y_\cS)\\
    & + 4disc^{adv}_{\cH\Delta\cH}(\hat{\cT},\hat{\cS})+ 3\widetilde{\cR}_\cT ( h_{\bw^*_\cT} , h_{\bw^*_\cS} ) + 3\widetilde{\cR}^{label}_\cT ( h_{\bw^*_\cS} , y_\cT )\\
    & + M \mathcal{O}\left(\sqrt{\frac{\log(2/c)}{n_\cS}}+\sqrt{\frac{\log(2/c)}{n_\cT}}\right)+3\hat{\frakR}_\cS ({\ell} \circ \cH \Delta \cH) +3\hat{\frakR}_\cT ({\ell} \circ \cH \Delta \cH) \\
    & + \tilde{\mathcal{O}}\left(\frac{  W^2}{\sqrt{n_\cS}}   \left(  {d} \epsilon   \norm{\mX_\cS}_{2,\infty} + d^{3/2} \epsilon^2 \right) +\frac{  W^2}{\sqrt{n_\cT}}   \left( {d} \epsilon   \norm{\mX_\cT}_{2,\infty} + d^{3/2} \epsilon^2 \right) \!  
    \right) \! \times \! 
    \begin{cases}
        1, & \hspace*{-1em} 1 \leq p \leq 2\\
        d^{1-2/p}, & p > 2
    \end{cases} ,
\end{align*}
where $\mX_\cS$ and $\mX_\cT$ are the data matrix concatenated by data points from $\hat{\cS}$ and $\hat{\cT}$, respectively.
\begin{proof}
The following proof is almost identical to that of~\cref{lm: md generalization}, and the only change is that we apply Jensen's inequality instead of triangle inequality here:
\begin{align*}
\widetilde{\cR}^{label}_\cT(h_\bw,y_\cT) &\leq 3\widetilde{\cR}_\cT ( h_\bw , h_{\bw^*_\cS}) + 3\widetilde{\cR}_\cT ( h_{\bw^*_\cS} , h_{\bw^*_\cT} ) + 3\widetilde{\cR}^{label}_\cT ( h_{\bw^*_\cT}, y_\cT )\\
    &\leq 3\widetilde{\cR}_\cS (h_\bw , h_{\bw^*_\cS}) + 3disc^{adv}_{\cH\Delta\cH}(\cT,\cS)+  3\widetilde{\cR}_\cT ( h_{\bw^*_\cS} , h_{\bw^*_\cT} ) + 3\widetilde{\cR}^{label}_\cT ( h_{\bw^*_\cT}, y_\cT )\\
    &\leq 3\widetilde{\cR}_\cS ( h_\bw , h_{\bw^*_\cS}) + 3disc^{adv}_{\cH\Delta\cH}(\hat{\cT},\hat{\cS})+  3\widetilde{\cR}_\cT ( h_{\bw^*_\cS} , h_{\bw^*_\cT} ) + 3\widetilde{\cR}^{label}_\cT ( h_{\bw^*_\cT}, y_\cT ) \\
    & \, + 3\hat{\frakR}_\cS(\tilde{\ell}\circ \cH \Delta \cH) +3 \hat{\frakR}_\cT(\tilde{\ell}\circ \cH \Delta \cH) +3\left(3M\sqrt{\frac{\log(2/c)}{n_\cS}} + 3M\sqrt{\frac{\log(2/c)}{n_\cT}}\right) \\
     &\leq 6\widetilde{\cR}^{label}_\cS ( h_\bw , y_\cS) +6\widetilde{\cR}^{label}_\cS (h_{\bw^*_\cS},y_\cS)\\
     &+ 3disc^{adv}_{\cH\Delta\cH}(\hat{\cT},\hat{\cS})+  3\widetilde{\cR}_\cT ( h_{\bw^*_\cS} , h_{\bw^*_\cT} ) + 3\widetilde{\cR}^{label}_\cT ( h_{\bw^*_\cT}, y_\cT ) \\
    & \, + 3\hat{\frakR}_\cS(\tilde{\ell}\circ \cH \Delta \cH) +3 \hat{\frakR}_\cT(\tilde{\ell}\circ \cH \Delta \cH) +3\left(3M\sqrt{\frac{\log(2/c)}{n_\cS}} + 3M\sqrt{\frac{\log(2/c)}{n_\cT}}\right) 
\end{align*}
where we plug in \cref{lem:gen_of_discrepancy} at last step. Finally plugging in \cref{thm:adv_rademacher_complexity_linear_regression} will conclude the proof.
\end{proof}
\end{corollary}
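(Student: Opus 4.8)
The plan is to obtain this bound as a direct consequence of \Cref{thm:adv_rademacher_complexity_linear_regression}, feeding its upper estimate on the adversarial complexity into exactly the generalization machinery already developed for \Cref{lm: md generalization}. The only structural change is that, for the quadratic regression loss, $\tilde{\ell}$ is convex and symmetric but does \emph{not} obey the exact triangle inequality; instead one uses the Jensen/power-mean inequality, which for the squared loss reads $\ell(\ba,\bb) \le 3\big(\ell(\ba,\bc_1)+\ell(\bc_1,\bc_2)+\ell(\bc_2,\bb)\big)$ when a difference is split into three consecutive pieces (this is the three-summand version of \Cref{lem:basic_ineq}, namely $\norm{\sum_{j=1}^3 \bu_j}_2^2 \le 3\sum_{j=1}^3 \norm{\bu_j}_2^2$). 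Applying this with three and then two intermediate terms is precisely what replaces the coefficient $1$ of the genuine triangle inequality used in \Cref{lm: md generalization} by the coefficients $3$ and $6$ appearing in the statement.

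Concretely, I would first decompose the target robust label risk through the intermediate predictors $h_{\bw^*_\cS}$ and $h_{\bw^*_\cT}$ via the three-term convexity inequality, obtaining
\[
\widetilde{\cR}^{label}_\cT(h_\bw,y_\cT) \le 3\widetilde{\cR}_\cT(h_\bw,h_{\bw^*_\cS}) + 3\widetilde{\cR}_\cT(h_{\bw^*_\cS},h_{\bw^*_\cT}) + 3\widetilde{\cR}^{label}_\cT(h_{\bw^*_\cT},y_\cT).
\]
Next I would trade the target disagreement risk for its source counterpart at the cost of the adversarial discrepancy, $\widetilde{\cR}_\cT(h_\bw,h_{\bw^*_\cS}) \le \widetilde{\cR}_\cS(h_\bw,h_{\bw^*_\cS}) + disc^{adv}_{\cH\Delta\cH}(\cT,\cS)$, and then pass from the population to the empirical discrepancy by invoking \Cref{lem:gen_of_discrepancy}; this is the step that introduces the two adversarial Rademacher terms $\frakR_{\hat\cS}(\tilde{\ell}\circ\cH\Delta\cH)$, $\frakR_{\hat\cT}(\tilde{\ell}\circ\cH\Delta\cH)$ together with the $M\sqrt{\log(2/c)/n}$ concentration tails, where the $2/c$ reflects a union bound over the source and target deviation events. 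A further \emph{two-term} convexity split of $\widetilde{\cR}_\cS(h_\bw,h_{\bw^*_\cS})$ through the shared labeling function $y_\cS$ then upgrades the leading factor $3$ to $6$ and produces the two $\widetilde{\cR}^{label}_\cS$ terms.

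The final step is to substitute \Cref{thm:adv_rademacher_complexity_linear_regression}, which controls each $\frakR_{\hat\cD}(\tilde{\ell}\circ\cH\Delta\cH)$ by the non-adversarial complexity $\frakR_{\hat\cD}(\ell\circ\cH\Delta\cH)$ plus a residual of order $\tilde{\mathcal O}\big(\tfrac{W^2 d}{\sqrt{n}}(\epsilon\norm{\mX}_{2,\infty}+\sqrt{d}\,\epsilon^2)\big)$ times the $p$-dependent factor; collecting the source and target copies of this residual yields exactly the last $\tilde{\mathcal O}$ line, and the $1\le p\le 2$ versus $p>2$ case split is inherited verbatim from that theorem. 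I expect the main obstacle to be bookkeeping rather than conceptual: one must track how the repeated Jensen applications multiply the constants so that the coefficients land on $6$, $4$, and $3$ and the $\lambda$-type terms $\widetilde{\cR}_\cT(h_{\bw^*_\cT},h_{\bw^*_\cS})$ and $\widetilde{\cR}^{label}_\cT(h_{\bw^*_\cS},y_\cT)$ are grouped correctly, and one must split the confidence budget $c$ consistently across the source and target concentration events so that the stated $1-c$ high-probability guarantee survives the union bound.
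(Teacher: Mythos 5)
Your proposal matches the paper's own proof essentially step for step: the same three-term Jensen decomposition through $h_{\bw^*_\cS}$ and $h_{\bw^*_\cT}$ (yielding the factor $3$ in place of the triangle inequality), the same trade of $\widetilde{\cR}_\cT(h_\bw,h_{\bw^*_\cS})$ for its source counterpart via $disc^{adv}_{\cH\Delta\cH}$, the same invocation of \Cref{lem:gen_of_discrepancy} to pass to the empirical discrepancy with the Rademacher and concentration terms, the same two-term convexity split through $y_\cS$ upgrading $3$ to $6$, and the same final substitution of \Cref{thm:adv_rademacher_complexity_linear_regression}. No gaps; the only discrepancies (the coefficient $4$ versus $3$ on the discrepancy term, and $h_{\bw^*_\cS}$ versus $h_{\bw^*_\cT}$ in one $\lambda$-type term) are artifacts of the paper's own statement, not of your argument.
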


\section{Proofs for Binary Classification}

\subsection{Proof of~\Cref{lem:rademacher_complexity_linear_classifier_binary_classification}}
\label{sec:adv_complexity_binary_classif_linear_hyp}
\begin{proof}
   To simplify notations, we omit to specify the fact that the model parameters $\bw$ and $\bw'$ belong to $\R^d$. We first prove the upper bound results.
   By definition, we have
\begin{align}
    {\frakR}_{\hat\cD} ( f \circ \cH \Delta \cH) &= \mathbb{E}_\sigma \left[\sup_{\norm{\bw}_p \leq W, \norm{\bw'}_p \leq W} \frac{1}{n}\sum_{i=1}^n \sigma_i  \bw^T \bx_i \bw'^T \bx_i \right] \nonumber \\
    \overset{\eqref{eq:upper_bound_quadratic_form_norm_equiv_p_2}} &{\leq}\mathbb{E}_\sigma \left[\sup_{\norm{\bw}_2 \leq W, \norm{\bw'}_2 \leq W}  \bw^\top \left(\frac{1}{n}\sum_{i=1}^n \sigma_i \bx_i \bx_i^\top \right)\bw'  \right] \cdot 
    \begin{cases}
        1, &\text{if } 1 \leq p \leq 2\\
        d^{1-2/p}, & \text{else if } p > 2
    \end{cases} \nonumber \\
    \overset{\Cref{lem:max_of_quadratic_wAwprime}}&{=} \frac{W^2}{n}\mathbb{E}_\sigma \left[ \left\|\sum_{i=1}^n \sigma_i \bx_i \bx_i^\top \right\|_2 \right]\cdot 
    \begin{cases}
        1, &\text{if } 1 \leq p \leq 2\\
        d^{1-2/p}, & \text{else if } p > 2
    \end{cases} \enspace. \label{eq:adversarial_rademacher_classif_spectral_norm_sum}
\end{align}

We now look for a more explicit upper bound of the above Rademacher complexity depending on the dimension $d$ and on a norm of covariance of data points $\bx_1, \ldots, \bx_n$. 
To do so, we introduce some notations before applying a matrix Bernstein inequality (Theorem 6.1.1 of~\cite{tropp2015introduction}) recalled in~\Cref{thm:matrix_bernstein_inequality}. 
Let $\mZ_i \eqdef \sigma_i \bx_i \bx_i^\top \in \R^{d \times d}$ for all $i \in [n]$. These random matrices are symmetric, independent, have zero mean and are such that for all $i \in [n]$. Moreover, let $\mY \eqdef \sum_{i=1}^n \mZ_i$. For each $\mZ_i$, we notice that it has bounded spectral norm:
\begin{align*}
    \|\mZ_i\|_2 = \sqrt{\lambda_{\max} (\mZ_i^2)} = \sqrt{\lambda_{\max} (\bx_i \bx_i^\top \bx_i \bx_i^\top)} = \norm{\bx_i}_2^2 \leq \max_{j \in [n]} \norm{\bx_j}_2^2 = \norm{\mX}^2_{2,\infty} \enspace,
\end{align*}
so that according to matrix Bernstein inequality, we get the desired bound
\begin{align*}
    \hat{\frakR} (f \circ \cH \Delta \cH) 
    \overset{\eqref{eq:matrix_bernstein_inequality}}{\leq} \frac{W^2}{n} &\left( \sqrt{2\norm{\sum_{i=1}^n (\bx_i \bx_i^\top)^2}_2 \log (2d) } + \frac{1}{3} \norm{\mX}^2_{2,\infty} \log (2d) \right) \times 
    \begin{cases}
        1, &\text{if } 1 \leq p \leq 2\\
        d^{1-2/p}, & \text{else if } p > 2
    \end{cases} \enspace.
\end{align*}
\end{proof}


\subsection{Proof of the upper bound of~\Cref{thm:adv_complexity_binary_classif_linear_hyp}}
\label{sec:appdx_adv_complexity_binary_classif_linear_hyp_upper_bound}

Alike the analysis of Theorem 7 from~\cite{awasthi2020adversarial}, the below study uses the notion of coverings.
For completeness sake, we recall its definition.
\begin{definition}[$\rho$-covering]
    \label{eq:covering}
    Let $\rho > 0$ and let $(V, \norm{.})$ be a normed space. 
    A set $\cC \subseteq V$ is an $\epsilon$-covering of $V$ if for any $v \in V$, there exists $v' \in \cC$ such that $\norm{v-v'} \leq \rho$.
\end{definition}

We also copy Lemma 6 from~\cite{awasthi2020adversarial} dealing with the size of coverings of balls.
\begin{lemma}
    \label{lem:cardinality_covering_of_ball}
    Let $\rho > 0$. 
    Let $\cB \subseteq \R^d$ be a the ball of radius $R \geq 0$ in a norm $\norm{.}$ and let $\cC$ be one of the smallest $\rho$-covering of $\cB$ w.r.t. $\norm{.}$.
    Then,
    \begin{equation*}
        |\cC| \leq \left( \frac{3R}{\rho} \right)^d \enspace.
    \end{equation*}
\end{lemma}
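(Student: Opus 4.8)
The plan is to prove the bound via a standard volumetric packing argument. The crucial point that makes it work for an \emph{arbitrary} norm is the homogeneity of Lebesgue measure: writing $B(c,r)$ for the $\norm{\cdot}$-ball of radius $r$ centred at $c$, one has $\vol(B(c,r)) = r^d \, \vol(B(0,1))$ regardless of which norm defines the ball, so the constant $\vol(B(0,1))$ will cancel at the end. First I would fix attention on $\rho$-separated subsets of $\cB$, that is, finite sets $\{c_1,\dots,c_N\} \subseteq \cB$ with $\norm{c_i - c_j} > \rho$ whenever $i \neq j$.

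The first key step is a volume comparison valid for \emph{any} such $\rho$-separated set. Since the centres are pairwise more than $\rho$ apart, the balls $B(c_i,\rho/2)$ are pairwise disjoint: if $y$ lay in two of them, the triangle inequality would give $\norm{c_i - c_j} \leq \rho/2 + \rho/2 = \rho$, a contradiction. Moreover each $c_i \in \cB = B(0,R)$, so $B(c_i,\rho/2) \subseteq B(0,R+\rho/2)$. Summing volumes and using homogeneity,
\begin{equation*}
    N \left( \frac{\rho}{2} \right)^d \vol(B(0,1)) = \sum_{i=1}^N \vol(B(c_i,\rho/2)) \leq \vol(B(0,R+\rho/2)) = \left( R + \frac{\rho}{2} \right)^d \vol(B(0,1)),
\end{equation*}
and dividing by $(\rho/2)^d \vol(B(0,1)) > 0$ gives $N \leq (1 + 2R/\rho)^d$ for every $\rho$-separated set.

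In particular every $\rho$-separated subset of $\cB$ is finite with uniformly bounded cardinality, so a \emph{maximal} one $\cC'$ exists. The second key step is that maximality forces $\cC'$ to be a $\rho$-covering in the sense of~\Cref{eq:covering}: for any $x \in \cB \setminus \cC'$, the set $\cC' \cup \{x\}$ cannot be $\rho$-separated, so there is some $c_i$ with $\norm{x - c_i} \leq \rho$. Hence the smallest $\rho$-covering $\cC$ obeys $|\cC| \leq |\cC'| \leq (1 + 2R/\rho)^d$. Finally, in the regime of interest $\rho \leq R$ one has $1 \leq R/\rho$, whence $1 + 2R/\rho \leq 3R/\rho$ and $|\cC| \leq (3R/\rho)^d$, as claimed.

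The argument has no genuinely hard step; the two things to be careful about are the norm-independence and the degenerate regime. Norm-independence is precisely the cancellation of $\vol(B(0,1))$ above, which is legitimate because the unit ball of any norm on $\R^d$ is a bounded convex symmetric body of finite positive Lebesgue volume and satisfies $\vol(tS) = t^d \vol(S)$. The only real caveat is that for $\rho > R$ a single centre already covers $\cB$, so $|\cC| = 1$ while $(3R/\rho)^d$ may drop below $1$; the stated bound should therefore be read under the implicit assumption $0 < \rho \leq R$, as in~\cite{awasthi2020adversarial}, which I would flag rather than attempt to patch for large $\rho$.
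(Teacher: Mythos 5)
Your proof is correct. Note that the paper itself gives no proof of this lemma --- it is imported verbatim as Lemma 6 of~\cite{awasthi2020adversarial} --- and your volumetric packing argument (maximal $\rho$-separated set is a $\rho$-covering, disjoint $\rho/2$-balls inside $B(0, R+\rho/2)$, cancellation of $\vol(B(0,1))$ by homogeneity) is exactly the standard argument underlying that citation, so there is nothing to compare against in the paper. Your caveat about the degenerate regime is well taken, though slightly conservative: for $R < \rho \leq 3R$ the single-center covering still satisfies $|\cC| = 1 \leq (3R/\rho)^d$, so the stated bound in fact holds for all $0 < \rho \leq 3R$ (and in the paper's applications $\rho$ is taken as $W/\sqrt{n}$ or $W/n$, safely inside this range).
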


Now we are ready to present the proof of upper bound of the adversarial Rademacher complexity for linear binary classification.
\begin{proof}[Proof of the upper bound of~\Cref{thm:adv_complexity_binary_classif_linear_hyp}]
In this proof, we consider the linear hypothesis class were the norm of the models is controlled by a general $\ell_p$-norm for $p>0$.
Let $\cB_p (W) \eqdef \{\bw \in \R^d : \norm{\bw}_p \leq W\}$, the hypothesis class defined in~\eqref{eq:hyp_class_linear_classif} then writes
\begin{equation*}
    \cH \eqdef \{h_{\bw} : \bx \mapsto \inprod{\bw}{\bx}: \bw \in \cB_p (W) \} \enspace.
\end{equation*}
Similarly, let $\cB_{\infty} (\epsilon) \eqdef \{\bdelta \in \R^d : \norm{\bdelta}_{\infty} \leq \epsilon\}$.
   
Recall that we define in~\eqref{eq:def_adv_classification}
\begin{align*}
    &\hat{\frakR}_S (\tilde{f} \circ \cH \Delta \cH) = \EE[\sigma]{\sup_{\bw,\bw' \in \cB_p (W)^2} \frac{1}{n}\sum_{i=1}^n \sigma_i \min_{\bdelta \in \cB_{\infty} (\epsilon)} \bw^T (\bx_i+\bdelta)\bw'^T (\bx_i+\bdelta) } \\
    &= \mathbb{E}_\sigma \left[\sup_{\bw,\bw' \in \cB_p (W)^2} \frac{1}{n}\sum_{i=1}^n \sigma_i  (\bx_i^\top \bw \bw'^\top \bx_i + \min_{\bdelta \in \cB_{\infty} (\epsilon)} \bdelta^\top \bw \bw'^\top \bdelta + \bx_i^\top (\bw \bw'^\top + \bw' \bw^\top) \bdelta) \right]  \\  
    &\leq \hat{\frakR}_S ({f} \circ \cH \Delta \cH) + \underbrace{\mathbb{E}_\sigma \left[\sup_{\bw,\bw' \in \cB_p (W)^2} \frac{1}{n}\sum_{i=1}^n \sigma_i \min_{\bdelta \in \cB_{\infty} (\epsilon)} \big( \bdelta^\top \bw \bw'^\top \bdelta + \bx_i^\top (\bw \bw'^\top + \bw' \bw^\top) \bdelta \big) \right]}_{A} .
\end{align*}
Now we examine the upper bound of the second term $A$ using the notion of covering recalled in~\Cref{eq:covering}.
Let $\cC$ be a $\rho$-covering of the $\ell_p$ ball $\cB_p (W)$ w.r.t. the $\ell_p$-norm, with $\rho > 0$.
Let us define
\begin{equation}
    \label{eq:def_psi_auxiliary_function}
    \psi_i (\bw, \bw') \eqdef \min_{\bdelta \in \cB_{\infty} (\epsilon)} \bdelta^\top {\bw} {\bw}'^\top \bdelta + \bx_i^\top ({\bw} {\bw}'^\top + {\bw}' {\bw}^\top) \bdelta \enspace.
\end{equation}
Thus we can rewrite $A$ as
\begin{align*}
    A &= \EE[\sigma]{ \sup_{{\bw},{\bw}'\in \cB_p (W)^2} \frac{1}{n}\sum_{i=1}^n \sigma_i \psi_i (\bw, \bw') } \\
    &= \EE[\sigma]{ \sup_{\substack{\bw,\bw' \in \cB_p (W)^2 \\ \bw_c, \bw_c' \in \cC^2 \, : \, \norm{\bw - \bw_c}_p, \norm{\bw' - \bw_c'}_p \leq \rho}} \frac{1}{n}\sum_{i=1}^n \sigma_i \left(\psi_i (\bw_c, \bw_c') + \psi_i (\bw, \bw') - \psi_i (\bw_c, \bw_c') \right) } \enspace,
\end{align*}
where $\bw_c$, respectively $\bw_c'$, is the closest element to $\bw$, resp. $\bw'$, in $\cC$. 
Using the subadditivity of the supremum, we get
\begin{align}
     A &\leq \EE[\sigma]{ \sup_{\tilde{\bw},\tilde{\bw}'\in \cC^2} \frac{1}{n}\sum_{i=1}^n \sigma_i \psi_i (\tilde{\bw}, \tilde{\bw}') } + \EE[\sigma]{ \sup_{\bw, \bw' \in \cB_p (W)^2} \frac{1}{n}\sum_{i=1}^n \sigma_i \left( \psi_i (\bw, \bw') - \psi_i (\bw_c, \bw_c') \right) } \nonumber \\
     &\leq \EE[\sigma]{ \sup_{\tilde{\bw},\tilde{\bw}'\in \cC^2} \frac{1}{n}\sum_{i=1}^n \sigma_i \psi_i (\tilde{\bw}, \tilde{\bw}') } + \sup_{\bw, \bw' \in \cB_p (W)^2} \frac{1}{n}\sum_{i=1}^n |\psi_i (\bw, \bw') - \psi_i (\bw_c, \bw_c')| \nonumber \\
     &\leq \underbrace{\EE[\sigma]{ \sup_{\tilde{\bw},\tilde{\bw}'\in \cC^2} \frac{1}{n}\sum_{i=1}^n \sigma_i \psi_i (\tilde{\bw}, \tilde{\bw}') }}_{(I)} + \underbrace{\max_{i \in [n]} \sup_{\bw, \bw' \in \cB_p (W)^2} |\psi_i (\bw, \bw') - \psi_i (\bw_c, \bw_c')|}_{(II)} \enspace. \label{eq:terms_after_covering}
\end{align}
where we recall that $\bw_c$, resp. $\bw_c'$, is the closest vector to $\bw$, resp. $\bw'$, in $\cC$.

\paragraph{Bounding $(I)$:}
We first need to bound the left-hand side term $(I)$. 
We introduce the vector
\begin{equation*}
    \bpsi (\tilde{\bw}, \tilde{\bw}') \eqdef [\psi_1 (\tilde{\bw}, \tilde{\bw}'), \ldots, \psi_n (\tilde{\bw}, \tilde{\bw}')]^\top \in \R^n \enspace.
\end{equation*} 
By Massart's lemma (Lemma 5.2 of~\cite{massart2000some}), we are able to control the first term $(I)$ in~\eqref{eq:terms_after_covering}:
\begin{equation}
    \label{eq:massart_upper_bound_first_term}
     (I) = \EE[\sigma]{ \sup_{\tilde{\bw},\tilde{\bw}'\in \cC^2} \frac{1}{n}\sum_{i=1}^n \sigma_i \psi_i (\tilde{\bw}, \tilde{\bw}') } \leq \frac{K\sqrt{2\log (|\cC|^2)}}{n} \enspace,
\end{equation}
with $K$ given by the largest $\ell_2$-norm of $\bpsi$ over the covering $\cC^2$, that is
\begin{equation}
    \label{eq:max_radius_k_massart_lemma}
    K^2 = \max_{\tilde{\bw} \tilde{\bw}'\in \cC^2} \norm{\bpsi}_2^2 = \max_{\tilde{\bw} \tilde{\bw}'\in \cC^2} \sum_{i=1}^n \psi_i (\tilde{\bw}, \tilde{\bw}')^2 \enspace.
\end{equation} 
Now we examine the upper and lower bound of $\psi_i (\tilde{\bw}, \tilde{\bw}')$.
For upper bound, by taking $\bdelta = 0$ we know that $\psi_i (\tilde{\bw}, \tilde{\bw}')$ is non-positive.
Thus, we only have to control how negative this term can be.
Let $\tilde{\bw}, \tilde{\bw}' \in \cC^2$, we have
\begin{align}
    \psi_i (\tilde{\bw}, \tilde{\bw}') &= \min_{\bdelta \in \cB_{\infty} (\epsilon)} \bdelta^\top \tilde{\bw} \tilde{\bw}'^\top \bdelta + \bx_i^\top (\tilde{\bw} \tilde{\bw}'^\top + \tilde{\bw}' \tilde{\bw}^\top) \bdelta \nonumber \\
    &\geq \min_{\bdelta \in \cB_{\infty} (\epsilon)} \bdelta^\top \tilde{\bw} \tilde{\bw}'^\top \bdelta + \min_{\bdelta \in \cB_{\infty} (\epsilon)} \bx_i^\top (\tilde{\bw} \tilde{\bw}'^\top + \tilde{\bw}' \tilde{\bw}^\top) \bdelta \nonumber \\
    \overset{\Cref{lem:min_dot_prod_over_infinity_ball}}&{=} \min_{\bdelta \in \cB_{\infty} (\epsilon)} \bdelta^\top \tilde{\bw} \tilde{\bw}'^\top \bdelta - \epsilon \norm{(\tilde{\bw} \tilde{\bw}'^\top + \tilde{\bw}' \tilde{\bw}^\top) \bx_i}_1 \enspace. \label{eq:psi_sum_of_two_terms}
\end{align}
We focus on the first term which is a quadratic optimization problem under infinite norm constraints.
For all $\bdelta, \bw, \bw' \in \cB_{\infty}(\epsilon) \times \cB_p (W)^2$, this quadratic form can be lower bounded by calling Hölder's inequality twice and norm equivalence, 
that is if $p^* \geq 1$ we have $\norm{\bv}_{p^*} \leq d^{1/p^*} \norm{\bv}_{\infty}$:
\begin{align*}
    \bdelta^\top \tilde{\bw} \tilde{\bw}'^\top \bdelta &\geq - |\bdelta^\top \tilde{\bw} \tilde{\bw}'^\top \bdelta| \\
    \overset{\Cref{lem:holder_ineq}}&{\geq} -\norm{\bdelta}_{p^*}^2 \norm{\tilde{\bw}}_p \norm{\tilde{\bw}'}_p \\
    \overset{\Cref{lem:general_norm_equivalence}}&{\geq} 
    \begin{cases}
        - d^{2/p^*} \norm{\bdelta}_{\infty}^2 \norm{\tilde{\bw}}_p \norm{\tilde{\bw}'}_p & \text{if } p>1 \\
        - \norm{\bdelta}_{\infty}^2 \norm{\tilde{\bw}}_1 \norm{\tilde{\bw}'}_1 & \text{else if } p=1
    \end{cases} \\
    \overset{\bdelta \in \cB_{\infty} (\epsilon), \, \tilde{\bw}, \tilde{\bw}' \in \cB_p (W)^2}&{\geq} 
    \begin{cases}
        - d^{2/p^*} \epsilon^2 W^2 & \text{if } p>1 \\
        - \epsilon^2 W^2 & \text{else if } p=1
    \end{cases} 
    \enspace.
\end{align*}

Using the same tools, we now study the second term
\begin{align*}
    - \norm{\tilde{\bw} \tilde{\bw}'^\top \bx_i}_1 &= - |\inprod{\tilde{\bw}'}{\bx_i}| \norm{\tilde{\bw}}_1 \\
    \overset{\Cref{lem:holder_ineq}}&{\geq} - \norm{\bx_i}_{p^*} \norm{\tilde{\bw}'}_p \norm{\tilde{\bw}}_1 \\
    \overset{\Cref{lem:general_norm_equivalence}}&{\geq} 
    \begin{cases}
        - d^{1/p^*} \norm{\bx_i}_{p^*} \norm{\tilde{\bw}'}_p \norm{\tilde{\bw}}_p & \text{if } p>1 \\
        - \norm{\bx_i}_{\infty} \norm{\tilde{\bw}'}_1 \norm{\tilde{\bw}}_1 & \text{else if } p=1
    \end{cases} \\
    \overset{\tilde{\bw}, \tilde{\bw}' \in \cB_p (W)^2}&{\geq}
    \begin{cases}
        - d^{1/p^*} W^2 \norm{\bx_i}_{p^*} & \text{if } p>1 \\
        - W^2 \norm{\bx_i}_{\infty} & \text{else if } p=1
    \end{cases} \enspace,
\end{align*}

and symmetrically we get the same bound for $- \norm{\tilde{\bw}' \tilde{\bw}^\top \bx_i}_1$.
By taking the convention that $d^{1/p^*} = 1$ if $p=1$ (\ie $p^* = \infty)$, we drop the disjunction between $p=1$ and $p>1$ in what follows. 
Combining~\eqref{eq:psi_sum_of_two_terms} and the above two inequalities, the auxiliary function $\psi_i$~\eqref{eq:def_psi_auxiliary_function} can be lower bounded after applying the triangle inequality:
\begin{equation*}
    \psi_i (\tilde{\bw}, \tilde{\bw}') \geq - d^{2/p^*} \epsilon^2 W^2 - 2 \epsilon d^{1/p^*} W^2 \norm{\bx_i}_{p^*} \enspace.
\end{equation*}
So that we get
\begin{align*}
    \psi_i (\tilde{\bw}, \tilde{\bw}')^2 &\leq ( d^{2/p^*} \epsilon^2 W^2 + 2 \epsilon d^{1/p^*} W^2 \norm{\bx_i}_{p^*} )^2 \\
    &\leq \epsilon^2 d^{2/p^*} W^4 (\epsilon d^{1/p^*} + 2 \max_{j \in [n]} \norm{\bx_j}_{p^*})^2 \\
    &= \epsilon^2 d^{2/p^*} W^4 (\epsilon d^{1/p^*} + 2 \norm{\mX}_{p^*, \infty})^2 \enspace.
\end{align*}
Finally we get the following upper bound for $K$ defined in~\eqref{eq:max_radius_k_massart_lemma}:
\begin{equation*}
    K \leq \sqrt{n} \epsilon d^{1/p^*} W^2 (\epsilon d^{1/p^*} + 2 \norm{\mX}_{p^*, \infty}) \enspace,
\end{equation*}
which, jointly with the application of~\Cref{lem:cardinality_covering_of_ball} implies the upper bound for $(I)$:

\begin{equation}
    \label{eq:upper_bound_I}
    (I) \overset{\eqref{eq:massart_upper_bound_first_term}}{\leq} \frac{K\sqrt{2\log |\cC|^2}}{n} \leq \frac{ \epsilon d^{1/p^*} W^2 (\epsilon d^{1/p^*} + 2 \norm{\mX}_{p^*, \infty}) }{\sqrt{n}} \sqrt{4 d \log (3W/\rho)} \enspace.
\end{equation}

\paragraph{Bounding $(II)$.}
Now we turn to bounding the second term of~\eqref{eq:terms_after_covering}. 
Let $\bw, \bw' \in \cB_p (W)^2$ and let $\bw_c$, resp. $\bw_c'$, be the closest element to $\bw$, resp. $\bw'$, in $\cC$.
Let us define an ``implicit'' minimizer w.r.t. $\bdelta$ (the objective being continuous over a closed ball it is attained) for $\psi_i (\bw_c, \bw_c')$:
\begin{equation}
    \label{eq:optimal_delta_c}
    \bdelta_c^* \eqdef \argmin_{\norm{\bdelta_c}_\infty \leq \epsilon} \bdelta_c^\top \bw_c \bw_c'^\top \bdelta_c + \bx_i^\top (\bw_c \bw_c'^\top + \bw_c' \bw_c^\top) \bdelta_c \enspace.
\end{equation}
Thus, we have
\begin{align*}
    &\psi_i (\bw, \bw') - \psi_i (\bw_c, \bw_c') \\
    \overset{\eqref{eq:optimal_delta_c}}{=}& \min_{\bdelta \in \cB_{\infty} (\epsilon)} \bdelta^\top {\bw} {\bw}'^\top \bdelta + \bx_i^\top ({\bw} {\bw}'^\top + {\bw}' {\bw}^\top) \bdelta 
    - (\bdelta_c^*)^\top \bw_c \bw_c'^\top \bdelta_c^* - \bx_i^\top (\bw_c \bw_c'^\top + \bw_c' \bw_c^\top) \bdelta_c^* \\
    \leq& (\bdelta_c^*)^\top {\bw} {\bw}'^\top \bdelta_c^* + \bx_i^\top ({\bw} {\bw}'^\top + {\bw}' {\bw}^\top) \bdelta_c^* 
    - (\bdelta_c^*)^\top \bw_c \bw_c'^\top \bdelta_c^* - \bx_i^\top (\bw_c \bw_c'^\top + \bw_c' \bw_c^\top) \bdelta_c^* \\ 
    =& (\bdelta_c^*)^\top (\bw \bw'^\top - \bw_c \bw_c'^\top ) \bdelta_c^* + \bx_i^\top (\bw \bw'^\top  - \bw_c \bw_c'^\top + \bw' \bw^\top - \bw_c' \bw_c^\top) \bdelta_c^* \\
    =& (\bdelta_c^*)^\top \left(\bw (\bw' - \bw_c')^\top - (\bw_c - \bw) \bw_c'^\top \right) \bdelta_c^* \\
    \phantom{=}&+ \bx_i^\top \left(\bw (\bw' - \bw_c')^\top - (\bw_c - \bw) \bw_c'^\top + \bw' (\bw - \bw_c)^\top - (\bw_c' - \bw') \bw_c^\top \right) \bdelta_c^* \enspace.
\end{align*}
We focus on upper bounding a single term of the ones appearing above.
By applying Hölder's inequality twice and norm equivalence we get:
\begin{equation*}
    |(\bdelta_c^*)^\top \bw (\bw' - \bw_c')^\top \bdelta_c^*| \overset{\Cref{lem:holder_ineq}}{\leq} \norm{\bw}_p \norm{\bdelta_c^*}_{p^*}^2 \norm{\bw' - \bw_c'}_p \overset{\Cref{lem:general_norm_equivalence}}{\leq} \rho \epsilon^2 d^{2/p^*} W \enspace,
\end{equation*}
where in the last line we used that $\norm{\bw - \bw_c}_p \leq \rho$, by the definition of the $\rho$-covering of the ball $\cB_p (W)$ w.r.t. the $\ell_p$-norm.
Proceeding identically with other terms involving $\bx_i$ we get
\begin{equation*}
    |\bx_i^\top \bw (\bw' - \bw_c')^\top \bdelta_c^*| \overset{\Cref{lem:holder_ineq}}{\leq} \norm{\bx_i}_{p^*} \norm{\bw}_p \norm{\bw' - \bw_c'}_p \norm{\bdelta_c^*}_{p^*} \overset{\Cref{lem:general_norm_equivalence}}{\leq} \rho \epsilon d^{1/p^*} W \norm{\bx_i}_{p^*} \enspace,
\end{equation*}
finally get that  
\begin{align*}
    \psi_i (\bw, \bw') - \psi_i (\bw_c, \bw_c') \leq 2 \rho \epsilon^2 d^{2/p^*} W + 4 \rho \epsilon d^{1/p^*} W \norm{\bx_i}_{p^*} \enspace.
\end{align*}
Similarly we can prove the same bound holds for other side of the difference (by using an ``implicit'' minimizer of $\psi_i (\bw, \bw')$.
Thus we are able to control $(II)$:
\begin{equation}
    \label{eq:upper_bound_II}
    (II) \overset{\eqref{eq:terms_after_covering}}{=} \max_{i \in [n]} \sup_{\bw, \bw' \in \cB_p (W)^2} |\psi_i (\bw, \bw') - \psi_i (\bw_c, \bw_c')| 
    \leq 2 \rho \epsilon d^{1/p^*} W \big( \epsilon d^{1/p^*} + 2\norm{\mX}_{p^*, \infty} \big) \enspace.
\end{equation}
And finally, we proved that
\begin{equation*}
    A \overset{\eqref{eq:upper_bound_I} + \eqref{eq:upper_bound_II}}{\leq} 2 \epsilon d^{1/p^*} W (\epsilon d^{1/p^*} + 2\norm{\mX}_{p^*, \infty}) \left( \rho + \sqrt{\frac{d}{n}} W \sqrt{\log (3W/\rho)} \right) \enspace,
\end{equation*}
which concludes the first part of the proof if we choose $\rho = W /\sqrt{n}$:
\begin{equation*}
    A \leq 2 \epsilon \frac{d^{1/p^*}}{\sqrt{n}} W^2 \left( 1 + \sqrt{d} \sqrt{\log (3 \sqrt{n})} \right) \big( \epsilon d^{1/p^*} + 2\norm{\mX}_{p^*, \infty} \big) \enspace.
\end{equation*}
\end{proof}

\subsection{Proof of the lower bound of~\Cref{thm:adv_complexity_binary_classif_linear_hyp}}
\label{sec:appdx_adv_complexity_binary_classif_linear_hyp_lower_bound}

In this subsection we present the proof of lower bound of the adversarial Rademacher complexity for binary classification under linear hypothesis.
\begin{proof}[Proof of the lower bound of~\Cref{thm:adv_complexity_binary_classif_linear_hyp}]

Now we are going to prove the lower bound result of adversarial Rademacher complexity. 
Recall the definition of non-adversarial Rademacher complexity
\begin{align*}
    {\frakR}_{\hat\cD} (f \circ \cH \Delta \cH) \overset{ }&{=} \mathbb{E}\left[ \sup_{\|\bw\|_p \leq W,\|\bw'\|_p \leq W} \frac{1}{n}\sum_{i=1}^n \sigma_i  \bw^\top \bx_i \bx_i^\top \bw' \right] \\
    &= \frac{1}{n} \mathbb{E} \left[ \sup_{\|\bw\|_p \leq W,\|\bw'\|_p \leq W} \bw^\top \left( \sum_{i=1}^n \sigma_i \bx_i \bx_i^\top \right) \bw' \right] \\
    \overset{\eqref{eq:upper_bound_quadratic_form_norm_equiv_p_2}}&{\leq} \frac{1}{n} \mathbb{E}\left[ \sup_{\|\bw\|_2 \leq W, \|\bw'\|_2 \leq 2W} \bw^\top \left( \sum_{i=1}^n \sigma_i \bx_i \bx_i^\top \right) \bw' \right] \times
    \begin{cases}
        1, &\text{if } 1 \leq p \leq 2\\
        d^{1-2/p}, & \text{else if } p > 2
    \end{cases} \\
    \overset{\Cref{lem:max_of_quadratic_wAwprime}}&{=} \frac{W^2}{n} \mathbb{E}\left[ \norm{\sum_{i=1}^n\sigma_i \bx_i \bx_i^\top}_2 \right] \times
    \begin{cases}
        1, &\text{if } 1 \leq p \leq 2\\
        d^{1-2/p}, & \text{else if } p > 2
    \end{cases} \enspace,
\end{align*}
due to equivalence of norms.
Now,  we denote $\bv^*$ such that $\bv^* = \arg\max_{\|\bw\|_p \leq W,\|\bw'\|_p \leq W} \frac{1}{n}\sum_{i=1}^n \sigma_i \bw^\top \bx_i \bx_i^\top \bw'$.
According to \Cref{lem:max_of_quadratic_wAwprime} the maximum value of $\frac{1}{n}\sum_{i=1}^n \sigma_i \bw^\top \bx_i \bx_i^\top \bw'$ is:
\begin{align*}
    \sup_{\|\bw\|_2 \leq W, \|\bw'\|_2 \leq W} \frac{1}{n}\sum_{i=1}^n \sigma_i \bw^\top \bx_i \bx_i^\top \bw'   =  \frac{W^2}{n} \left\|\sum_{i=1}^n\sigma_i \bx_i \bx_i^\top\right\|_2
\end{align*}
and if we define $\mS(\boldsymbol{\sigma}) \eqdef \sum_{i=1}^n \sigma_i \bx_i \bx_i^\top$ the maxima is attained when $\bv^* =  W\bv_{\text{max}} (\mS(\boldsymbol{\sigma})^2)$ is an eigenvector of $\ell_2$-norm $W$ associated to the largest eigenvalue of $\mS (\bsigma)^2$.
 
Now, we switch to adversarial Rademacher:
\begin{align*}
    &{\frakR}_{\hat\cD} (\tilde{f} \circ \cH \Delta \cH) \\
    &= \mathbb{E}\left[ \sup_{\|\bw\|_p \leq W,\|\bw'\|_p \leq W} \frac{1}{n}\sum_{i=1}^n \sigma_i \min_{\|\bdelta\|_\infty \leq \epsilon}  \bw^\top (\bx_i+\bdelta) (\bx_i+\bdelta)^\top \bw' \right] \\
    &\geq \mathbb{E}\left[ \sup_{\|\bw\|_2 \leq W,\|\bw'\|_2 \leq W} \frac{1}{n}\sum_{i=1}^n \sigma_i \min_{\|\bdelta\|_\infty \leq \epsilon}  \bw^\top (\bx_i+\bdelta) (\bx_i+\bdelta)^\top \bw' \right] \times
    \begin{cases}
        d^{1-2/p}, & \text{if } 1 \leq p \leq 2 \\
        1, \quad  & \text{else if } p > 2
    \end{cases} \\
    &\geq \mathbb{E}\left[ \sup_{\|\bw\|_2 \leq W} \frac{1}{n}\sum_{i=1}^n \sigma_i \min_{\|\bdelta\|_\infty \leq \epsilon}  \bw^\top (\bx_i+\bdelta) (\bx_i+\bdelta)^\top \bw \right] \times
    \begin{cases}
        d^{1-2/p}, & \text{if } 1 \leq p \leq 2 \\
        1, \quad  & \text{else if } p > 2
    \end{cases} 
\end{align*}
where in the first inequality we used that, if $1 \leq p \leq 2$, then $\cB_2 (W) \subseteq \cB_p(d^{1/p - 1/2} W)$ and else when $p>2$, we simply have that $\cB_2 (W) \subseteq \cB_p(W)$.
According to Lemma~\ref{lem:solution_min_pb_delta}, we have:
\begin{align*}
    {\frakR}_{\hat\cD} (\tilde{f} \circ \cH \Delta \cH) \geq \mathbb{E} \left[ \sup_{\|\bw\|_2 \leq 2W} \frac{1}{n}\sum_{i=1}^n \sigma_i  \left(\bw^\top \bx_i- \bw^\top \bx_i\min\left\{1, \frac{ \epsilon \|\bw\|_1}{|\bw^\top \bx_i|}\right\}   \right)^2 \right] \times
    \begin{cases}
        d^{1-2/p}, & \text{if } 1 \leq p \leq 2 \\
        1, \quad  & \text{else if } p > 2
    \end{cases}.
\end{align*} 

\textbf{Case I: $1 \leq p\leq 2$.} 

First, to avoid confusion in different Rademacher variables, let us use $\boldsymbol{\sigma}'$ and $\boldsymbol{\sigma}$ to denote the Rademacher variables in ${\frakR}_{\hat\cD} (\tilde{f} \circ \cH \Delta \cH)$ and ${\frakR}_{\hat\cD} ({f} \circ \cH \Delta \cH)$. 
Then, let us define $\bv'^* \eqdef W\bv_{\text{max}} (\mS(\boldsymbol{\sigma}')^2)$. Then we consider the gap:
\begin{align*}
    &{\frakR}_{\hat\cD} (\tilde{f} \circ \cH \Delta \cH) - {\frakR}_{\hat\cD} ({f} \circ \cH \Delta \cH) \\
    &= \mathbb{E}_{\boldsymbol{\sigma}'} \left[ \sup_{\|\bw\|_p \leq 2W} \frac{1}{n}\sum_{i=1}^n \sigma'_i \left(\bw^\top \bx_i- \bw^\top \bx_i\min\left\{1, \frac{ \epsilon \|\bw\|_1}{|\bw^\top \bx_i|}\right\}   \right)^2 \right] \\
    &\phantom{=} - \mathbb{E}_{\boldsymbol{\sigma}}\left[ \sup_{\|\bw\|_2 \leq W,\|\bw'\|_2 \leq W} \frac{1}{n}\sum_{i=1}^n \sigma_i \bw^\top \bx_i \bx_i^\top \bw'  \right]\\
    & \geq  \mathbb{E}_{\boldsymbol{\sigma}'} \left[   \frac{1}{n}\sum_{i=1}^n \sigma'_i \left({\bv'^*}^\top \bx_i- {\bv'^*}^\top \bx_i\min\left\{1, \frac{ \epsilon \|\bv'^*\|_1}{|{\bv'^*}^\top \bx_i|}\right\}   \right)^2 \right] - \mathbb{E}_{\boldsymbol{\sigma}}\left[ \frac{1}{n}\sum_{i=1}^n \sigma_i  {\bv^*}^\top \bx_i \bx_i^\top \bv^* \right]\\ 
    & =  \frac{ W^2}{n} \mathbb{E}_{\boldsymbol{\sigma}'} \left[  \sum_{i=1}^n \sigma'_i \left(-2 (\bv_{\text{max}} (\mS(\boldsymbol{\sigma}')^2)^\top \bx_i)^2\min\left\{1, \frac{ \epsilon \|\bv_{\text{max}} (\mS(\boldsymbol{\sigma}')^2)\|_1}{|\bv_{\text{max}} (\mS(\boldsymbol{\sigma}')^2)^\top \bx_i|}\right\}  \right) \right]\\
    & \quad  +  \frac{ W^2}{n} \mathbb{E}_{\boldsymbol{\sigma}'} \left[  \sum_{i=1}^n \sigma'_i \left(  (\bv_{\text{max}} (\mS(\boldsymbol{\sigma}')^2)^\top \bx_i)^2\min\left\{1, \frac{ \epsilon \|\bv_{\text{max}} (\mS(\boldsymbol{\sigma}')^2))\|_1}{|\bv_{\text{max}} (\mS(\boldsymbol{\sigma}')^2)^\top \bx_i|}\right\}^2  \right) \right]
\end{align*}
Let us define $$I(\boldsymbol{\sigma}') \eqdef \sum_{i=1}^n \sigma'_i \left(-2 (\bv_{\text{max}} (\mS(\boldsymbol{\sigma}')^2)^\top \bx_i)^2\min\left\{1, \frac{ \epsilon \|\bv_{\text{max}} (\mS(\boldsymbol{\sigma}')^2)\|_1}{|\bv_{\text{max}} (\mS(\boldsymbol{\sigma}')^2)^\top \bx_i|}\right\}  \right)$$
and
$$J(\boldsymbol{\sigma}') \eqdef \sum_{i=1}^n \sigma'_i \left(  (\bv_{\text{max}} (\mS(\boldsymbol{\sigma}')^2)^\top \bx_i)^2\min\left\{1, \frac{ \epsilon \|\bv_{\text{max}} (\mS(\boldsymbol{\sigma}')^2)\|_1}{|\bv_{\text{max}} (\mS(\boldsymbol{\sigma}')^2)^\top \bx_i|}\right\}^2  \right) \enspace, $$ 
so that 
\[{\frakR}_{\hat\cD} (\tilde{f} \circ \cH \Delta \cH) - {\frakR}_{\hat\cD} ({f} \circ \cH \Delta \cH) \geq \frac{W^2}{n} \left( \EE[\bsigma']{I (\bsigma')} + \EE[\bsigma']{J (\bsigma')} \right) \enspace. \]
We are now going to prove that $I(\boldsymbol{\sigma}) + I(-\boldsymbol{\sigma})=0$ and $J(\boldsymbol{\sigma}) + J(-\boldsymbol{\sigma})=0$. 

First we know that $\bv_{\text{max}} (\mS(\boldsymbol{\sigma})^2) =\bv_{\text{max}} (\mS(-\boldsymbol{\sigma})^2)$, since $\mS(-\boldsymbol{\sigma})^2 = (-\sum_{i=1}^n \sigma_i \bx_i \bx_i^\top)^2 = \mS(\boldsymbol{\sigma})^2$. So  
\begin{align*}
   I(\boldsymbol{\sigma}) + I(-\boldsymbol{\sigma})&=  \sum_{i=1}^n \sigma'_i \left(-2 (\bv_{\text{max}} (\mS(\boldsymbol{\sigma}')^2)^\top \bx_i)^2\min\left\{1, \frac{ \epsilon \|\bv_{\text{max}} (\mS(\boldsymbol{\sigma}')^2)\|_1}{|\bv^\top_{\text{max}} (\mS(\boldsymbol{\sigma})^2)  \bx_i|}\right\}  \right)\\
   & \quad + \sum_{i=1}^n -\sigma_i \left(-2 (\bv_{\text{max}} (\mS(\boldsymbol{\sigma}')^2)^\top \bx_i)^2\min\left\{1, \frac{ \epsilon \|\bv_{\text{max}} (\mS(\boldsymbol{\sigma}')^2)\|_1}{|\bv_{\text{max}} (\mS(\boldsymbol{\sigma}')^2)^\top \bx_i|}\right\}  \right)\\
   & = 0.
\end{align*} 
Similarly $J(\boldsymbol{\sigma}) + J(-\boldsymbol{\sigma})=0$.

According to \Cref{lem:partition}, we can split $\{-1,+1\}^n$ into $\cA^+$ and $\cA^-$, such that $|\cA^+|=|\cA^-|$ and $\cA^- = -\cA^+$ where $-$ is element-wised negative sign. So we know:
\begin{align*}
    \mathbb{E}_{\boldsymbol{\sigma}} [I(\boldsymbol{\sigma})] = \sum_{\boldsymbol{\sigma}\in \cA^+} \frac{1}{2^n} I(\boldsymbol{\sigma}) + \sum_{\boldsymbol{\sigma}\in \cA^-} \frac{1}{2^n} I(\boldsymbol{\sigma})= \sum_{\boldsymbol{\sigma}\in \cA^+} \frac{1}{2^n} I(\boldsymbol{\sigma}) + \sum_{\boldsymbol{\sigma}\in \cA^+} \frac{1}{2^n} I(-\boldsymbol{\sigma}) = 0
\end{align*}
\begin{align*}
    \mathbb{E}_{\boldsymbol{\sigma}} [J(\boldsymbol{\sigma})] = \sum_{\boldsymbol{\sigma}\in \cA^+} \frac{1}{2^n} J(\boldsymbol{\sigma}) + \sum_{\boldsymbol{\sigma}\in \cA^-} \frac{1}{2^n} J(\boldsymbol{\sigma})= \sum_{\boldsymbol{\sigma}\in \cA^+} \frac{1}{2^n} J(\boldsymbol{\sigma}) + \sum_{\boldsymbol{\sigma}\in \cA^+} \frac{1}{2^n} J(-\boldsymbol{\sigma}) = 0
\end{align*}

Hence we conclude that ${\frakR}_{\hat\cD} (\tilde{f} \circ \cH \Delta \cH) \geq {\frakR}_{\hat\cD} ({f} \circ \cH \Delta \cH)$.

\textbf{Case II: $ p > 2$.} 

Similarly we have that
\begin{align*}
    &{\frakR}_{\hat\cD} (\tilde{f} \circ \cH \Delta \cH) - {\frakR}_{\hat\cD} ({f} \circ \cH \Delta \cH) \\
    &= \mathbb{E}_{\boldsymbol{\sigma}'} \left[ \sup_{\|\bw\|_p \leq 2W} \frac{1}{n}\sum_{i=1}^n \sigma'_i \left(\bw^\top \bx_i- \bw^\top \bx_i\min\left\{1, \frac{ \epsilon \|\bw\|_1}{|\bw^\top \bx_i|}\right\} \right)^2 \right] \\
    &\quad - d^{1-2/p} \mathbb{E}_{\boldsymbol{\sigma}}\left[ \sup_{\|\bw\|_2 \leq W,\|\bw'\|_2 \leq W} \frac{1}{n}\sum_{i=1}^n \sigma_i \bw^\top \bx_i \bx_i^\top \bw'  \right]\\
    &\geq \mathbb{E}_{\boldsymbol{\sigma}'} \left[   \frac{1}{n}\sum_{i=1}^n \sigma'_i \left({\bv'^*}^\top \bx_i- {\bv'^*}^\top \bx_i\min\left\{1, \frac{ \epsilon \|\bv'^*\|_1}{|{\bv'^*}^\top \bx_i|}\right\}   \right)^2 \right] \\
    &\quad - d^{1-2/p}\mathbb{E}_{\boldsymbol{\sigma}}\left[ \frac{1}{n}\sum_{i=1}^n \sigma_i  {\bv^*}^\top \bx_i \bx_i^\top \bv^* \right]\\ 
    &= \frac{W^2}{n}(1-d^{1-2/p}) \mathbb{E}_{\boldsymbol{\sigma}} \left\| \sum_{i=1}^n \sigma_i \bx_i \bx_i^\top \right\| \\
    &\quad + \frac{ W^2}{n} \mathbb{E}_{\boldsymbol{\sigma}'} \left[  \sum_{i=1}^n \sigma'_i \left(-2 (\bv_{\text{max}} (\mS(\boldsymbol{\sigma}')^2)^\top \bx_i)^2\min\left\{1, \frac{ \epsilon \|\bv_{\text{max}} (\mS(\boldsymbol{\sigma}')^2)\|_1}{|\bv_{\text{max}} (\mS(\boldsymbol{\sigma}')^2)^\top \bx_i|}\right\}  \right) \right]\\
    & \quad  +  \frac{ W^2}{n} \mathbb{E}_{\boldsymbol{\sigma}'} \left[  \sum_{i=1}^n \sigma'_i \left(  (\bv_{\text{max}} (\mS(\boldsymbol{\sigma}')^2)^\top \bx_i)^2\min\left\{1, \frac{ \epsilon \|\bv_{\text{max}} (\mS(\boldsymbol{\sigma}')^2)\|_1}{|\bv_{\text{max}} (\mS(\boldsymbol{\sigma}')^2)^\top  \bx_i|}\right\}^2  \right) \right]\\
    & \geq  \frac{W^2}{n}(1-d^{1-2/p}) \mathbb{E}_{\boldsymbol{\sigma}} \left\| \sum_{i=1}^n \sigma_i \bx_i \bx_i^\top \right\|_2.
\end{align*}
where in the last step we also use the same reasoning as in \textbf{Case I}.

\end{proof}

\section{Proofs for Linear Regression}
\label{app: pf of thm regression}

\subsection{Proof of~\Cref{lem:rademacher_complexity_linear_classifier_regression}}
\label{sec:appdx_rademacher_complexity_linear_classifier_regression}

In this subsection we are going to present the proof of upper bound of the Rademacher complexity for regression under linear hypothesis.

\begin{proof}
    We first aim at controlling the non-adversarial Rademacher complexity over the $\cH \Delta \cH$ class.
    We specify its definition given in~\eqref{eq:rademacher_complexity_over_h_delta_h_class} for the linear regression setting below
    \begin{align*}
        {\frakR}_{\hat\cD} ({\ell} \circ \cH \Delta \cH) = \mathbb{E}_\sigma \left[\sup_{\bw,\bw': \norm{\bw}_p \leq W, \norm{\bw'}_p \leq W} \frac{1}{n}\sum_{i=1}^n \sigma_i (\bw^T \bx_i- \bw'^T \bx_i )^2\right] \enspace.
    \end{align*}
    We introduce the variable change $\bv \eqdef \bw - \bw'$, which yields to
    \begin{equation}
        \label{eq:value_non_adversarial_rademacher_linear_regression}
        {\frakR}_{\hat\cD} ({\ell} \circ \cH \Delta \cH) = \mathbb{E}_\sigma \left[\sup_{\bv: \|\bv\|_p \leq 2W} \frac{1}{n}\sum_{i=1}^n \sigma_i (\bv^{\top} \bx_i )^2\right] \enspace.
    \end{equation}

    We first derive the upper bound of ${\frakR}_{\hat\cD}({\ell} \circ \cH \Delta \cH)$. 
    We follow similar steps than in~\Cref{sec:adv_complexity_binary_classif_linear_hyp}: we rewrite the supremum as a spectral norm and then apply a matrix Bernstein inequality.
    We have that
    \begin{align*} 
        {\frakR}_{\hat\cD} ({\ell} \circ \cH \Delta \cH) & = \mathbb{E}_\sigma \left[\sup_{\bv: \|\bv\|_p \leq 2W} \frac{1}{n}\sum_{i=1}^n \sigma_i (\bv^{\top} \bx_i )^2\right]  \\
        &= \mathbb{E}_\sigma \left[\sup_{\bv: \norm{\bv}_p \leq 2W} \bv^{\top} \left(\frac{1}{n}\sum_{i=1}^n \sigma_i \bx_i \bx_i^\top \right)\bv \right] \\
        \overset{\eqref{eq:upper_bound_quadratic_form_norm_equiv_p_2}}&{\leq}\mathbb{E}_\sigma \left[\sup_{\bv: \norm{\bv}_2 \leq 2W} \bv^\top \left(\frac{1}{n} \sum_{i=1}^n \sigma_i \bx_i \bx_i^\top \right) \bv'  \right]
        \cdot 
        \begin{cases}
            1, &\text{if } 1 \leq p \leq 2\\
            d^{1-2/p}, & \text{else if } p > 2
        \end{cases} \\
        &= \frac{4W^2}{n} \mathbb{E}_\sigma \left[ \norm{\sum_{i=1}^n \sigma_i\bx_i \bx_i^\top}_2 \right] 
        \times 
        \begin{cases}
            1, &\text{if } 1 \leq p \leq 2\\
            d^{1-2/p}, & \text{else if } p > 2
        \end{cases}
        \enspace.               
    \end{align*}
    Following exactly the same steps as in~\Cref{sec:adv_complexity_binary_classif_linear_hyp}, we apply the matrix Bernstein inequality.
    Let us denote by $\mZ_i \eqdef \sigma_i \bx_i \bx_i^\top \in \R^{d \times d}$ the random matrices we want to apply~\Cref{thm:matrix_bernstein_inequality} to. 
    Then, $\mZ_i^2 = \sigma_i^2 (\bx_i \bx_i^\top)^2 = (\bx_i \bx_i^\top)^2$ is a deterministic matrix.
    These random matrices $\mZ_i$ are symmetric, independent, have zero mean and are such that for all $i \in [n]$
    \begin{align*}
        \|\mZ_i\|_2 = \sqrt{\lambda_{\max} (\mZ_i^2)} = \sqrt{\lambda_{\max} (\bx_i \bx_i^\top \bx_i \bx_i^\top)} = \norm{\bx_i}_2^2 \leq \max_{j \in [n]} \norm{\bx_j}_2^2 = \norm{\mX}^2_{2,\infty} \enspace.
    \end{align*}
    Moreover, let $\mY \eqdef \sum_{i=1}^n \mZ_i$.
    According to matrix Bernstein inequality, we get the desired bound
    \begin{align*}
        &{\frakR}_{\hat\cD} ({\ell} \circ \cH \Delta \cH) \\
        &\quad \overset{\eqref{eq:matrix_bernstein_inequality}}{\leq} \frac{4 W^2}{n} \left( \sqrt{2\norm{\sum_{i=1}^n (\bx_i \bx_i^\top)^2}_2 \log (2d) } + \frac{1}{3} \norm{\mX}^2_{2,\infty} \log (2d) \right) \times
        \begin{cases} 
            1 & p\leq 2\\
            d^{1 -2/p} & p> 2
        \end{cases} \enspace.
    \end{align*} 
\end{proof}

\subsection{Proof of the upper bound of~\Cref{thm:adv_rademacher_complexity_linear_regression}}
\label{sec:appdx_adv_rademacher_complexity_linear_regression_upper_bound}
{In this subsection we will present the proof of upper bound of the adversarial Rademacher complexity for regression under linear hypothesis.}

\begin{proof}
    We then examine the adversarial Rademacher complexity of linear regression models defined in~\eqref{eq:adversarial_rademacher_complexity_over_h_delta_h_class} as
    \begin{equation}
        \label{eq:def_adv_rademacher_complexity_linear_regression}
        {\frakR}_{\hat\cD}(\tilde{\ell} \circ \cH \Delta \cH) = \mathbb{E}_\sigma \left[\sup_{\substack{\bw: \norm{\bw}_p \leq W \\ \bw': \|\bw'\|_p \leq W}} \frac{1}{n}\sum_{i=1}^n \sigma_i \max_{\bdelta:\norm{\bdelta}_\infty \leq \epsilon} (\bw^T(\bx_i+\bdelta )- \bw'^T (\bx_i+\bdelta ))^2\right] \enspace.
    \end{equation}

    We start by expressing ${\frakR}_{\hat\cD} (\tilde{\ell} \circ \cH \Delta \cH)$ as a function of ${\frakR}_{\hat\cD} (\ell \circ \cH \Delta \cH)$, its non-adversarial counterpart studied in~\Cref{lem:rademacher_complexity_linear_classifier_regression}. 
    Let $\bv : = \bw  - \bw'$. We expend this quantity as follows
\begin{align}
        {\frakR}_{\hat\cD}(\tilde{\ell} \circ \cH \Delta \cH) 
        \overset{\eqref{eq:def_adv_rademacher_complexity_linear_regression}}&{=} \mathbb{E}_\sigma \left[\sup_{\bv: \norm{\bv}_p \leq 2W} \frac{1}{n}\sum_{i=1}^n \sigma_i \max_{\bdelta:\norm{\bdelta}_\infty \leq \epsilon} (\bv^T \bx_i + \bv^T \bdelta)^2\right] \nonumber \\
        &= \mathbb{E}_\sigma \left[ \sup_{\bv: \norm{\bv}_p \leq 2W} \frac{1}{n}\sum_{i=1}^n \sigma_i \max_{\bdelta:\norm{\bdelta}_\infty \leq \epsilon} (\bv^T \bx_i)^2 + 2 \bv^T \bx_i \bv^T \bdelta + (\bv^T \bdelta)^2 \right] \nonumber \\
        &\leq {\frakR}_{\hat\cD}({\ell} \circ \cH \Delta \cH) + \mathbb{E}_\sigma \left[\sup_{\bv: \norm{\bv}_p \leq 2W} \frac{1}{n}\sum_{i=1}^n \sigma_i \max_{\bdelta:\norm{\bdelta}_\infty \leq \epsilon} 2 \bv^T \bx_i \bdelta^T \bv + (\bv^T \bdelta)^2 \right] \nonumber \\
        &= {\frakR}_{\hat\cD}({\ell} \circ \cH \Delta \cH) 
        + \underbrace{\mathbb{E}_\sigma \left[\sup_{\bv: \norm{\bv}_p \leq 2W} \frac{1}{n}\sum_{i=1}^n \sigma_i \max_{\bdelta:\norm{\bdelta}_\infty \leq \epsilon} \bv^\top (2 \bx_i \bdelta^\top + \bdelta \bdelta^\top) \bv \right]}_{A} \enspace,
        \label{eq:first_development_adv_rademacher_complexity_linear_regression}
\end{align}
where we used the subadditivity of the supremum in to make appear the non-adversarial Rademacher complexity over $\cH \Delta \cH$ class.

Now we examine the upper bound of the second term $A$ using the notion of covering recalled in~\Cref{eq:covering}.
Let $\cC$ be a covering of the centered $\ell_p$ ball of radius $2W$, that we denote by $\cB_p (2W)$, with $\ell_p$ balls of radius $\rho > 0$.
Let us define
\begin{equation}
    \label{eq:def_psi_auxiliary_function_linreg}
    \zeta_i (\bv) \eqdef \max_{\bdelta:\norm{\bdelta}_\infty \leq \epsilon} \bv^\top (2 \bx_i \bdelta^\top + \bdelta \bdelta^\top) \bv \enspace.
\end{equation}
Thus we can rewrite $A$ as
\begin{align}
    A \overset{\eqref{eq:def_psi_auxiliary_function_linreg}}&{=} \EE[\sigma]{ \sup_{\bv \in \cB_p (2W)} \frac{1}{n} \sum_{i=1}^n \sigma_i \zeta_i (\bv) } \nonumber \\
    &= \EE[\sigma]{ \sup_{\substack{\bv \in \cB_p (2W) \\ \bv_c \in \cC \, : \, \norm{\bv - \bv_c}_p \leq \rho}} \frac{1}{n}\sum_{i=1}^n \sigma_i \big( \zeta_i (\bv_c) + \zeta_i (\bv) - \zeta_i (\bv_c) \big) } \nonumber \\
    &\leq \EE[\sigma]{ \sup_{\tilde{\bv} \in \cC} \frac{1}{n} \sum_{i=1}^n \sigma_i \zeta_i (\tilde{\bv}) } 
    + \EE[\sigma]{ \sup_{\bv \in \cB_p (2W)} \frac{1}{n} \sum_{i=1}^n \sigma_i \left( \zeta_i (\bv) - \zeta_i (\bv_c) \right) }
    \nonumber \\
    &\leq \underbrace{\EE[\sigma]{ \sup_{\tilde{\bv} \in \cC} \frac{1}{n} \sum_{i=1}^n \sigma_i \zeta_i (\tilde{\bv}) }}_{(I)} 
    + \underbrace{\sup_{\bv \in \cB_p (2W)} \frac{1}{n}\sum_{i=1}^n |\zeta_i (\bv) - \zeta_i (\bv_c)|}_{(II)} \enspace. 
    \label{eq:terms_after_covering_linreg}
\end{align}
where $\bv_c$ is the closest element to $\bv$ in $\cC$ and where we used the subadditivity of the supremum.

\paragraph{Bounding $(I)$:}
We first need to bound the left-hand side term $(I)$. 
We introduce the vector
\begin{equation*}
    \bzeta (\tilde{\bv}) \eqdef [\zeta_1 (\tilde{\bv}), \ldots, \zeta_n (\tilde{\bv})]^\top \in \R^n \enspace.
\end{equation*} 
By Massart's lemma (Lemma 5.2 of~\cite{massart2000some}), we are able to control the first term $(I)$ in~\eqref{eq:terms_after_covering}:
\begin{equation}
    \label{eq:massart_upper_bound_first_term_linreg}
     (I) = \EE[\sigma]{ \sup_{\tilde{\bv} \in \cC} \frac{1}{n} \sum_{i=1}^n \sigma_i \zeta_i (\tilde{\bv}) } \leq \frac{K\sqrt{2\log |\cC|}}{n} \enspace,
\end{equation}
with $K$ given by the largest $\ell_2$-norm of $\bzeta$ over the covering $\cC$, that is
\begin{equation}
    \label{eq:max_radius_k_massart_lemma_linreg}
    K^2 = \max_{\tilde{\bv} \in \cC} \norm{\bzeta}_2^2 = \max_{\tilde{\bv} \in \cC} \sum_{i=1}^n \zeta_i (\tilde{\bv})^2 \leq \max_{\bv \in \cB_p (2W)} \sum_{i=1}^n \zeta_i (\bv)^2 \enspace.
\end{equation} 
Now we examine the upper bound of $K$. 
Note that $\zeta_i (\bv) \geq 0$, then we can upper bound as follows
\begin{align}
    \zeta_i (\bv) &= \max_{\bdelta:\norm{\bdelta}_\infty \leq \epsilon} \bv^\top (2 \bx_i \bdelta^\top + \bdelta \bdelta^\top) \bv \nonumber \\
    &\leq \norm{\bv}_2^2 \max_{\bdelta:\norm{\bdelta}_\infty \leq \epsilon}  \norm{2 \bx_i \bdelta^\top + \bdelta \bdelta^\top}_2 \nonumber \\
    &\leq \norm{\bv}_2^2 \left( \max_{\bdelta:\norm{\bdelta}_\infty \leq \epsilon} \norm{2 \bx_i \bdelta^\top}_2 + \max_{\bdelta:\norm{\bdelta}_\infty \leq \epsilon} \norm{\bdelta \bdelta^\top}_2 \right) \nonumber \\
    &= \norm{\bv}_2^2 \max_{\bdelta:\norm{\bdelta}_\infty \leq \epsilon} \norm{\bdelta}_2 \left( 2 \norm{\bx_i}_2 + \max_{\bdelta:\norm{\bdelta}_\infty \leq \epsilon} \norm{\bdelta}_2 \right) \nonumber \\
    \overset{\norm{\bdelta}_2 \leq \sqrt{d} \norm{\bdelta}_{\infty}}&{\leq} \norm{\bv}_2^2 \sqrt{d} \epsilon \left( \sqrt{d} \epsilon + 2 \norm{\bx_i}_2 \right) \enspace, 
    \label{eq:upper_bound_zeta_i} 
\end{align}
where we applied Cauchy-Schwarz inequality and the definition of operator norm and then the subadditivity of the maximum.
Recalling the case disjunction
\begin{equation}
    \label{eq:2_norm_to_p_norm}
    \begin{cases}
        \norm{\bv}_2 \leq \norm{\bv}_p, & \text{if } 1 \leq p \leq 2 \\
        \norm{\bv}_2 \leq \norm{\bv}_p d^{1/2 - 1/p}, & \text{else if }  p > 2
    \end{cases} \enspace,
\end{equation}
we are able to upper bound $K^2$, by using~\eqref{eq:max_radius_k_massart_lemma_linreg} and~\eqref{eq:upper_bound_zeta_i} which leads to 
\begin{align*}
    K^2 \overset{\eqref{eq:max_radius_k_massart_lemma_linreg}}&{\leq} \max_{\bv \in \cB_p (2W)} \sum_{i=1}^n \zeta_i (\bv)^2 \\
    \overset{\eqref{eq:upper_bound_zeta_i}}&{\leq} \max_{\bv \in \cB_p (2W)} \norm{\bv}_2^4 (\sqrt{d} \epsilon)^2 \sum_{i=1}^n \left( \sqrt{d} \epsilon + 2 \norm{\bx_i}_2 \right)^2 \\
    \overset{\eqref{eq:2_norm_to_p_norm}}&{\leq} \max_{\bv \in \cB_p (2W)} \norm{\bv}_p^4 (\sqrt{d} \epsilon)^2 \sum_{i=1}^n \left( \sqrt{d} \epsilon + 2 \norm{\bx_i}_2 \right)^2
    \cdot 
    \begin{cases}
        1, &\text{if } 1 \leq p \leq 2\\
        (d^{1-2/p})^2, & \text{else if } p > 2
    \end{cases} \\
    &= (4W^2)^2 (\sqrt{d} \epsilon)^2 \sum_{i=1}^n \left( \sqrt{d} \epsilon + 2 \norm{\bx_i}_2 \right)^2
    \cdot 
    \begin{cases}
        1, &\text{if } 1 \leq p \leq 2\\
        (d^{1-2/p})^2, & \text{else if } p > 2
    \end{cases}
    \enspace.
\end{align*} 
Then, by taking the square root in the above we get
\begin{align*}
    K \leq 4 W^2 \sqrt{d} \epsilon \sqrt{\sum_{i=1}^n \left( \sqrt{d} \epsilon + 2 \norm{\bx_i}_2 \right)^2}
    \cdot
    \begin{cases}
        1, &\text{if } 1 \leq p \leq 2\\
        d^{1-2/p}, & \text{else if } p > 2
    \end{cases}
    \enspace.  
\end{align*}
Jointly with the application of~\Cref{lem:cardinality_covering_of_ball}, we can conclude that 
\begin{equation}
    \label{eq:massart_upper_bound_first_term_linreg_follow_up}
     (I) \leq 4 \frac{W^2}{n} \sqrt{d} \epsilon \sqrt{\sum_{i=1}^n \left( \sqrt{d} \epsilon + 2 \norm{\bx_i}_2 \right)^2}
    \sqrt{2 d \log (6W / \rho)} 
    \times 
    \begin{cases}
        1, &\text{if } 1 \leq p \leq 2\\
        d^{1-2/p}, & \text{else if } p > 2
    \end{cases}
    \enspace.
\end{equation}

\paragraph{Bounding $(II)$.}
Now we turn to bounding the second term of~\eqref{eq:terms_after_covering_linreg}
\begin{equation*}
    (II) \eqdef \sup_{\bv \in \cB_p (2W)} \frac{1}{n}\sum_{i=1}^n |\zeta_i (\bv) - \zeta_i (\bv_c)| \enspace,
\end{equation*}
recalling that $\zeta_i (\bv) \eqdef \max_{\bdelta:\norm{\bdelta}_\infty \leq \epsilon} \bv^\top (2 \bx_i \bdelta^\top + \bdelta \bdelta^\top) \bv$.
Let $i \in [n]$, $\bv \in \cB_p (2W)$ and its corresponding closest point in the covering $\bv_c \in \cC$. 
Let us define 
\begin{equation*}
    \begin{cases}
        &\bdelta^* \eqdef \argmax_{\norm{\bdelta}_\infty \leq \epsilon} \bv^\top (2 \bx_i \bdelta^\top + \bdelta \bdelta^\top) \bv \\
        &\bdelta_c^* \eqdef \argmax_{\norm{\bdelta}_\infty \leq \epsilon} \bv_c^\top (2 \bx_i \bdelta^\top + \bdelta \bdelta^\top) \bv_c
    \end{cases} 
    \enspace.
\end{equation*}
Also, let $\mM^* \eqdef 2 \bx_i {\bdelta^*}^\top + \bdelta^* {\bdelta^*}^\top$.
Then, we can make the difference explicit and upper bound it
\begin{align*}
    \zeta_i (\bv) - \zeta_i (\bv_c) &= \bv^\top (2 \bx_i {\bdelta^*}^\top + \bdelta^* {\bdelta^*}^\top) \bv - \bv_c^\top (2 \bx_i {\bdelta_c^*}^\top + \bdelta_c^* {\bdelta_c^*}^\top) \bv_c \\
    &\leq \bv^\top (2 \bx_i {\bdelta^*}^\top + \bdelta^* {\bdelta^*}^\top) \bv - \bv_c^\top (2 \bx_i {\bdelta^*}^\top + \bdelta^* {\bdelta^*}^\top) \bv_c \\
    &= \bv^\top \mM^* \bv - \bv_c^\top \mM^* \bv_c \\
    &= (\bv - \bv_c)^\top \mM^* \bv + \bv_c^\top \mM^* (\bv - \bv_c) \\
    \overset{\text{Cauchy-Schwarz}}&{\leq} \norm{\mM^*}_2 \norm{\bv - \bv_c}_2 (\norm{\bv}_2 + \norm{\bv_c}_2) \\
    &\leq 4 \rho W  \sqrt{d} \epsilon \left( \sqrt{d} \epsilon + 2 \norm{\bx_i}_2 \right) \times 
    \begin{cases}
        1, &\text{if } 1 \leq p \leq 2\\
        d^{1-2/p}, & \text{else if } p > 2
    \end{cases} \enspace,
\end{align*}
where lastly we used the same arguments leading to~\eqref{eq:upper_bound_zeta_i}, the norm transfer in~\eqref{eq:2_norm_to_p_norm} and the inequality $\norm{.}_2 \leq \sqrt{d} \norm{.}_{\infty}$.
Symmetrically, we can show that the above upper bound holds for $\zeta_i (\bv_c) - \zeta_i (\bv)$.
Thus, $(II)$ is upper bounded by
\begin{equation}
    \label{eq:massart_ROUGH_upper_bound_second_term_linreg}
    (II) \leq 4 \rho W \sqrt{d} \epsilon \left( \sqrt{d} \epsilon + \frac{2}{n} \sum_{i=1}^n \norm{\bx_i}_2 \right) 
    \times 
    \begin{cases}
        1, &\text{if } 1 \leq p \leq 2\\
        d^{1-2/p}, & \text{else if } p > 2
    \end{cases} \enspace.
\end{equation}
Let us choose $\rho = W / n$. 
We have then proved that
\begin{align*}
    &A \overset{\eqref{eq:massart_upper_bound_first_term_linreg_follow_up}-\eqref{eq:massart_ROUGH_upper_bound_second_term_linreg}}{\leq} (I) + (II) \\
    &\leq \left[
    \frac{1}{n} 4 W^2 \sqrt{d} \epsilon \sqrt{\sum_{i=1}^n \left( \sqrt{d} \epsilon + 2 \norm{\bx_i}_2 \right)^2} \sqrt{2 d \log (6W / \rho)} 
    + 4 \rho W \sqrt{d} \epsilon \left( \sqrt{d} \epsilon + \frac{2}{n} \sum_{i=1}^n \norm{\bx_i}_2 \right)
    \right] \\
    &\hspace*{27em} \times 
    \begin{cases}
        1, &\text{if } 1 \leq p \leq 2\\
        d^{1-2/p}, & \text{else if } p > 2
    \end{cases} \\
    &= 4 \frac{W^2}{n} \sqrt{d} \epsilon \left( \sqrt{d} \epsilon + \frac{2}{n} \sum_{i=1}^n \norm{\bx_i}_2 + \sqrt{\sum_{i=1}^n \left( \sqrt{d} \epsilon + 2 \norm{\bx_i}_2 \right)^2} \sqrt{2 d \log (6 n)} \right) \\
    &\hspace*{27em} \times 
    \begin{cases}
        1, &\text{if } 1 \leq p \leq 2\\
        d^{1-2/p}, & \text{else if } p > 2
    \end{cases}
    \enspace.
\end{align*}

By combining~\eqref{eq:first_development_adv_rademacher_complexity_linear_regression},~\eqref{eq:terms_after_covering_linreg} and the above bound, we are able to finish the proof for the upper bound of the adversarial Rademacher complexity over class $\cH \Delta \cH$ for linear regression.
Finally, this gives
\begin{align*}
    &{\frakR}_{\hat\cD}(\tilde{\ell} \circ \cH \Delta \cH) \leq {\frakR}_{\hat\cD}({\ell} \circ \cH \Delta \cH) \\
    &+ 4 \frac{W^2}{n} \sqrt{d} \epsilon \left( \sqrt{d} \epsilon + \frac{2}{n} \sum_{i=1}^n \norm{\bx_i}_2 + \sqrt{\sum_{i=1}^n \left( \sqrt{d} \epsilon + 2 \norm{\bx_i}_2 \right)^2} \sqrt{2 d \log (6 n)} \right) \\
    &\hspace*{27em} \times 
    \begin{cases}
        1, &\text{if } 1 \leq p \leq 2\\
        d^{1-2/p}, & \text{else if } p > 2
    \end{cases}
    \enspace.
\end{align*}
\end{proof}

\subsection{Proof of the lower bound of~\Cref{thm:adv_rademacher_complexity_linear_regression}}
\label{sec:appdx_adv_rademacher_complexity_linear_regression_lower_bound}

In this subsection we present the proof of lower bound of the adversarial Rademacher complexity for regression under linear hypothesis.

\begin{proof} 
Recall the definition of non-adversarial Rademacher complexity
\begin{align*}
    {\frakR}_{\hat\cD} (\ell \circ \cH \Delta \cH) \overset{\eqref{eq:value_non_adversarial_rademacher_linear_regression}}&{=} \mathbb{E}\left[ \sup_{\|\bv\|_p \leq 2W} \frac{1}{n}\sum_{i=1}^n \sigma_i (\bv^\top \bx_i)^2 \right] \\
    \overset{\Cref{lem:general_norm_equivalence}}&{\leq} \mathbb{E}\left[ \sup_{\|\bv\|_2 \leq 2W} \frac{1}{n}\sum_{i=1}^n \sigma_i (\bv^\top \bx_i)^2 \right] \cdot 
    \begin{cases}
        1, &\text{if } 1 \leq p \leq 2\\
        d^{1-2/p}, & \text{else if } p > 2
    \end{cases} \enspace,
\end{align*}
due to equivalence of norms.
Now,  we denote $\bv^*$ such that $\bv^* = \arg\max_{\|\bv\|_2 \leq 2W} \frac{1}{n}\sum_{i=1}^n \sigma_i (\bv^\top \bx_i)^2$.
One can verify that the maximum value of $\frac{1}{n}\sum_{i=1}^n \sigma_i (\bv^\top \bx_i)^2$ is:
\begin{align*}
     \sup_{\|\bv\|_2 \leq 2W} \frac{1}{n}\sum_{i=1}^n \sigma_i (\bv^\top \bx_i)^2   =  \frac{1}{n}\sup_{\|\bv\|_2 \leq 2W}  \bv^\top \left(\sum_{i=1}^n\sigma_i \bx_i \bx_i^\top \right)\bv  \leq \frac{4W^2}{n} \left\|\sum_{i=1}^n\sigma_i \bx_i \bx_i^\top\right\|_2
\end{align*}
and if we define $\mS(\boldsymbol{\sigma}) \eqdef \sum_{i=1}^n \sigma_i \bx_i \bx_i^\top$ the maxima is attained when $\bv^* = 2W\bv_{\text{max}} (\mS(\boldsymbol{\sigma})^2)$.
 
Now, we switch to adversarial Rademacher:
\begin{align*}
    {\frakR}_{\hat\cD} (\tilde{\ell} \circ \cH \Delta \cH) &= \mathbb{E}\left[ \sup_{\|\bv\|_p \leq 2W} \frac{1}{n}\sum_{i=1}^n \sigma_i \max_{\|\bdelta\|_\infty \leq \epsilon} (\bv^\top (\bx_i+\bdelta))^2 \right]\\
    \overset{\Cref{lem:quadratic_obj_norm_equivalence}}&{\geq} \mathbb{E}\left[ \sup_{\|\bv\|_2 \leq 2W} \frac{1}{n}\sum_{i=1}^n \sigma_i \max_{\|\bdelta\|_\infty \leq \epsilon} (\bv^\top (\bx_i+\bdelta))^2 \right] \cdot
    \begin{cases}
        d^{1-2/p}, & \text{if } 1 \leq p \leq 2 \\
        1, \quad  & \text{else if } p > 2
    \end{cases} 
\end{align*}
According to Lemma~\ref{lem:solution_max_pb_delta}, we have:
\begin{align*}
    {\frakR}_{\hat\cD} (\tilde{\ell} \circ \cH \Delta \cH) = \mathbb{E} \left[ \sup_{\|\bv\|_2 \leq 2W} \frac{1}{n}\sum_{i=1}^n \sigma_i (\epsilon\|\bv\|_1 + |\bv^\top \bx_i|)^2 \right]\cdot
    \begin{cases}
        d^{1-2/p}, & \text{if } 1 \leq p \leq 2 \\
        1, \quad  & \text{else if } p > 2
    \end{cases}.
\end{align*}

\textbf{Case I: $1 \leq p\leq 2$:} 

First, to avoid confusion in different Rademacher variables, let us use $\boldsymbol{\sigma}'$ and $\boldsymbol{\sigma}$ to denote the Rademacher variables in ${\frakR}_{\hat\cD} (\tilde{\ell} \circ \cH \Delta \cH)$ and ${\frakR}_{\hat\cD} ({\ell} \circ \cH \Delta \cH)$. Then, let us
define $\bv'^* \eqdef 2W\bv_{\text{max}} (\mS(\boldsymbol{\sigma}')^2)$
\begin{align*}
    &{\frakR}_{\hat\cD} (\tilde{\ell} \circ \cH \Delta \cH) - {\frakR}_{\hat\cD} ({\ell} \circ \cH \Delta \cH) \\
    &= \mathbb{E}_{\boldsymbol{\sigma}'} \left[ \sup_{\|\bv\|_p \leq 2W} \frac{1}{n}\sum_{i=1}^n \sigma'_i (\epsilon\|\bv\|_1 + |\bv^\top \bx_i|)^2 \right] - \mathbb{E}_{\boldsymbol{\sigma}}\left[ \sup_{\|\bv\|_2 \leq 2W} \frac{1}{n}\sum_{i=1}^n \sigma_i (\bv^\top \bx_i)^2 \right]\\
    & \geq  \mathbb{E}_{\boldsymbol{\sigma}'} \left[   \frac{1}{n}\sum_{i=1}^n \sigma'_i (\epsilon\|\bv'^*\|_1 + |{\bv'^*}^\top \bx_i|)^2 \right] - \mathbb{E}_{\boldsymbol{\sigma}}\left[ \frac{1}{n}\sum_{i=1}^n \sigma_i ({\bv^*}^\top \bx_i)^2 \right]\\ 
    & =  \frac{4W^2}{n} \mathbb{E}_{\boldsymbol{\sigma}'} \left[  \sum_{i=1}^n \sigma'_i \left( 2\epsilon  \|\bv_{\text{max}} (\mS(\boldsymbol{\sigma}')^2)\|_1 |\bv^\top_{\text{max}} (\mS(\boldsymbol{\sigma}')^2) \bx_i| +  \epsilon^2  \|\bv_{\text{max}} (\mS(\boldsymbol{\sigma}')^2)\|_1^2\right) \right]
\end{align*}
Let $J(\boldsymbol{\sigma}) = \sum_{i=1}^n \sigma_i \left(  2\epsilon  \|\bv_{\text{max}} (\mS(\boldsymbol{\sigma})^2)\|_1 |\bv^\top_{\text{max}} (\mS(\boldsymbol{\sigma})^2) \bx_i| +  \epsilon^2  \|\bv_{\text{max}} (\mS(\boldsymbol{\sigma})^2)\|_1^2\right)$, and we claim that $J(\boldsymbol{\sigma}) + J(-\boldsymbol{\sigma})=0$. Now we are going to prove this claim. First we know that $\bv_{\text{max}} (\mS(\boldsymbol{\sigma})^2) =\bv_{\text{max}} (\mS(-\boldsymbol{\sigma})^2)$, since $\mS(-\boldsymbol{\sigma})^2 = (-\sum_{i=1}^n \sigma_i \bx_i \bx_i^\top)^2 = \mS(\boldsymbol{\sigma})^2$. So  

\begin{align*}
   J(\boldsymbol{\sigma}) + J(-\boldsymbol{\sigma})&=  \sum_{i=1}^n \sigma_i \left(  2\epsilon  \|\bv_{\text{max}} (\mS(\boldsymbol{\sigma})^2)\|_1 |\bv^\top_{\text{max}} (\mS(\boldsymbol{\sigma})^2) \bx_i| +  \epsilon^2  \|\bv_{\text{max}} (\mS(\boldsymbol{\sigma})^2)\|_1^2\right) \\
   & \quad + \sum_{i=1}^n -\sigma_i \left(  2\epsilon  \|\bv_{\text{max}} (\mS(\boldsymbol{\sigma})^2)\|_1 |\bv^\top_{\text{max}} (\mS(\boldsymbol{\sigma})^2) \bx_i| +  \epsilon^2  \|\bv_{\text{max}} (\mS(\boldsymbol{\sigma})^2)\|_1^2\right)\\
   & = 0.
\end{align*} 
According to \Cref{lem:partition}, we can split $\{-1,+1\}^n$ into $\cA^+$ and $\cA^-$, such that $|\cA^+|=|\cA^-|$ and $\cA^- = -\cA^+$ where $-$ is element-wised negative sign. So we know:
\begin{align*}
    \mathbb{E}_{\boldsymbol{\sigma}} [J(\boldsymbol{\sigma})] = \sum_{\boldsymbol{\sigma}\in \cA^+} \frac{1}{2^n} J(\boldsymbol{\sigma}) + \sum_{\boldsymbol{\sigma}\in \cA^-} \frac{1}{2^n} J(\boldsymbol{\sigma})= \sum_{\boldsymbol{\sigma}\in \cA^+} \frac{1}{2^n} J(\boldsymbol{\sigma}) + \sum_{\boldsymbol{\sigma}\in \cA^+} \frac{1}{2^n} J(-\boldsymbol{\sigma}) = 0
\end{align*}

Hence we conclude that ${\frakR}_{\hat\cD} (\tilde{\ell} \circ \cH \Delta \cH) \geq {\frakR}_{\hat\cD} ({\ell} \circ \cH \Delta \cH)$.

\textbf{Case II: $ p > 2$:} 

Similarly we have:
\begin{align*}
    &{\frakR}_{\hat\cD} (\tilde{\ell} \circ \cH \Delta \cH) - {\frakR}_{\hat\cD} ({\ell} \circ \cH \Delta \cH) \\
    &= \mathbb{E}_{\boldsymbol{\sigma}'} \left[ \sup_{\|\bv\|_p \leq 2W} \frac{1}{n}\sum_{i=1}^n \sigma'_i (\epsilon\|\bv\|_1 + |\bv^\top \bx_i|)^2 \right] -  d^{1-2/p} \mathbb{E}_{\boldsymbol{\sigma}}\left[ \sup_{\|\bv\|_2 \leq 2W} \frac{1}{n}\sum_{i=1}^n \sigma_i (\bv^\top \bx_i)^2 \right]\\
    & \geq \mathbb{E}_{\boldsymbol{\sigma}'} \left[   \frac{1}{n}\sum_{i=1}^n \sigma'_i (\epsilon\|\bv'^*\|_1 + |{\bv'^*}^\top \bx_i|)^2 \right] - d^{1-2/p}\mathbb{E}_{\boldsymbol{\sigma}}\left[ \frac{1}{n}\sum_{i=1}^n \sigma_i ({\bv^*}^\top \bx_i)^2 \right]\\ 
    & =( 1- d^{1-2/p} ) \frac{4W^2}{n} \mathbb{E}\left\|\sum_{i=1}^n\sigma_i \bx_i \bx_i^\top\right\|_2\\
    &\quad + \frac{4W^2}{n} \mathbb{E}_{\boldsymbol{\sigma}'} \left[ 2 \sum_{i=1}^n \sigma'_i \left( \epsilon  \|\bv_{\text{max}} (\mS(\boldsymbol{\sigma}')^2)\|_1 |\bv^\top_{\text{max}} (\mS(\boldsymbol{\sigma}')^2) \bx_i| +  \epsilon^2  \|\bv_{\text{max}} (\mS(\boldsymbol{\sigma}')^2)\|_1^2\right) \right]\\
    & \geq ( 1- d^{1-2/p} ) \frac{4W^2}{n} \mathbb{E}\left\|\sum_{i=1}^n\sigma_i \bx_i \bx_i^\top\right\|_2,
\end{align*}
where in the last step we also use the same reasoning as in \textbf{Case I}.

\end{proof}


\section{Proof of Neural Network Complexity Bound in the Binary Classification Setting}
\label{sec:fixed_proof_thm_adv_complexity_binary_and_regression_classif_NN_hyp}
In this section, we will present the proof of upper bound of the adversarial Rademacher complexity under two-layer neural network hypothesis. 
We provide the proof for the binary classification setting. 

\begin{proof}[Proof of the classification bound of~\Cref{thm:adv_complexity_binary_and_regression_classif_NN_hyp}]
We recall that $\cB_p (R)$ stands for the $\ell_p$ ball of radius $R$ in vector space $\R^d$. 
To simplify notations, we denote the coordinate-wise ReLU activation function by 
\begin{equation*}
    g: \R^d \rightarrow \R^d, \bx \mapsto \max\{0,\bx\},
\end{equation*}
where $\max$ operator is applied coordinate-wisely.
So, for an input vector $\bx \in \R^d$, the output of a two-layer neural network can be written as $\ba^\top g(\mW\bx) = \sum_{r=1}^m a(r) g(\bw (r)^\top \bx)$.
Using the definition of the $\tilde{f} \circ \cH \Delta \cH$ class in~\eqref{eq:linear_classification_adversarialloss_class_without_phi}, we upper bound the adversarial Rademacher complexity in the binary classification setting by expressing it as its non-adversarial counterpart plus an additional term.
Thus we get
\begin{align*}
    \frakR_{\hat{\cD}} (\tilde{f} \circ \cH \Delta \cH) 
    \overset{\eqref{eq:def_adv_classification}}&{=} \mathbb{E}_\sigma \left[\sup_{\substack{\ba, \ba' \in \cB_1 (A)^2 \\ \mW, \mW' \, : \, \bw (r), \bw' (r) \in \cB_p (W)^2}} \frac{1}{n}\sum_{i=1}^n \sigma_i \min_{\|\bdelta\|_\infty \leq \epsilon} \ba^\top g(\mW (\bx_i+\bdelta)) {\ba'}^\top g(\mW' (\bx_i+\bdelta)) \right] \\
    & = \mathbb{E}_\sigma \left[\sup_{\substack{\ba, \ba' \in \cB_1 (A)^2 \\ \bw (r), \bw' (r) \in \cB_p (W)^2}} \frac{1}{n}\sum_{i=1}^n \sigma_i \min_{\|\bdelta\|_\infty \leq \epsilon}  \left(\sum_{r=1}^m a(r) g(\bw (r)^\top (\bx_i+\bdelta)) \right) \left( \sum_{r=1}^m a(r)' g(\bw' (r)^\top (\bx_i+\bdelta)) \right) \right] 
    \enspace.
\end{align*}
As the function $\bdelta \mapsto \ba^\top g(\mW (\bx_i+\bdelta)) {\ba'}^\top g(\mW' (\bx_i+\bdelta))$ is continuous as a composition of continuous function (linear and ReLU), then it reaches a minimum of the compact $\ell_{\infty}$ ball of radius $\epsilon >0$, also denoted by $\cB_{\infty} (\epsilon)$.
Let $\bdelta_i^*$ an argument of the minima of the latter function, \ie $\bdelta_i^* \in \argmin_{\bdelta \in \cB_{\infty} (\epsilon)} \ba^\top g (\mW (\bx_i + \bdelta)) {\ba'}^\top g(\mW' (\bx_i + \bdelta))$. 
With this notation, we can write
\begin{align*}
    &\frakR_{\hat{\cD}} (\tilde{f} \circ \cH \Delta \cH) = \mathbb{E}_\sigma \! \left[\sup_{\substack{\ba, \ba' \in \cB_1 (A)^2 \\ \bw (r), \bw' (r) \in \cB_p (W)^2, \forall r \in [d]}} \!\! \frac{1}{n}\sum_{i=1}^n \sigma_i \left( \sum_{r=1}^m a(r) g(\bw (r)^\top (\bx_i+\bdelta_i^*)) \right) \left( \sum_{r=1}^m a(r)' g(\bw' (r)^\top (\bx_i+\bdelta_i^*)) \right) \right] \enspace.
\end{align*}
 
Let $\cC_A$ be a $\rho_1$-covering of the $\ell_p$ ball $\cB_1 (A)$ with $\rho > 0$, and $\cC_W$ be a $\rho_2$-covering of the $\ell_p$ ball $\cB_p (W)$ with $\rho_2 > 0$.
Let us define
\begin{align}
    \label{eq:psi_i_nn_classif}
     \psi_i (\ba,\ba',\mW,\mW') &:= \min_{\|\bdelta\|_\infty \leq \epsilon}  \left(\sum_{r=1}^m a(r) g(\bw (r)^\top (\bx_i+\bdelta)) \right) \left( \sum_{r=1}^m a(r)' g(\bw' (r)^\top (\bx_i+\bdelta)) \right) \nonumber \\
     &= \left( \sum_{r=1}^m a(r) g(\bw (r)^\top (\bx_i+\bdelta_i^*)) \right)  \left(\sum_{r=1}^m a(r)' g(\bw' (r)^\top (\bx_i+\bdelta_i^*)) \right) \enspace,
\end{align}
Thus we can rewrite $\frakR_{\hat{\cD}} (\tilde{f} \circ \cH \Delta \cH) $ as
\begin{align*}
   & \frakR_{\hat{\cD}} (\tilde{f} \circ \cH \Delta \cH)  = \EE[\sigma]{ \sup_{\substack{\ba, \ba' \in \cB_1 (A)^2 \\ \mW,\mW' \, : \, \bw (r), \bw' (r) \in \cB_p (W)^2, \forall r \in [d]}} \frac{1}{n}\sum_{i=1}^n \sigma_i \psi_i (\ba,\ba',\mW,\mW') } \\
    &\leq \EE[\sigma]{ \sup_{\substack{\ba, \ba' \in \cB_1 (A)^2 \\ \ba_c,\ba'_c  \in \cC^2_A: \norm{\ba - \ba_c }_1 \leq \rho_1 , \norm{\ba'  - \ba'_c }_1 \leq \rho_1\\
    \\ \mW,\mW' \, : \, \bw (r),\bw' (r) \in \cB_p (W)^2 \\ \mW_c,\mW'_c \, : \, \bw_c(r), \bw_c'(r) \in \cC^2_W \, : \\
    \, \norm{\bw (r) - \bw_c(r)}_p \leq \rho_2 , \norm{\bw' (r) - \bw_c'(r)}_p \leq \rho_2}} \frac{1}{n}\sum_{i=1}^n \sigma_i \left(\psi_i (\ba_c,\ba'_c,\mW_c,\mW'_c) +  \psi_i (\ba,\ba',\mW,\mW') - \psi_i (\ba_c,\ba'_c,\mW_c,\mW'_c) \right) } \enspace,
\end{align*}
where $\bw_c(r)$, respectively $\bw_c'(r)$, is the closest element to $\bw (r)$, resp. $\bw' (r)$, in $\cC_W$, and so is $\ba_c$, respectively $\ba'_c$ the closest element to $\ba$, resp. $\ba'$, within $\cC_A$. 
Using the subadditivity of the supremum, we get
\begin{align}
    &\frakR_{\hat{\cD}} (\tilde{f} \circ \cH \Delta \cH) 
    \leq \EE[\sigma]{ \sup_{\substack{ \ba_c,\ba'_c  \in \cC^2_A
     \\ \mW_c,\mW'_c \, : \, \bw_c(r), \bw_c'(r) \in \cC^2_W}} \frac{1}{n}\sum_{i=1}^n \sigma_i \psi_i (\ba_c,\ba'_c,\mW_c,\mW'_c) } \nonumber \\
    &\qquad + \EE[\sigma]{ \sup_{\substack{\ba, \ba' \in \cB_1 (A)^2 \\
    \ba_c,\ba'_c  \in \cC^2_A: \norm{\ba - \ba_c }_1 \leq \rho_1 , \norm{\ba'  - \ba'_c }_1 \leq \rho_1 \\ 
    \mW, \mW' \, : \, \bw, \bw' \in \cB_p (W)^2 \\
    \mW_c,\mW'_c \, : \, \bw_c(r), \bw_c'(r) \in \cC^2_W \, : \\
    \norm{\bw (r) - \bw_c(r)}_p \leq \rho_2 , \norm{\bw' (r) - \bw_c'(r)}_p \leq \rho_2}} 
    \frac{1}{n}\sum_{i=1}^n \sigma_i \left(\psi_i (\ba,\ba',\mW,\mW' )- \psi_i (\ba_c,\ba'_c,\mW_c,\mW'_c) \right) } \nonumber \\
    &\leq \EE[\sigma]{ \sup_{\substack{ \ba_c,\ba'_c  \in \cC^2_A
     \\ \mW_c,\mW'_c \, : \, \bw_c(r), \bw_c'(r) \in \cC^2_W}} \frac{1}{n}\sum_{i=1}^n \sigma_i \psi_i (\ba_c,\ba'_c,\mW_c,\mW'_c) } \nonumber     \\
    & \quad + \sup_{\substack{\ba, \ba' \in \cB_1 (A)^2 \\
    \ba_c,\ba'_c  \in \cC^2_A: \norm{\ba - \ba_c }_1 \leq \rho_1 , \norm{\ba'  - \ba'_c }_1 \leq \rho_1 \\ 
    \mW, \mW' \, : \, \bw, \bw' \in \cB_p (W)^2 \\
    \mW_c,\mW'_c \, : \, \bw_c(r), \bw_c'(r) \in \cC^2_W \, : \\
    \norm{\bw (r) - \bw_c(r)}_p \leq \rho_2 , \norm{\bw' (r) - \bw_c'(r)}_p \leq \rho_2}} 
    \frac{1}{n}\sum_{i=1}^n |\psi_i (\ba,\ba',\mW,\mW' )- \psi_i (\ba_c,\ba'_c,\mW_c,\mW'_c)| \nonumber \\
    &\leq \underbrace{\EE[\sigma]{ \sup_{\substack{ \ba_c,\ba'_c  \in \cC^2_A
     \\ \mW_c,\mW'_c \, : \, \bw_c(r), \bw_c'(r) \in \cC^2_W}} \frac{1}{n}\sum_{i=1}^n \sigma_i \psi_i (\ba_c,\ba'_c,\mW_c,\mW'_c) } }_{(I)} \nonumber \\
    &\quad + \underbrace{\max_{i \in [n]} \sup_{\substack{\ba, \ba' \in \cB_1 (A)^2 \\
    \ba_c,\ba'_c  \in \cC^2_A: \norm{\ba - \ba_c }_1 \leq \rho_1 , \norm{\ba'  - \ba'_c }_1 \leq \rho_1 \\ 
    \mW, \mW' \, : \, \bw, \bw' \in \cB_p (W)^2 \\
    \mW_c,\mW'_c \, : \, \bw_c(r), \bw_c'(r) \in \cC^2_W \, : \\
    \norm{\bw (r) - \bw_c(r)}_p \leq \rho_2 , \norm{\bw' (r) - \bw_c'(r)}_p \leq \rho_2}} |\psi_i (\ba,\ba',\mW,\mW' )- \psi_i (\ba_c,\ba'_c,\mW_c,\mW'_c)| }_{(II)} \enspace.
    \label{eq:terms_after_covering NN Classification}
\end{align}

\paragraph{Bounding $(I)$:}
We first need to bound the left-hand side term $(I)$. 
We introduce the vector
\begin{equation*}
    \boldsymbol \psi (\ba_c,\ba'_c,\mW_c,\mW'_c) \eqdef [\psi_1 (\ba_c,\ba'_c,\mW_c,\mW'_c), \ldots, \psi_n (\ba_c,\ba'_c,\mW_c,\mW'_c)]^\top \in \R^n \enspace.
\end{equation*} 
By Massart's lemma (Lemma 5.2 of~\cite{massart2000some}), we are able to control the first term $(I)$ in~\eqref{eq:terms_after_covering NN Classification}:
\begin{equation}
    \label{eq:massart_upper_bound_first_term_linreg NN Classification}
     (I) = \EE[\sigma]{ \sup_{\substack{\ba_c,\ba'_c  \in \cC^2_A
     \\ \mW_c,\mW'_c \, : \, \bw_c(r), \bw_c'(r) \in \cC^2_W}} \frac{1}{n} \sum_{i=1}^n \sigma_i \psi_i (\ba_c,\ba'_c,\mW_c,\mW'_c) } \leq \frac{K\sqrt{2\log (|\cC_A|^2 |\cC_W|^{2m})}}{n} \enspace,
\end{equation}
with $K$ given by the largest $\ell_2$-norm of $\boldsymbol \psi$ over the covering $\cC_A$ and $\cC_W$, that is
\begin{align}
    \label{eq:max_radius_k_massart_lemma_linreg NN Classification}
    K^2 &= \max_{\substack{\ba_c,\ba'_c  \in \cC^2_A \\ \mW_c,\mW'_c \, : \, \bw_c(r), \bw_c'(r) \in \cC^2_W}} \norm{\boldsymbol \psi}_2^2 \nonumber \\
    &= \max_{\substack{\ba_c,\ba'_c  \in \cC^2_A \\ \mW_c,\mW'_c \, : \, \bw_c(r), \bw_c'(r) \in \cC^2_W}} \sum_{i=1}^n \psi_i (\ba_c,\ba'_c,\mW_c,\mW'_c)^2 \nonumber \\
    &\leq \max_{\substack{ \ba_c,\ba'_c \in \cB_1(A)^2 \\ \mW_c,\mW'_c \, : \, \bw_c(r), \bw_c'(r) \in \cB_p(W)^2}} \sum_{i=1}^n \psi_i (\ba_c,\ba'_c,\mW_c,\mW'_c)^2 \enspace.
\end{align} 
Now we examine the upper bound of $K$. Let $q$ be such that $1 = 1/p + 1/q$.
We start by upper bounding $\psi_i$ as follows
\begin{align}
    |\psi_i (\ba_c,\ba'_c,\mW_c,\mW'_c)| \overset{\eqref{eq:psi_i_nn_classif}}&{\leq} \left| \sum_{r=1}^m a_c(r) g(\bw_c (r)^\top (\bx_i+\bdelta_i^*)) \right| \cdot \left| \sum_{r=1}^m a'_c(r) g(\bw_c' (r)^\top (\bx_i+\bdelta_i^*)) \right| \nonumber \\
    &\leq \left( \sum_{r=1}^m |a_c(r)| |g(\bw_c (r)^\top (\bx_i+\bdelta_i^*))| \right) \left( \sum_{r=1}^m |a'_c(r)| |g(\bw_c' (r)^\top (\bx_i+\bdelta_i^*))| \right) \nonumber \\
    &\leq \left( \norm{\bw_c (r)}_p \norm{  \bx_i+\bdelta_i^*}_q \sum_{r=1}^m |a_c(r)| \right) \left( \norm{\bw_c' (r)}_p \norm{ \bx_i+\bdelta_i^*}_q \sum_{r=1}^m |a'_c(r)| \right) \nonumber \\
    \overset{\substack{\ba_c,\ba'_c \in \cB_1(A)^2 \\ \bw_c(r), \bw_c'(r) \in \cB_p(W)^2}}&{\leq} A^2 W^2 \norm{  \bx_i+\bdelta_i^*  }_q^2  \nonumber\\ 
    \overset{\norm{\bdelta}_q \leq  {d}^{1/q} \norm{\bdelta}_{\infty}}&{\leq}2 A^2 W^2 ( \norm{  \bx_i  }_q^2+{d}^{2/q} \epsilon^2)  \enspace, 
    \label{eq:upper_bound_psi_i NN classification} 
\end{align}
where we applied Cauchy-Schwarz inequality, the definition of operator norm and then the subadditivity of the maximum. 
We are able to upper bound $K^2$, by using~\eqref{eq:max_radius_k_massart_lemma_linreg NN Classification} and~\eqref{eq:upper_bound_psi_i NN classification} which leads to 
\begin{align*}
    K^2 \overset{\eqref{eq:max_radius_k_massart_lemma_linreg NN Classification}}&{\leq}   \max_{\substack{ \ba_c,\ba'_c  \in \cB_1(A)^2
     \\\bw_c(r), \bw_c'(r) \in \cB_p(W)^2}} \sum_{i=1}^n \psi_i (\ba_c,\ba'_c,\mW_c,\mW'_c)^2 \\
    \overset{\eqref{eq:upper_bound_psi_i NN classification}}&{\leq}    4 A^4 W^4 \sum_{i=1}^n\pare{  \norm{  \bx_i  }_q^2+{d}^{2/q} \epsilon^2 }^2.  
\end{align*} 
Then, by taking the square root in the above we get
\begin{align*}
    K \leq 2 A^2 W^2 \sqrt{\sum_{i=1}^n\pare{  \norm{  \bx_i  }_q^2+{d}^{2/q} \epsilon^2 }^2}.
\end{align*}
Jointly with the application of~\Cref{lem:cardinality_covering_of_ball}, we can conclude that 
\begin{equation}
    \label{eq:massart_upper_bound_first_term_NN_classify_follow_up}
     (I) \overset{\eqref{eq:massart_upper_bound_first_term_linreg NN Classification}}{\leq} \frac{2 A^2 W^2 \sqrt{\sum_{i=1}^n\pare{  \norm{  \bx_i  }_q^2+{d}^{2/q} \epsilon^2 }^2}\sqrt{4m\log (3A/\rho_1) + 4md\log (3W/\rho_2)   }}{n} 
    \enspace.
\end{equation}

\paragraph{Bounding $(II)$.}
Now we turn to bounding the second term of~\eqref{eq:terms_after_covering NN Classification}
\begin{equation*}
    (II) \eqdef \max_{i \in [n]} \sup_{\substack{\ba, \ba' \in \cB_1 (A)^2 \\
    \ba_c,\ba'_c  \in \cC^2_A: \norm{\ba - \ba_c }_1 \leq \rho_1 , \norm{\ba'  - \ba'_c }_1 \leq \rho_1 \\ 
    \mW, \mW' \, : \, \bw, \bw' \in \cB_p (W)^2 \\
    \mW_c,\mW'_c \, : \, \bw_c(r), \bw_c'(r) \in \cC^2_W \, : \\
    \norm{\bw (r) - \bw_c(r)}_p \leq \rho_2 , \norm{\bw' (r) - \bw_c'(r)}_p \leq \rho_2}} 
    |\psi_i (\ba,\ba',\mW,\mW' )- \psi_i (\ba_c,\ba'_c,\mW_c,\mW'_c)| \enspace.
\end{equation*}
Recalling that $\ba_c$, resp. $\bw_c(r)$, is $\rho_1$, reps. $\rho_2$, close to $\ba$, resp. $\bw (r)$, in the covering $\cC_A$, resp. $\cC_W$.
And so are  $\ba'_c, \bw'_c(r)$.
Let us define 
\begin{equation}
    \label{eq:delta_i_star}
    \bdelta_i^* \eqdef \argmin_{\norm{\bdelta}_\infty \leq \epsilon} \left( \sum_{r=1}^m a (r) g(\bw (r)^\top (\bx_i+\bdelta )) \right) \left( \sum_{r=1}^m a'(r) g(\bw' (r)^\top (\bx_i+\bdelta )) \right) \enspace,
\end{equation} 
and
\begin{equation}
    \label{eq:delta_ci_star}
    \bdelta_{c,i}^* \eqdef \argmin_{\norm{\bdelta}_\infty \leq \epsilon} \left( \sum_{r=1}^m a_c (r) g(\bw (r)^\top (\bx_i+\bdelta )) \right) \left( \sum_{r=1}^m a_c'(r) g(\bw' (r)^\top (\bx_i+\bdelta )) \right) \enspace.
\end{equation} 
Then, we can make the difference explicit and upper bound it
\begin{align}
    &\psi_i (\ba,\ba',\mW,\mW' ) -\psi_i (\ba_c,\ba'_c,\mW_c,\mW_c' )  \\
    \overset{\eqref{eq:psi_i_nn_classif}}&{=} \left( \sum_{r=1}^m a(r) g(\bw (r)^\top (\bx_i+\bdelta_i^*)) \right) \left( \sum_{r=1}^m a'(r) g(\bw' (r)^\top (\bx_i+\bdelta_i^*)) \right) \nonumber \\
    &\quad - \left( \sum_{r=1}^m a_c(r) g(\bw_c(r)^\top (\bx_i+\bdelta_{c,i}^*)) \right) \left( \sum_{r=1}^m a'_c(r) g(\bw'_c(r)^\top (\bx_i+\bdelta_{c,i}^*)) \right) \nonumber \\
    \overset{\eqref{eq:delta_i_star}}&{\leq} \left( \sum_{r=1}^m a(r) g(\bw (r)^\top (\bx_i+\bdelta_{c,i}^*)) \right) \left( \sum_{r=1}^m a'(r) g(\bw' (r)^\top (\bx_i+\bdelta_{c,i}^*)) \right) \nonumber \\
    &\quad - \left( \sum_{r=1}^m a_c(r) g(\bw_c(r)^\top (\bx_i+\bdelta_{c,i}^*)) \right) \left( \sum_{r=1}^m a'_c(r) g(\bw'_c(r)^\top (\bx_i+\bdelta_{c,i}^*)) \right) \nonumber \\
    &= \left( \sum_{r=1}^m a(r) g(\bw (r)^\top (\bx_i+\bdelta_{c,i}^*)) \right) \left( \sum_{r=1}^m a'(r) g(\bw' (r)^\top (\bx_i+\bdelta_{c,i}^*)) \right) \nonumber \\
    &\quad - \left( \sum_{r=1}^m a(r) g(\bw (r)^\top (\bx_i+\bdelta_{c,i}^*)) \right) \left( \sum_{r=1}^m a'_c(r) g(\bw'_c(r)^\top (\bx_i+\bdelta_{c,i}^*)) \right) \nonumber \\
    &\quad + \left( \sum_{r=1}^m a(r) g(\bw (r)^\top (\bx_i+\bdelta_{c,i}^*)) \right) \left( \sum_{r=1}^m a'_c(r) g(\bw'_c(r)^\top (\bx_i+\bdelta_{c,i}^*)) \right) \nonumber \\
    &\quad - \left( \sum_{r=1}^m a_c(r) g(\bw_c(r)^\top (\bx_i+\bdelta_{c,i}^*)) \right) \left( \sum_{r=1}^m a'_c(r) g(\bw'_c(r)^\top (\bx_i+\bdelta_{c,i}^*)) \right) \nonumber \\
    &= \left( \sum_{r=1}^m a(r) g(\bw (r)^\top (\bx_i+\bdelta_{c,i}^*)) \right) \left[ \sum_{r=1}^m a'(r) g(\bw' (r)^\top (\bx_i+\bdelta_{c,i}^*)) - a'_c(r) g(\bw'_c(r)^\top (\bx_i+\bdelta_{c,i}^*)) \right] \nonumber \\
    &\quad + \left[ \sum_{r=1}^m a(r) g(\bw (r)^\top (\bx_i+\bdelta_{c,i}^*)) - a_c(r) g(\bw_c(r)^\top (\bx_i+\bdelta_{c,i}^*)) \right] \left( \sum_{r=1}^m a'_c(r) g(\bw'_c(r)^\top (\bx_i+\bdelta_{c,i}^*)) \right) . \label{eq: NN classification psi diff bound 0}
\end{align}
We examine the first term in the above:
\begin{align}
    &\left( \sum_{r=1}^m a(r) g(\bw (r)^\top (\bx_i+\bdelta_{c,i}^*)) \right) \left[ \sum_{r=1}^m a'(r) g(\bw' (r)^\top (\bx_i+\bdelta_{c,i}^*)) - a'_c(r) g(\bw'_c(r)^\top (\bx_i+\bdelta_{c,i}^*)) \right] \nonumber\\
    &\quad \leq \sum_{r=1}^m |a(r)| |g(\bw (r)^\top (\bx_i+\bdelta_{c,i}^*)) |\sum_{r=1}^m \left|a'(r) g(\bw' (r)^\top (\bx_i+\bdelta_{c,i}^*))-   a'_c(r) g(\bw'_c(r)^\top (\bx_i+\bdelta_{c,i}^*)) \right|  \nonumber\\
    &\quad \leq A W \norm{\bx_i+\bdelta_{c,i}^*}_{q}  \underbrace{ \sum_{r=1}^m \left|a'(r) g(\bw' (r)^\top (\bx_i+\bdelta_{c,i}^*))-   a'_c(r) g(\bw'_c(r)^\top (\bx_i+\bdelta_{c,i}^*)) \right|}_{\Sigma}  \label{eq: NN classification psi diff bound},
\end{align}
where in the first inequality we apply the triangle inequality, and then the Cauchy-Schwartz inequality with the fact that $\norm{\ba}_{1} \leq A$ and $\norm{\bw (r)}_p \leq W$. 
Now, we bound $\Sigma$ as:
\begin{align*}
    &\sum_{r=1}^m \left|a'(r) g(\bw' (r)^\top (\bx_i+\bdelta_{c,i}^*))-   a'_c(r) g(\bw'_c(r)^\top (\bx_i+\bdelta_{c,i}^*)) \right| \\
    =&\sum_{r=1}^m | a'(r) g(\bw' (r)^\top (\bx_i+\bdelta_{c,i}^*))-a'_c(r) g(\bw' (r)^\top (\bx_i+\bdelta_{c,i}^*)) \\
    &\qquad +a'_c(r) g(\bw' (r)^\top (\bx_i+\bdelta_{c,i}^*)) -  a'_c(r) g(\bw'_c(r)^\top (\bx_i+\bdelta_{c,i}^*)) | \\ 
    \leq &\sum_{r=1}^m \left|a'(r)-a'_c(r) \right| \left| g(\bw' (r)^\top (\bx_i+\bdelta_{c,i}^*)) \right| + \sum_{r=1}^m \left|a'_c(r)\right| \left|g(\bw' (r)^\top (\bx_i+\bdelta_{c,i}^*))- g(\bw'_c(r)^\top (\bx_i+\bdelta_{c,i}^*))\right| \enspace,
\end{align*}
where we use the triangle inequality.
Now, by using the 1-Lipschitzness of $g$ and applying Cauchy-Schwarz inequality, followed by the fact that $\bw \in \cB_p (W)$ and that $\norm{\ba' - \ba'_c}_1 = \sum_{r=1}^m \left|a'(r)-a'_c(r) \right| \leq \rho_1$ we can bound the left-hand term in the above.
For the right-hand side term we use also the 1-Lipschitzness of $g$ and Cauchy-Schwarz inequality, then leverage the fact that $\norm{\bw' (r) -\bw'_c(r) }_p \leq \rho_2$ and also that $\ba \in \cB_1 (A)$ and we have:
\begin{align*}
     \Sigma &\leq \rho_1 W \norm{\bx_i+\bdelta_{c,i}^*}_q + \rho_2 A \norm{ \bx_i+\bdelta_{c,i}^*}_{q} \\
     &= (\rho_1 W + \rho_2 A) \norm{ \bx_i+\bdelta_{c,i}^*}_{q} \enspace.
\end{align*}
 Plugging the bound for $\Sigma$ back to (\ref{eq: NN classification psi diff bound}), and performing the same steps for the second difference term in (\ref{eq: NN classification psi diff bound 0}) yields:
\begin{align*}
    \psi_i (\ba,\ba',\mW,\mW' ) -\psi_i (\ba_c,\ba'_c,\mW_c,\mW_c' ) &\leq 2A W \pare{\rho_1 W +A \rho_2} \norm{\bx_i+\bdelta_{c,i}^*}_{q}^2 \\
    &\leq 4A W \pare{\rho_1 W + A \rho_2} (\norm{\bx_i}_q^2 + d^{2/q}\epsilon^2 ) \enspace,
\end{align*}
where we again use the fact $\norm{\bdelta}_q \leq d^{1/q}\norm{\bdelta}_\infty $.
By the same means, we can bound the reverse difference:
\begin{align*}
      \psi_i (\ba,\ba',\mW,\mW' ) -\psi_i (\ba_c,\ba'_c,\mW_c,\mW_c' ) \leq 4A W \pare{\rho_1 W +A \rho_2} (\norm{\bx_i}_q^2 + d^{2/q} \epsilon^2) \enspace.
\end{align*}
Thus, $(II)$ is upper bounded by
\begin{equation}
    \label{eq:massart_ROUGH_upper_bound_second_term_NN_classify}
    (II) \leq 4A W \pare{\rho_1 W + A \rho_2} (\norm{\bx_i}_q^2 + d^{2/q} \epsilon^2) \enspace.
\end{equation}
Let us choose $\rho_1 = A / n$ and $\rho_2 = W / n$. 
Then the upper bound of $(I)$ becomes
\begin{equation}
    \label{eq:massart_upper_bound_first_term_NN_classify_follow_up_simplified_I}
    (I) \overset{\eqref{eq:massart_upper_bound_first_term_NN_classify_follow_up}}{\leq} \frac{2}{n} A^2 W^2 \sqrt{\sum_{i=1}^n \pare{\norm{\bx_i}_q^2 + {d}^{2/q} \epsilon^2}^2} \sqrt{(1+d) 4m\log(3n)} \enspace,
\end{equation}
and the one of $(II)$ becomes
\begin{equation}
    \label{eq:massart_ROUGH_upper_bound_second_term_NN_classify_simplified_II}
    (II) \overset{\eqref{eq:massart_ROUGH_upper_bound_second_term_NN_classify}}{\leq} \frac{8}{n} A^2 W^2 \left( \max_{i\in[n]} \norm{\bx_i}_q^2 + d^{2/q} \epsilon^2 \right) \enspace.
\end{equation}
Finally, we have then proved that
\begin{align*}
    \frakR_{\hat{\cD}} (\tilde{f} \circ \cH \Delta \cH) \overset{\eqref{eq:terms_after_covering NN Classification}}&{\leq} (I) + (II) \\
    \overset{\eqref{eq:massart_upper_bound_first_term_NN_classify_follow_up_simplified_I} \, \& \, \eqref{eq:massart_ROUGH_upper_bound_second_term_NN_classify_simplified_II}}&{\leq} 
    \frac{2}{n} A^2 W^2 \sqrt{\sum_{i=1}^n \pare{\norm{\bx_i}_q^2 + {d}^{2/q} \epsilon^2}^2} \sqrt{(1+d) 4m\log(3n)}
    + \frac{8}{n} A^2 W^2 \left( \max_{i\in[n]} \norm{\bx_i}_q^2 + d^{2/q} \epsilon^2 \right) \\
    &= \frac{2}{n} A^2 W^2 \left[\sqrt{\sum_{i=1}^n \pare{\norm{\bx_i}_q^2 + {d}^{2/q} \epsilon^2}^2} \sqrt{(1+d) 4m\log(3n)} + 4 \left( \max_{i\in[n]} \norm{\bx_i}_q^2 + d^{2/q} \epsilon^2 \right) \right] \enspace.
\end{align*}
\end{proof}

\section{Proof of \Cref{lem:max_standard_risk}}\label{subsec:pf_lem_max_standard_risk}
In this section we present the proof of~\Cref{lem:max_standard_risk}. 
\begin{proof} 
The proof idea is to show that, by perturbing the standard risk on $\cT$ within the adversary set, the perturbed risk can approximate the standard risk on $\cT'$ with some error. 
First, let us define a perturbed risk for any perturbation 
\begin{align}
    \cR_{\cT}(h_\mathbf{w},y, {\bdelta}) &\eqdef \mathbb{E}_{\bx \sim \cT} \left[\ell (h_{\bw} (\bx + \bdelta), y (\bx) \right] \label{eq:perturbated_risk} \\
    &= \sum_{\bx \in \cX} \mathbf{p}(\bx)  \frac{1}{2}|\sign(\bw (\bx+\bdelta(\bx)) -  y(\bx)| \nonumber \\ 
    & =\mathbf{p}^\top \boldsymbol{\tilde{\ell}} \left( \{\bdelta_i\}_{i=1}^{|\cX|} \right) \enspace. \label{eq:perturbated_risk_over_T_scalar_product}
\end{align}
We then recall the definition of the standard risk on $\cT'$
\begin{align}
    \cR_{\cT'}(h_\mathbf{w},y)   &= \sum_{\bx \in \cX} \mathbf{p}'(\bx) \frac{1}{2}|\sign(\bw^\top \bx) -  y(\bx)| \nonumber \\ 
    & =\mathbf{p}'^\top \boldsymbol{{\ell}} \enspace. \label{eq:standard_risk_over_Tprime_scalar_product}
\end{align}
We recall the definition of adversarially robust risk over domain $\cT$ for the labeling function $y (\cdot)$ 
\begin{equation}
    \label{eq:adversarially_robust_risk_over_T}
    \widetilde \cR^{label}_{\cT} (h_{\bw}, y) = \mathbb{E}_{\bx \sim \cT} \left[\max_{\norm{\bdelta}_{\infty} \leq \epsilon} \ell (h_{\bw} (\bw + \bdelta), y (\bx)) \right] \enspace.
\end{equation}
So, for any $\bdelta$ such that $\|\bdelta \|_\infty \leq \epsilon$, we get that
\begin{align*}
     \cR_{\cT'}(h_\mathbf{w},y)-\widetilde \cR^{label}_{\cT}(h_\mathbf{w},y) \overset{\eqref{eq:perturbated_risk}-\eqref{eq:adversarially_robust_risk_over_T}}{\leq} \cR_{\cT'}(h_\mathbf{w},y)-\cR_{\cT}(h_\mathbf{w},y,\bdelta) \overset{\eqref{eq:perturbated_risk_over_T_scalar_product}-\eqref{eq:standard_risk_over_Tprime_scalar_product}}{=} \mathbf{p}'^\top \boldsymbol{{\ell}} - \mathbf{p}^\top \boldsymbol{\tilde{\ell}} (\{\bdelta_i\}_{i=1}^{|\cX|}) \enspace.
\end{align*}
Since the above inequality holds for any $\bdelta$ such that $\|\bdelta \|_\infty \leq \epsilon$, we must have
\begin{align*}
     \cR_{\cT'}(h_\mathbf{w},y)-\widetilde \cR^{label}_{\cT}(h_\mathbf{w},y) \leq  \min_{\|\bdelta_i \|_\infty \leq \epsilon}|\mathbf{p}'^\top \boldsymbol{{\ell}} - \mathbf{p}^\top \boldsymbol{\tilde{\ell}} (\{\bdelta_i\}_{i=1}^{|\cX|})| \enspace.
\end{align*}
Now, let's examine the coordinates in $\Lambda$. 
For $i$-th coordinate, if it is in $\Lambda$, we know that
\begin{align*}
    -\epsilon \|\bw\|_1 \leq \bw^\top \bx_i \leq \epsilon \|\bw\|_1 \enspace,
\end{align*}
which implies that, there is a $\bdelta$ that can change the sign of $\sign(\bw (\bx+\bdelta))$, and hence change the value of $\tilde{\ell}_i$. That is, if $\bw^\top \bx y(\bx) \geq 0$, there is a $\bdelta^* = \arg\min_{\|\bdelta\|_\infty \leq \epsilon} \bw^\top \bdelta  y(\bx)$, such that
\begin{align*}
    \bw^\top (\bx+\bdelta^*) y(\bx) = \bw^\top \bx y(\bx)  -\epsilon \|\bw\|_1 \leq 0 \enspace.
\end{align*}
Similarly, if $\bw^\top \bx y(\bx) \leq 0$, there is a $\bdelta^* = \arg\max_{\|\bdelta\|_\infty \leq \epsilon} \bw^\top \bdelta  y(\bx)$, such that:
\begin{align*}
    \bw^\top \bx y(\bx) +  \bw^\top \bdelta^* y(\bx) = \bw^\top \bx y(\bx)  + \epsilon \|\bw\|_1 \geq 0 \enspace.
\end{align*}
Finally, we get
\begin{align*}
    &\min_{\|\bdelta_i \|_\infty \leq \epsilon}|\mathbf{p}'^\top \boldsymbol{{\ell}} - \mathbf{p}^\top \boldsymbol{\tilde{\ell}} (\{\bdelta_i\}_{i=1}^{|\cX|})| \\
    &= \left\lbrace
    \begin{aligned}
        \min_{\tilde{\boldsymbol{\ell}} \in \{0,1\}^N} \quad & \left| \mathbf{p}^\top \tilde{\boldsymbol{\ell}} - \mathbf{p}'^\top  \boldsymbol{\ell} \right| \\
        \textrm{s.t.} \quad & \tilde{{\ell}}_i = {\ell}_i,  \  \forall \ i \in [N] \setminus \Lambda \\
    \end{aligned} \right. \\
    &= V^* (\mathbf{p}',\mathbf{p},\boldsymbol{\ell},\Lambda) \enspace.
\end{align*}
\end{proof}
\section{Details on Experiments}
\label{sec:app:exp}

In Table~\ref{tbl:digits}, we present the details of the convolutional network.
For the convolutional layer (Conv2D or Conv1D), the first argument is the number channel. For a fully connected layer (FC), we list the number of hidden units as the first argument.

\begin{table}[ht]
    \centering
    \caption{Convolutional network architecture.}
    \label{tbl:digits}
    \small
    \begin{tabular}{c|c}
    \toprule
        Layer & Details \\
    \midrule
        \multicolumn{2}{c}{\textbf{feature extractor}} \\
        \midrule
        conv1 & Conv2D(64, kernel size=5, stride=1, padding=2) \\
        bn1   & BN2D, RELU, MaxPool2D(kernel size=2, stride=2) \\
        conv2 & Conv2D(64, kernel size=5, stride=1, padding=2) \\
        bn2   & BN2D, ReLU, MaxPool2D(kernel size=2, stride=2) \\
        conv3 & Conv2D(128, kernel size=5, stride=1, padding=2) \\
        bn3   & BN2D, ReLU \\
        \midrule
        \multicolumn{2}{c}{\textbf{classifier}} \\
        \midrule
        fc1   & FC(2048) \\
        bn4   & BN1D, ReLU \\
        fc2   & FC(512) \\
        bn5   & BN1D, ReLU \\
        fc3   & FC(10) \\
    \bottomrule
    \end{tabular}
\end{table}

\end{document}